%% file: neurips_2026.tex
\documentclass{article}

\usepackage[preprint]{neurips_2026}

\usepackage[most]{tcolorbox}
\newtcbox{\hessianbox}{on line,
  boxrule=0.35pt,
  colback=blue!4,
  colframe=blue!35!black,
  arc=1pt,
  left=2pt,
  right=2pt,
  top=1pt,
  bottom=1pt,
  boxsep=1pt}
  \newtcbox{\gradbox}{on line,
  boxrule=0.35pt,
  colback=red!3,
  colframe=red!40!black,
  arc=1pt,
  left=2pt,
  right=2pt,
  top=1pt,
  bottom=1pt,
  boxsep=1pt}
  \newtcbox{\twopointbox}{on line,
  boxrule=0.35pt,
  colback=violet!4,
  colframe=violet!35!black,
  arc=1pt,
  left=2pt,
  right=2pt,
  top=1pt,
  bottom=1pt,
  boxsep=1pt}
\definecolor{myblue}{RGB}{39,89,156}
\definecolor{myred}{RGB}{170,70,67}
\usepackage{pifont}
\usepackage{empheq}
\usepackage[ruled,vlined]{algorithm2e}
\usepackage{tikz,lipsum}
\usepackage{amsmath}
\usepackage{amsbsy}
\usepackage{amsthm}
\usepackage{amssymb}
\usepackage{upgreek}
\usepackage{wrapfig}
\usepackage[utf8]{inputenc} 
\usepackage[T1]{fontenc}    
\usepackage{hyperref}       
\usepackage{url}    
\usepackage{enumitem}        
\usepackage{booktabs}       
\usepackage{amsfonts}       
\usepackage{nicefrac}       
\usepackage{microtype}      
\usepackage{xcolor}   
\usepackage{subcaption}
\def \one {\mathbf{1}}
\newtheorem{definition}{Definition}
\newtheorem{theorem}{Theorem}
\newtheorem{lemma}{Lemma}
\newtheorem{remark}{Remark}

\newcommand{\bs}[1]{\boldsymbol{#1}}

\makeatletter
\newcommand{\thickhline}{%
    \noalign {\ifnum 0=`}\fi \hrule height 1pt
    \futurelet \reserved@a \@xhline
}
\makeatletter
\usepackage{graphicx}
\usepackage{multirow}
\newcounter{projroundfn}
\usepackage{colortbl}
\usepackage[table]{xcolor}
\usepackage{caption}

\definecolor{NavyBlue}{HTML}{000080}
\hypersetup{
    colorlinks=true,
    linkcolor=red,
    citecolor=NavyBlue,
    filecolor=magenta,      
    urlcolor=magenta,
}
\title{Probabilistic  Multi-Source Data Selection via Learnable Categorical Priors: Variance Reduction and Guaranteed Derandomization}

\author{%
  Shizhan Cai \quad
  Qixin Zhang \quad
  Siyuan Ma \\
  \textbf{Siyuan Liang \quad Yongcheng Jing \quad Dacheng Tao} \\[4pt]
  College of Computing and Data Science \\
  Nanyang Technological University \\[2pt]
  \texttt{shizhan001@e.ntu.edu.sg} \quad
  \texttt{qixin.zhang2026@gmail.com} \\
  \texttt{masi0004@e.ntu.edu.sg} \\
  \texttt{\{siyuan.liang,yongcheng.jing,dacheng.tao\}@ntu.edu.sg}
}

\begin{document}
\maketitle
\begin{abstract}
Bilevel data selection is a fundamental task with numerous applications in machine learning and statistics. 
A common strategy for tackling this problem is to relax the original bilevel combinatorial  optimization into a continuous one. However, existing methods, including 0-1 relaxation and probabilistic relaxation, still suffer from several limitations, such as unstable gradient estimation and infeasible rounding. 
To address these issues, this paper proposes a new probabilistic relaxation for the multi-source bilevel data selection problem. 
Our key idea is to introduce multiple learnable categorical priors to parameterize a probability distribution supported only on feasible data subsets, thus naturally yielding a lossless rounding scheme. 
We further show that our proposed relaxation admits an effective two-point gradient estimator, which can simultaneously avoid the inverse-Hessian computation required by existing 0-1 relaxation and the unbounded score term appearing in prior probabilistic relaxation. 
Furthermore, we develop a reducible representation of this estimator, which substantially reduces the number of lower-level retrainings required for gradient computation. 
Built upon this reducible two-point estimator, we then design a novel stochastic gradient algorithm, termed \texttt{ReSGD}, for multi-source bilevel data selection problem, and establish a rigorous approximation guarantee for the subset returned by \texttt{ReSGD} under weak submodularity. 
Finally, extensive experiments  are conducted to validate the effectiveness of our proposed \texttt{ReSGD}.
\end{abstract}
\input{introduction}

\input{Data_Selection}

\input{New_Relaxation}

\input{ReSGD}

\input{experiment}

\input{conclusion}

\clearpage
\bibliographystyle{plain}
\bibliography{references}

\clearpage
\appendix
\onecolumn

\section*{Appendix}

\subsection*{Contents}
\begin{description}
    
    \item[\ref{appendix:related_work}] 
    \textbf{Related Work} 
    \dotfill \pageref{appendix:related_work}
    \item[\ref{appendix:thm:bayesian_v_weak_submodularity}] 
    \textbf{Proof of Theorem~\ref{thm:bayesian_v_weak_submodularity} and the Equivalence of the V-Optimal Criterion} 
    \dotfill \pageref{appendix:thm:bayesian_v_weak_submodularity}

    \begin{description}
        \item[\ref{appendix:thm:bayesian_v_weak_submodularity}.1] 
        Proof of Theorem~\ref{thm:bayesian_v_weak_submodularity}
        \dotfill \pageref{appendix:thm:bayesian_v_weak_submodularity}

        \item[\ref{appendix:thm:bayesian_v_weak_submodularity}.2] 
        Equivalence of the V-Optimal Criterion
        \dotfill \pageref{appendix:thm:bayesian_v_weak_submodularity}
    \end{description}

    \item[\ref{appendix:ge}] 
    \textbf{Gradient Estimators for $\Phi(\bs{\pi})$} 
    \dotfill \pageref{appendix:pge}

    \begin{description}
        \item[\ref{appendix:pge}] 
        Policy Gradient Estimator 
        \dotfill \pageref{appendix:pge}

        \item[\ref{appendix:two-point}] 
        Two-point Gradient Estimator 
        \dotfill \pageref{appendix:two-point}

        \item[\ref{appendix:Reducible2}] 
        Reducible Two-point Representation 
        \dotfill \pageref{appendix:Reducible2}
    \end{description}

    \item[\ref{appendix:approximation}] 
    \textbf{Proof of Theorem~\ref{thm_SGD}} 
    \dotfill \pageref{appendix:approximation}

    \item[\ref{appendix:projection}] 
    \textbf{Projection Calculation} 
    \dotfill \pageref{appendix:projection}

    \item[\ref{appendix:RR}] 
    \textbf{Effective Reservoir Rounding} 
    \dotfill \pageref{appendix:RR}

    \item[\ref{appendix:ds_cifar100}] 
    \textbf{Data Summarization on Cifar-100 Dataset} 
    \dotfill \pageref{appendix:ds_cifar100}    

    \item[\ref{appendix:cs_ex_cl}] 
    \textbf{Coreset Selection for Existing Continual Learning Methods} 
    \dotfill \pageref{appendix:cs_ex_cl}

    \item [\ref{appendix:adversarial}] 
    \textbf{Experiment with Adversarial Initialization} 
    \dotfill \pageref{appendix:adversarial}

    \item[\ref{appendix:data_coruption}] 
    \textbf{Data Corruption} 
    \dotfill \pageref{appendix:data_coruption}

    \item[\ref{appendix:experimental details}] 
    \textbf{Experimental Details} 
    \dotfill \pageref{appendix:experimental details}

    \begin{description}
        \item[\ref{appendix:ds_details}] 
        Data Summarization Experiment Details
        \dotfill \pageref{appendix:ds_details}
        \item[\ref{appendix:cl_details}] 
        Continual Learning Experiment Details
        \dotfill \pageref{appendix:cl_details}
    \end{description}

    \item[\ref{appendix:limitation}] 
    \textbf{Limitations and Broader Impact} 
    \dotfill \pageref{appendix:limitation}   

\end{description}
\hspace{0pt}
\clearpage
\section{Related Work}\label{appendix:related_work}
\paragraph{Bilevel Continuous Optimization.}
Bilevel continuous optimization~(BCO) provides a natural framework for problems whose objective depends on the solution of another optimization problem, and has been widely used in hyperparameter optimization~\citep{franceschi2017forward,franceschi2018bilevel,shaban2019truncated}, meta-learning~\citep{finn2017model,rajeswaran2019meta}, and neural architecture search~\citep{elsken2019neural,liu2018progressive,he2020milenas}. Since its early formulation in mathematical programming~\citep{bracken1973mathematical}, BCO has attracted extensive attention in both optimization and machine learning~\citep{zhang2024introduction}. A major line of work focuses on hypergradient-based methods, which can be broadly categorized into approximate implicit differentiation~(AID) methods~\citep{ghadimi2018approximation,gould2016differentiating,grazzi2020iteration,ji2021bilevel,pedregosa2016hyperparameter} and iterative differentiation~(ITD) methods~\citep{franceschi2017forward,ji2021bilevel,maclaurin2015gradient}. AID methods compute the hypergradient by differentiating the optimality conditions of the lower-level problem, typically involving Hessian inverse-vector products or their approximations, while ITD methods differentiate through the trajectory of the lower-level optimization procedure. Another line of work adopts penalty-based reformulations, which transform the bilevel problem into a constrained or penalized single-level problem by replacing the lower-level solution requirement with its optimality conditions~\citep{chen2025near,lu2024first,sow2022convergence,liu2022bome}. 

\paragraph{Continual Learning.} Continual learning aims to train a model on a sequence of tasks while preserving the knowledge acquired from previously seen data. A central challenge in this setting is \emph{catastrophic forgetting}~\citep{french1999catastrophic,mccloskey1989catastrophic}, where the model quickly loses its performance on old tasks after being updated on new ones. Existing methods address this issue from different perspectives, including regularization-based methods~\citep{chaudhry2018riemannian,kirkpatrick2017overcoming,li2017learning,ritter2018online,yan2022learning,zenke2017continual}, parameter isolation based
methods~\citep{ostapenko2021continual,wang2022foster,yan2021dynamically}, and replay-based methods~\citep{borsos2020coresets,borsos2024data,buzzega2020dark,lopez2017gradient,tong2025coreset,yoon2022online,zhou2022probabilistic}. Among them, replay-based continual learning combats forgetting by maintaining a small memory buffer of representative samples from previous tasks and replaying them during future training. Since the memory budget is typically much smaller than the accumulated training data, the quality of the retained samples becomes critical to the effectiveness of rehearsal. This has motivated a series of coreset selection methods for continual learning~\citep{borsos2020coresets,borsos2024data,zhou2022probabilistic,hao2024bilevel,tong2025coreset}. For example, \citep{borsos2020coresets} extends coreset selection to deep neural networks by formulating it as a bilevel optimization problem. After that, \citep{zhou2022probabilistic} further introduces probabilistic masks to perform global subset selection. Next, \citep{hao2024bilevel} revisits bilevel coreset selection in continual learning and proposes a new 0-1 relaxation formulation. More recently, \citep{tong2025coreset} design a greedy method according to the reducible loss proposed by \citep{mindermann2022prioritized}.

\paragraph{Weakly Submodular Maximization.}
The notion of $\gamma$-weak submodularity was originally introduced by~\citep{das2018approximate}, who showed that the standard greedy algorithm achieves a $(1-e^{-\gamma})$-approximation for maximizing a monotone $\gamma$-weakly submodular function under a \emph{cardinality} constraint. Subsequently, \citep{gatmiry2019nonsubmodular} studied weakly submodular maximization beyond simple cardinality constraints and proved that the standard greedy algorithm achieves a $\frac{0.4\gamma^{2}}{\sqrt{r\gamma}+1}$-approximation for maximizing a monotone $\gamma$-weakly submodular function under a matroid constraint, where $r$ denotes the rank of the matroid. Unfortunately, this approximation guarantee explicitly depends on $r$ and therefore deteriorates as the matroid rank increases. Note that the partition constraints considered in this paper are a special case of matroid constraints. Later, \citep{thiery2022two} introduced the upper submodularity ratio $\beta$ and developed a distorted local search algorithm for monotone $(\gamma,\beta)$-weakly submodular maximization. This distorted local search typically requires repeated swap operations and will incur substantial computational overhead, making it difficult to apply to large-scale selection problems.

\input{appendix_weak-sub}
\input{appendix_gradient_computation}
\input{appendix_approximation}\label{appendix:sgd}
\input{appendix_Projection}

\input{appendix_reservoir}
\input{appendix_ds_cifar100}
\input{appendix_cs_ex_cl}
\input{appendix_adversarial}

\input{appendix_data_corruption}
\input{appendix_experiment_details}

\clearpage
\section{Limitations and Broader Impact}\label{appendix:limitation}

\paragraph{Limitations.} In this work, we introduce a new probabilistic relaxation for the multi-source bilevel data selection problem, which only assigns probability mass to feasible data subsets and thus naturally yields a lossless rounding scheme.
In addition, we show that this new probabilistic relaxation admits an effective two-point gradient estimator, which can simultaneously avoid the inverse-Hessian computation required by existing $0-1$ relaxation and the unbounded score term appearing in prior probabilistic relaxation.
Built upon this reducible two-point estimator, we further design a novel stochastic gradient algorithm, termed \texttt{ReSGD}, for the multi-source bilevel data selection problem, and establish a rigorous approximation guarantee for the subset returned by \texttt{ReSGD} under weak submodularity. A limitation of the current analysis is that \texttt{ReSGD} follows the standard random-output scheme in stochastic optimization: it needs to sample $\tau$ uniformly from $\{0,1,\ldots,T-1\}$ and returns a subset drawn according to the historical prior $\bs{\pi}^{(\tau)}$. 
Although theoretically convenient, in practice, this random-output scheme may be inconvenient, since it requires storing all $T$ historical iterates or recovering the sampled historical iterate after training. Fortunately, recent advances~\citep{attia2026fast,he2022revisit,liu2024revisiting} in last-iterate convergence of stochastic gradient methods provide a promising way to remove this random-sampling step, so that one can directly return the subset sampled from the final prior $\bs{\pi}^{(T)}$ while hopefully preserving a comparable approximation guarantee.
We leave this last-iterate variant of \texttt{ReSGD} as an important direction for future work.

\paragraph{Broader Impact.}This work mainly focuses on theoretically exploring the multi-source bilevel data selection problem.
Therefore, we do not foresee any direct negative societal impact induced by our proposed method.

\end{document}

%% file: introduction.tex
\section{Introduction}
Data selection aims to identify a small subset of training samples such that training solely on this subset can achieve performance comparable to that trained on the full dataset. As modern learning systems continue to grow in scale, this task has become increasingly crucial in a wide range of applications, including efficient training~\citep{chang2026spice,fancombatting,fan2026joint,huang2024robust,mindermann2022prioritized,mirzasoleiman2020coresets}, in-context learning~\citep{kumari2024end,kumari2024bumblebee,zhang2025tagcos} and continual learning~\citep{borsos2020coresets,borsos2024data,hao2024bilevel,tong2025coreset,zhou2022probabilistic}. For instance, in continual learning, an effective way to counter \emph{catastrophic forgetting}~\citep{french1999catastrophic,mccloskey1989catastrophic} is to maintain a small rehearsal memory that stores a subset of previously seen data and then replay them during training on new tasks~\citep{buzzega2020dark,chaudhry2019continual,lopez2017gradient,yoon2022online}. 

From an optimization viewpoint, many of data selection problems~\citep{borsos2020coresets,chenquadratic,killamsetty2021retrieve,lin2024data} can be formulated as \emph{bilevel combinatorial optimization}, where the upper level evaluates the selected subset according to the performance of the model induced by the corresponding lower-level training problem. However, the \emph{discrete} nature of data selection makes the underlying bilevel problem highly challenging. To circumvent this, recent works have sought to bring the original \emph{bilevel combinatorial optimization} into the world of continuous optimization, thereby leveraging the rich toolbox of gradient-based methods~\citep{bertsekas2015convex,lan2020first,nesterov2013introductory}. Existing approaches mainly follow two routes. The first is \emph{0-1 relaxation}~\citep{borsos2020coresets,hao2024bilevel}, which directly relaxes the binary selective mask into a continuous variable in $[0,1]$ and then applies implicit differentiation~\citep{lorraine2020optimizing,pedregosa2016hyperparameter} to solve the resulting bilevel continuous optimization. The second  is \emph{probabilistic relaxation}~\citep{zhou2022probabilistic}, which treats each binary mask as a Bernoulli random variable with learnable parameters and then optimizes these parameters via policy gradient descent~\cite{williams1992simple,sutton1999policy,kakade2001natural}.

Although these two aforementioned continuous-relaxation reformulations have made data selection tasks more tractable, their practical implementation still faces several fundamental limitations:

\vspace{-0.3em}
\begin{itemize}[leftmargin=*, itemsep=0.15em, topsep=0.2em, parsep=0pt, partopsep=0pt] 
	\item[\ding{202}]\textbf{Problematic Gradient Estimations.} In 0-1 relaxation, estimating the upper-level gradient typically requires approximating the \emph{inverse Hessian} of the lower-level objective, which is often \emph{computationally prohibitive}~\citep{borsos2020coresets,hao2024bilevel,zhang2024introduction}. In contrast, probabilistic relaxation avoids such second-order computation via policy-gradient estimators~\cite{sutton1999policy,williams1992simple}, but these estimators are \emph{unbounded} and often suffer from \emph{instability} and \emph{high variance} during training~\citep{greensmith2004variance,wu2018variance};
	\item[\ding{203}] \textbf{Limited to Single-Source Scenarios.} Both 0-1 and probabilistic relaxation primarily focus on subset selection over a \emph{single homogeneous dataset}. However, in many real-world applications, we often encounter \emph{multiple heterogeneous data sources} rather than a single-source dataset. A representative example arises in continual learning, where one often needs to retain a small data subset \emph{separately} from each past task for future replay~\citep{buzzega2020dark,lopez2017gradient,tong2025coreset}. Such settings cannot be trivially decomposed into multiple \emph{independent} single-source data selection problems,  since this treatment will ignore the \emph{cross-source correlations} among datasets and lead to \emph{redundant selections};
	\item[\ding{204}] \textbf{Heuristic and Infeasible Rounding.} In continuous relaxation, \emph{rounding} or \emph{derandomization} refers to the conversion of the obtained continuous solution into a discrete solution~\citep{ageev2004pipage,bu2024tackling}.  However, \emph{satisfactory} rounding schemes are still \emph{lacking for} both 0-1 and probabilistic relaxation. In 0-1 relaxation, current rounding methods are \emph{heuristic}, so there is \emph{no principled way} to quantify how the obtained subset relates to the original continuous solution. In probabilistic relaxation, sampling from the learned Bernoulli distributions may return a subset \emph{outside} the target constraint~\citep{calinescu2011maximizing,chekuri2010submodular,chekuri2014submodular};
	\item[\ding{205}] \textbf{Incomplete Convergence Analysis.} Existing theoretical results of both 0-1 and probabilistic relaxation mainly focus on convergence to a \emph{continuous stationary point}~\citep{zhou2022probabilistic,hao2024bilevel,chenquadratic,borsos2024data}. However, this is \emph{insufficient} for data selection, since our primary interest lies in the quality of the resulting subset rather than the intermediate continuous solution. To date, how to establish \emph{rigorous} theoretical guarantees for the obtained subset remains an \emph{uncharted territory} in bilevel data selection.
\end{itemize}\vspace{-0.3em}
In view of all this, a natural question arises: 
\begin{tcolorbox}
	Can we develop a \emph{provable} continuous-relaxation framework, together with \emph{guaranteed} rounding scheme and \emph{effective} gradient estimator, for \emph{multi-source} data selection problems?
\end{tcolorbox}In this paper, we will provide an affirmative answer to this question by presenting a novel probabilistic relaxation for multi-source data selection problems. Our key idea is to introduce multiple \emph{learnable categorical priors} to parameterize a probability distribution supported only on \emph{feasible} data subsets. Compared with prior relaxations, this constrained construction immediately yields a \emph{lossless} rounding scheme for multi-source data selection problem. Moreover, we derive an \emph{effective} two-point gradient estimator for our proposed probabilistic relaxation, which can simultaneously avoid the costly inverse-Hessian computation in \emph{0-1 relaxation}~\citep{borsos2020coresets,hao2024bilevel} and the unbounded score in prior \emph{probabilistic relaxation}~\citep{zhou2022probabilistic}, thereby achieving a significant \emph{variance-reduction} effect. Based on this two-point estimator, we  further develop an SGD algorithm, termed \texttt{ReSGD}, for the multi-source data selection problem, and establish a \emph{ rigorous} approximation guarantee for the subset returned by \texttt{ReSGD} under \emph{weak submodularity}~\citep{das2018approximate,thiery2022two}. In summary, we make the following contributions:\begin{list}{\labelitemi}{\leftmargin=0.5em \itemindent=-0.0em \itemsep=.1em}
	\vspace{-.05in}
    \item[$\bullet$]We present a novel probabilistic relaxation for \emph{multi-source} data selection problem, which only assigns probability mass to \emph{feasible subsets} and thus naturally induces a \emph{lossless} rounding scheme. 
    \item[$\bullet$]We prove that our proposed probabilistic relaxation admits an effect two-point gradient estimator, which avoids both the inverse-Hessian computation in \emph{0-1 relaxation} and the unbounded score in prior \emph{probabilistic relaxation}, thereby improving stability and reducing variance. Furthermore, we develop a reducible representation of this two-point gradient estimator under a Bayesian perspective, which can further reduce the gradient-computation cost from $\mathcal{O}(h+\sum_{i=1}^{h} n_i)$ retrainings to $\mathcal{O}(h)$. Here, $h$ denotes the number of data sources, and $\sum_{i=1}^{h} n_i$ is the total sample size.

    \item[$\bullet$]Based on the proposed reducible two-point representation, we then design a novel SGD algorithm, termed \texttt{ReSGD}. Furthermore, under $(\gamma,\beta)$-weak submodularity~\citep{thiery2022two}, we show that the subset returned by \texttt{ReSGD} achieves a $\Big(\frac{\gamma^{2}}{\beta+\beta(1-\gamma)+\gamma^{2}}\Big)$-approximation guarantee  with respect to the optimal data subset $\mathcal{S}^{*}$. To the best of our knowledge, this provides \emph{the first} rigorous theoretical guarantee for multi-source bilevel data selection problem.

    \item[$\bullet$]We demonstrate the practical efficacy of our \texttt{ReSGD} on data summarization and continual learning.
\end{list}

%% file: Data_Selection.tex
\section{Bilevel Data Selection: From Single Dataset to Multi-Source Scenarios}
This section first reviews the single-source bilevel data selection problem together with its two representative continuous relaxations, namely, \emph{0-1 relaxation}~\citep{borsos2020coresets,borsos2024data,hao2024bilevel} and \emph{probabilistic relaxation}~\citep{zhou2022probabilistic}. We then move on to the multi-source setting, which is the main focus of this paper.
\subsection{Single-Source Bilevel Data Selection: Background and Relaxations}\label{sec:single_source}Given a fixed dataset $\mathcal{D}\triangleq\{(x_i,y_i)\}_{i=1}^{n}$, the goal of single-source data selection is to identify a subset of at most $K$ samples from $\mathcal{D}$ such that training on this subset can achieve performance comparable to that trained on $\mathcal{D}$. Let $\ell(x_i,y_i;\bs{\theta})$ be the training loss on the $i$-th sample $(x_i,y_i)$, where $\bs{\theta}$ denotes the model parameters. Then, the single-source data selection problem can be formulated as the following \emph{bilevel combinatorial optimization}~\citep{borsos2020coresets,borsos2024data,dumouchelle2024neurbilo,hao2024bilevel,tong2025coreset,zhou2024learning,zhou2022probabilistic}:
\vspace{-0.5em}
\begin{equation}\label{eq:single_source_bilevel}
	\begin{aligned}
		\min_{\mathbf{m}\in\{0,1\}^{n},~\|\mathbf{m}\|_{0}\le K}~&\mathcal{L}(\bs{\theta}^{*}(\mathbf{m}))
		\triangleq \sum_{i=1}^{n}\ell(x_i,y_i;\bs{\theta}^{*}(\mathbf{m}))\\
		\text{s.t.}\quad
		\bs{\theta}^{*}(\mathbf{m}) \in &\arg\min_{\bs{\theta}}~\hat{\mathcal{L}}(\bs{\theta},\mathbf{m})
		\triangleq \sum_{i=1}^{n} m_i \cdot\ell(x_i,y_i;\bs{\theta}),
	\end{aligned}
\end{equation}\vspace{-0.2em}where $\mathbf{m}\in\{0,1\}^{n}$ denotes the selective mask vector with $m_i=1$ indicating that the $i$-th sample is selected and $m_i=0$ otherwise, $\bs{\theta}^{*}(\mathbf{m})$ is the optimal solution to the lower-level problem under the selective mask vector $\mathbf{m}$ and the symbol $\|\cdot\|_{0}$ represents the $\ell_0$ norm.

The bilevel problem in Eq.~\eqref{eq:single_source_bilevel} is generally known to be \textbf{NP}-hard~\citep{feige1998threshold,natarajan1995sparse}, implying that no polynomial-time algorithms can solve it optimally. To tackle this, recent works have relaxed Eq.~\eqref{eq:single_source_bilevel} into a continuous one, so that gradient-based methods can be used to \emph{approximately} solve it. To date, two representative relaxations have been developed for Eq.~\eqref{eq:single_source_bilevel}, namely, \emph{0-1} and \emph{probabilistic relaxation}.

\textbf{0-1 Relaxation.} The first line of work~\citep{borsos2020coresets,borsos2024data,hao2024bilevel}  directly relaxes each mask $m_i$ as a continuous variable over $[0,1]$. In this way, Eq.~\eqref{eq:single_source_bilevel} can be rewritten as a \emph{bilevel continuous optimization}, that is,\vspace{-0.5em}\begin{equation}\label{eq:01_relaxation}\begin{aligned}
\min_{\mathbf{m}\in[0,1]^{n},~\|\mathbf{m}\|_{1}\le K}~&\mathcal{L}(\bs{\theta}^{*}(\mathbf{m}))
		\triangleq \sum_{i=1}^{n}\ell(x_i,y_i;\bs{\theta}^{*}(\mathbf{m})),
    \\\text{s.t.}\quad\bs{\theta}^{*}(\mathbf{m}) \in &\arg\min_{\bs{\theta}}~\hat{\mathcal{L}}(\bs{\theta},\mathbf{m})\triangleq \sum_{i=1}^{n} m_i\cdot\ell(x_i,y_i;\bs{\theta}).
	\end{aligned}
\end{equation}\vspace{-0.1em}Compared with Eq.~\eqref{eq:single_source_bilevel}, Eq.~\eqref{eq:01_relaxation} enables the use of well-established gradient-based methods from \emph{bilevel continuous optimization}~\citep{chen2025near,hong2023two,ji2021bilevel} to search for a high-quality \emph{fractional} mask vector $\mathbf{m}$. In particular, under suitable conditions, the hypergradient w.r.t. each $m_i\in[0,1]$ can be written as:\begin{equation}\label{eq:hypergradient_01}
\frac{\partial\mathcal{L}(\bs{\theta}^{*}(\mathbf{m}))}{\partial m_i}
\triangleq
-\nabla_{\bs{\theta}}\mathcal{L}(\bs{\theta}^{*}(\mathbf{m}))^{\top}
\hessianbox{$\displaystyle
\Big[\frac{\partial^{2}\hat{\mathcal{L}}(\bs{\theta}^{*}(\mathbf{m}),\mathbf{m})}{\partial\bs{\theta}\partial\bs{\theta}^{T}}\Big]^{-1}
$}
\nabla_{\bs{\theta}}\ell(x_i,y_i;\bs{\theta}^{*}(\mathbf{m})).
\end{equation}It is worth noting that Eq.~\eqref{eq:hypergradient_01} requires computing the \emph{inverse Hessian matrix} of the lower-level objective function $\hat{\mathcal{L}}$, which often incurs \emph{substantial time and memory overhead}.

\textbf{Probabilistic Relaxation.} To avoid the costly inverse Hessian computation in Eq.~\eqref{eq:hypergradient_01}, \citep{zhou2022probabilistic} recently proposed a probabilistic relaxation, which models each mask $m_{i}$ as an \emph{independent} Bernoulli random variable, namely, $m_i \sim \mathrm{Bern}(\pi_i)$ where $\pi_i\in[0,1]$ is a learnable parameter. Let $\bs{\pi}\triangleq(\pi_1,\dots,\pi_{n})$ denote the corresponding parameter vector. Under the independence assumption, the induced distribution over binary vector $\mathbf{m}$ is $\text{p}(\mathbf{m}\mid \bs{\pi})
\triangleq
\prod_{i=1}^{n}\pi_i^{m_i}(1-\pi_i)^{1-m_i}$.
Accordingly, Eq.~\eqref{eq:single_source_bilevel} can be relaxed into an expected loss minimization problem with respect to the continuous parameter vector $\bs{\pi}\triangleq(\pi_1,\dots,\pi_{n})$, that is to say,\begin{equation}\label{eq:prob_relaxation}
	\begin{aligned}
		 &\min_{\bs{\pi}\in[0,1]^{n},\|\bs{\pi}\|_{1}\le K}\Phi(\bs{\pi})\triangleq\mathbb{E}_{\text{p}(\bs{m}|\bs{\pi})}\left[\mathcal{L}\left(\bs{\theta}^{*}\left(\mathbf{m}\right)\right)\right]\triangleq\sum_{\mathbf{m}\in\{0,1\}^{n}}\Big(\mathcal{L}(\bs{\theta}^{*}(\mathbf{m}))\cdot\text{p}(\bs{m}|\bs{\pi})\Big)\\&\quad\quad\quad\text{s.t.}\quad\quad\bs{\theta}^{*}(\mathbf{m})\in\mathop{\arg\min}_{\bs{\theta}}\ \hat{\mathcal{L}}(\bs{\theta},\mathbf{m})\triangleq\sum_{i=1}^{n}m_{i}\cdot\ell(x_{i},y_{i};\bs{\theta}).
	\end{aligned}
\end{equation}\vspace{-0.1em}A key advantage of \eqref{eq:prob_relaxation} is that its gradient admits an unbiased policy gradient estimator~\cite{williams1992simple,sutton1999policy},i.e.,\vspace{-0.1em}\begin{equation}\label{eq:policy_gradient}
\begin{aligned}
	&\frac{\partial\Phi(\bs{\pi})}{\partial\pi_{i}}
	\triangleq
	\mathbb{E}_{\text{p}(\mathbf{m}|\bs{\pi})}
	\Big[
	\mathcal{L}(\bs{\theta}^{*}(\mathbf{m}))\cdot
	\nabla_{\pi_{i}}\ln(\text{p}(\mathbf{m}|\bs{\pi}))
	\Big]\triangleq
	\mathbb{E}_{\text{p}(\mathbf{m}|\bs{\pi})}
	\Big[
	\mathcal{L}(\bs{\theta}^{*}(\mathbf{m}))
	\cdot\gradbox{$\displaystyle \frac{m_{i}-\pi_{i}}{\pi_{i}(1-\pi_{i})}$}
	\Big].
\end{aligned}
\end{equation}Eq.\eqref{eq:policy_gradient} implies that  $\mathcal{L}(\bs{\theta}^{*}(\mathbf{m}))\cdot\frac{m_{i}-\pi_{i}}{\pi_{i}(1-\pi_{i})}$ is an unbiased stochastic gradient estimator of the relaxed objective function $\Phi(\bs{\pi})$. In sharp contrast with Eq.\eqref{eq:hypergradient_01}, this estimator only requires evaluating the function value $\mathcal{L}(\bs{\theta}^{*}(\mathbf{m}))$ and is thus \emph{easier to implement}.  However, it is worth noting that when $m_{i}=1,\pi_{i}\rightarrow0$ or $m_{i}=0,\pi_{i}\rightarrow1$, the factor $\frac{m_{i}-\pi_{i}}{\pi_{i}(1-\pi_{i})}$ will become \emph{unbounded}, which can lead to \emph{high variance}, \emph{unstable updates}, and even \emph{divergence} in practice~\citep{greensmith2004variance,wu2018variance}. 

Furthermore, the Bernoulli product distribution $\text{p}(\mathbf{m}\mid \bs{\pi})$ in Eq.~\eqref{eq:prob_relaxation} poses a significant challenge for rounding, since it assigns \emph{non-zero} probability mass to subsets outside the $K$-size cardinality constraint. As a consequence, for an arbitrary parameter vector $\bs{\pi}$, sampling according to $\text{p}(\mathbf{m}\mid \bs{\pi})$ may yield an \emph{infeasible} subset whose cardinality exceeds $K$.

\subsection{Multi-Source Bilevel Data Selection}
We now turn to the multi-source scenarios. Specifically, we first suppose that the full dataset $\mathcal{D}$ consists of $h$ heterogeneous sources, that is to say, $\mathcal{D}\triangleq\cup_{i\in[h]}\mathcal{D}_{i}$, where each source is given by $\mathcal{D}_{i}\triangleq\{(x^{(i)}_{j},y^{(i)}_{j})\mid j\in[n_{i}]\}$ and $\mathcal{D}_{i}\cap\mathcal{D}_{j}=\emptyset,\forall i\neq j$.  Here, different sources may correspond to different tasks, domains or data providers, depending on the applications.

Given a set of predefined budgets $\{K_{1},\dots,K_{h}\}$, the goal of multi-source data selection is to select at most $K_i$ from each source $\mathcal{D}_i$ such that the model trained on the \emph{jointly selected subset} can achieve comparable performance as the model trained on $\mathcal{D}$. Formally, we consider the following problem:\vspace{-0.5em}\begin{equation}\label{eq:multi}
	\begin{aligned}
		&\min_{\mathbf{m}_{i}\in\{0,1\}^{n_{i}},\|\mathbf{m}_{i}\|_{0}\le K_{i},\forall i\in[h]}\mathcal{L}(\bs{\theta}^{*}(\mathbf{m}))\triangleq\sum_{i=1}^{h}\sum_{j=1}^{n_{i}}\ell(x^{(i)}_{j},y^{(i)}_{j};\bs{\theta}^{*}(\mathbf{m}))\\
		&\quad\quad\quad\quad\text{s.t.}\quad\quad\bs{\theta}^{*}(\mathbf{m})\in\mathop{\arg\min}_{\bs{\theta}}\ \hat{\mathcal{L}}(\bs{\theta},\mathbf{m})\triangleq\sum_{i=1}^{h}\sum_{j=1}^{n_{i}}m_{i,j}\cdot\ell(x^{(i)}_{j},y^{(i)}_{j};\bs{\theta}),
	\end{aligned}
\end{equation}\vspace{-0.1em}where the variable $m_{i,j}\in\{0,1\}$ indicates whether the $j$-th sample from the $i$-th source $\mathcal{D}_i$ is selected, $\mathbf{m}_{i}\triangleq(m_{i,1},\dots,m_{i,n_{i}})$ and $\mathbf{m}\triangleq(\mathbf{m}_{1},\dots,\mathbf{m}_{h})$.

Previously, when facing the multi-source data selection problem, many works~\citep{borsos2020coresets,borsos2024data,hao2024bilevel,zhou2022probabilistic} artificially decomposed it into multiple \emph{independent} single-source selection problems and then \emph{repeatedly} applied cardinality-constrained algorithms to each source. In contrast, the upper-level objective in our Eq.~\eqref{eq:multi} evaluates the model trained on the \emph{jointly selected subset across all sources}, so the benefit of selecting a sample from one source depends \emph{crucially} on what has been selected in other sources.

%% file: New_Relaxation.tex
\section{A New Probabilistic Relaxation via Learnable Categorical Priors}
 This section mainly introduces a new probabilistic relaxation for the multi-source data selection problem~\eqref{eq:multi}.  As discussed in Section~\ref{sec:single_source}, a critical drawback of the current \emph{probabilistic relaxation}~\citep{zhou2022probabilistic} is that the induced Bernoulli product distribution will assign \emph{non-zero} probability to subsets \emph{outside} the target constraint, which renders the subsequent rounding operation intrinsically \emph{infeasible}. To overcome this, a natural idea is to design a new sampling distribution whose support is \emph{entirely}  restricted to the \emph{feasible region}. In other words, for any combinatorial constraint $\mathcal{C}$, we seek to construct a parameterized distribution $\mathrm{p}(\mathbf{m}\mid\bs{\pi})$ such that $\mathrm{p}(\mathbf{m}\mid\bs{\pi})\equiv0$ for any $\mathbf{m}\notin \mathcal{C}$. An immediate advantage of such a construction is that every sample drawn from $\mathrm{p}(\mathbf{m}\mid\bs{\pi})$ is automatically \emph{feasible}.

Next, we show how to instantiate this idea for the multi-source data selection problem~\eqref{eq:multi}.   Recall that Eq.~\eqref{eq:multi} requires selecting at most $K_i$ samples from each source $\mathcal{D}_i$. From a high-level perspective, this process closely resembles an election procedure, where each source $\mathcal{D}_i$ is viewed as a local candidate pool and  $K_i$ samples with the highest vote counts are chosen. Motivated by this observation, we introduce a learnable categorical prior $\bs{\pi}_{i}\triangleq(\pi_{i,1},\dots,\pi_{i,n_i})$ for every source $\mathcal{D}_i\triangleq\{(x^{(i)}_{j},y^{(i)}_{j})\mid j\in[n_i]\}$.
Here, each $\pi_{i,j}$ represents the probability of selecting the $j$-th sample from source $\mathcal{D}_i$, while the residual probability $1-\sum_{j=1}^{n_i}\pi_{i,j}$ corresponds to selecting a dummy symbol $\emptyset$, meaning that \emph{no sample} from $\mathcal{D}_i$ is chosen. 

With these categorical priors $\bs{\pi}\triangleq(\bs{\pi}_{1},\dots,\bs{\pi}_{h})$, we then \emph{independently} draw $K_i$ samples from each source $\mathcal{D}_i$ according to the categorical distribution specified by $\bs{\pi}_i$. This operation will yield a random subset $\bigcup_{i=1}^{h}\bigcup_{k=1}^{K_i}d_{i,k}$, where $d_{i,k}$ denotes the $k$-th random draw from source $\mathcal{D}_i$. Note that this subset $\bigcup_{i=1}^{h}\bigcup_{k=1}^{K_i}d_{i,k}$ contains at most $K_i$ samples from each source $\mathcal{D}_i$. Therefore, this sampling rule induces a probability distribution supported only in the \emph{feasible set} of Eq.~\eqref{eq:multi}. Based on this constrained sampling scheme, we naturally can relax Eq.~\eqref{eq:multi} into the following continuous optimization problem over the categorical priors $\bs{\pi}$, that is to say,\vspace{-0.5em}\begin{equation}\label{eq:new_prob_relaxation}
	\begin{aligned}
\min_{\bs{\pi}_{i}\in\Delta_{n_i},\,\forall i\in[h]}\Phi(\bs{\pi})&\triangleq
		\sum_{d_{i,k}\in\mathcal{D}_{i}\cup\{\emptyset\},\forall i,k}
		\Bigg(
		\mathcal{L}\Big(\bs{\theta}^{*}\big(\cup_{i=1}^{h}\cup_{k=1}^{K_i}d_{i,k}\big)\Big)
		\cdot
		\prod_{i=1}^{h}\prod_{k=1}^{K_i}\mathrm{p}(d_{i,k}\mid\bs{\pi}_{i})
		\Bigg),\\
		\text{s.t.}\qquad
		\bs{\theta}^{*}(\mathcal{S})
		&\in
		\arg\min_{\bs{\theta}}~\hat{\mathcal{L}}(\bs{\theta},\mathcal{S})
		\triangleq
		\sum_{(x,y)\in\mathcal{S}}\ell(x,y;\bs{\theta}),\,\,\, \forall\mathcal{S}\subseteq\mathcal{D},
	\end{aligned}
\end{equation}\vspace{-0.15em}where $d_{i,k}$ denotes the $k$-th random draw from source $\mathcal{D}_i$ with $\mathrm{p}((x^{(i)}_{j},y^{(i)}_{j})\mid \bs{\pi}_i)\triangleq\pi_{i,j}$, $\Delta_{n_i}\triangleq\{\bs{p}\in[0,1]^{n_i}:\ \|\bs{p}\|_1\le 1\}$ denotes the sub-probability simplex and $\mathrm{p}(\emptyset\mid \bs{\pi}_i)\triangleq1-\sum_{j=1}^{n_i}\pi_{i,j}$.
\begin{remark}
   Note that, in Eq.~\eqref{eq:new_prob_relaxation}, we switch from the \textbf{binary notation} used in Eq.~\eqref{eq:multi} to an equivalent \textbf{set-based formulation}. Specifically, for any subset $\mathcal{S}\subseteq\mathcal{D}$, let $\bs{\theta}^{*}(\mathcal{S})$ denote the optimal lower-level solution trained on the data subset $\mathcal{S}$. Equivalently, $\bs{\theta}^{*}(\mathcal{S})$ corresponds to $\bs{\theta}^{*}(\mathbf{1}_{\mathcal{S}})$ of Eq.~\eqref{eq:multi}. 
\end{remark}\vspace{-0.3em}Compared with the continuous relaxation in Eq.~\eqref{eq:prob_relaxation}, a major advantage of  Eq.~\eqref{eq:new_prob_relaxation} is that it only assigns probability mass to \emph{feasible} subsets. Consequently, for any given categorical priors $\bs{\pi}\triangleq(\bs{\pi}_1,\dots,\bs{\pi}_h)$, we can directly sample a subset satisfying the multi-source budget constraints. Moreover, this sampling step is \emph{lossless in expectation} w.r.t. the relaxed objective value $\Phi(\bs{\pi})$. Specifically, if the subset $\mathcal{S}$ is sampled via categorical priors $\bs{\pi}$, then $\mathbb{E}_{\mathcal{S}}\left[\mathcal{L}(\bs{\theta}^{*}(\mathcal{S}))\right]=\Phi(\bs{\pi})$.

\subsection{Variance-Reduced Two-point Gradient Estimator and Reducible SGD}
To leverage gradient-based methods for optimizing Eq.~\eqref{eq:new_prob_relaxation}, this subsection focuses on deriving stochastic estimators for $\nabla_{\bs{\pi}}\Phi(\bs{\pi})$. A natural starting point is to follow the same route as in Eq.~\eqref{eq:policy_gradient} and compute a  policy gradient estimator for our new probabilistic relaxation $\Phi(\bs{\pi})$.  However, as shown in Appendix~\ref{appendix:pge}, such an estimator still involves an \emph{unbounded} score factor analogous to that in Eq.~\eqref{eq:policy_gradient}, which can again lead to high variance and unstable training behavior, especially when the categorical priors approach the boundary of the feasible domain.

Fortunately, the special structure of our proposed new probabilistic relaxation $\Phi(\bs{\pi})$ admits a \emph{variance-reduced} two-point gradient estimator, which can avoid the aforementioned \emph{unbounded} score factor. Before going into the details, we first introduce some notations. For any realization of the random draws $\{d_{i,k},\,\forall k\in[K_i],\,\forall i\in[h]\}$, we define  $\mathcal{S}_{-(i,1)}\triangleq \bigcup_{(i',k')\neq(i,1)}d_{i',k'}$ as the subset obtained by excluding the first draw $d_{i,1}$ from source $\mathcal{D}_{i}$.  Equivalently, $\mathcal{S}_{-(i,1)}$ is formed by taking the union of all sampled draws except $d_{i,1}$. With this notation, we can establish the following theorem:\begin{theorem}[Proof in Appendix~\ref{appendix:two-point}]\label{thm:two_point_representation}For any $i\in[h]$ and $j\in[n_i]$, the partial derivative of $\Phi(\bs{\pi})$ with respect to variable $\pi_{i,j}$ satisfies the following equality, namely,\begin{equation}\label{eq:two_point_representation}
\frac{\partial \Phi(\bs{\pi})}{\partial \pi_{i,j}}
\triangleq
K_i\cdot
\mathbb{E}\Bigg[
\underbrace{
\twopointbox{$\displaystyle
\mathcal{L}\Big(\bs{\theta}^{*}\big(\mathcal{S}_{-(i,1)}\cup(x^{(i)}_{j},y^{(i)}_{j})\big)\Big)
-
\mathcal{L}\Big(\bs{\theta}^{*}\big(\mathcal{S}_{-(i,1)}\big)\Big)
$}
}_{\text{\emph{Two-point Representation}}}
\Bigg],
\end{equation}\vspace{-0.2em}where the expectation is taken over all random draws except the first draw $d_{i,1}$ from source $\mathcal{D}_{i}$.
\end{theorem}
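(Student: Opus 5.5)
The plan is to treat $\Phi(\bs{\pi})$ as a multilinear polynomial in the draw distributions. The key point is that the product measure $\prod_{i'}\prod_{k}\mathrm{p}(d_{i',k}\mid\bs{\pi}_{i'})$ depends on $\pi_{i,j}$ only through the $K_i$ factors $\mathrm{p}(d_{i,k}\mid\bs{\pi}_i)$, $k\in[K_i]$. Each such factor is affine in $\pi_{i,j}$, and the dummy symbol absorbs the residual mass. Concretely, $\mathrm{p}(d\mid\bs{\pi}_i)$ equals $\pi_{i,j}$ if $d=(x^{(i)}_j,y^{(i)}_j)$, equals $\pi_{i,j'}$ if $d$ is another sample, and equals $1-\sum_{j'}\pi_{i,j'}$ if $d=\emptyset$. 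Differentiating in $\pi_{i,j}$, treating the $\pi_{i,j'}$ as free coordinates of the sub-simplex $\Delta_{n_i}$, gives
\[
\frac{\partial \mathrm{p}(d\mid\bs{\pi}_i)}{\partial\pi_{i,j}}=\mathbb{1}\{d=(x^{(i)}_j,y^{(i)}_j)\}-\mathbb{1}\{d=\emptyset\}.
\]
The first step is therefore to apply the product rule to the finite sum defining $\Phi$. This yields a sum over $k\in[K_i]$ of terms in which the $k$-th factor for source $i$ is replaced by the signed difference above, while all other factors are unchanged.

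The second step is to collapse the sum over $d_{i,k}$ in each term. Summing over $d_{i,k}\in\mathcal{D}_i\cup\{\emptyset\}$ against this derivative picks out only two values of $d_{i,k}$: the sample $(x^{(i)}_j,y^{(i)}_j)$ with sign $+$, and $\emptyset$ with sign $-$. Since $\emptyset$ contributes nothing to the union, the subset with $d_{i,k}=\emptyset$ is exactly $\mathcal{S}_{-(i,k)}$. The subset with $d_{i,k}$ equal to that sample is $\mathcal{S}_{-(i,k)}\cup(x^{(i)}_j,y^{(i)}_j)$. The $k$-th term is therefore
\[
\mathbb{E}\big[\mathcal{L}(\bs{\theta}^{*}(\mathcal{S}_{-(i,k)}\cup(x^{(i)}_j,y^{(i)}_j)))-\mathcal{L}(\bs{\theta}^{*}(\mathcal{S}_{-(i,k)}))\big],
\]
with the expectation taken over all draws except $d_{i,k}$, under the original product measure.

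The final step is symmetry. The draws $d_{i,1},\dots,d_{i,K_i}$ are i.i.d., and the set union is invariant under permuting them. Hence the law of $\mathcal{S}_{-(i,k)}$ does not depend on $k$, all $K_i$ terms equal the $k=1$ term, and summing them gives the factor $K_i$ in Eq.~\eqref{eq:two_point_representation}.

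The main obstacle is the first step's parameterization subtlety. The derivative must treat $\pi_{i,j}$ as a free coordinate, with the dummy probability $1-\sum_{j'}\pi_{i,j'}$ moving in the opposite direction. This is what produces the $-\mathcal{L}(\bs{\theta}^{*}(\mathcal{S}_{-(i,1)}))$ baseline rather than a score-function term. I would state explicitly that $\Phi$ extends as a polynomial to a neighborhood of $\Delta_{n_i}$, so that partial derivatives are well defined even at boundary points. All sums are finite, so interchanging differentiation and summation is immediate.
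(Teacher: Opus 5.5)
Your proposal is correct and follows essentially the same argument as the paper's proof in Appendix~\ref{appendix:two-point}. Both apply the product rule over the $K_i$ factors of source $i$, use $\partial \mathrm{p}(d\mid\bs{\pi}_i)/\partial\pi_{i,j}=\mathbb{I}[d=(x^{(i)}_j,y^{(i)}_j)]-\mathbb{I}[d=\emptyset]$ to collapse the sum over one draw into the two-point difference, and use exchangeability of $d_{i,1},\dots,d_{i,K_i}$ to obtain the factor $K_i$. The only difference is the order of steps: the paper reduces to $k=1$ by symmetry before collapsing the sum over $d_{i,1}$, whereas you collapse first and symmetrize afterwards.
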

Theorem~\ref{thm:two_point_representation} suggests the partial derivative of $\Phi(\bs{\pi})$ w.r.t. $\pi_{i,j}$ can be estimated via the two-point difference $\mathcal{L}(\bs{\theta}^{*}(\mathcal{S}_{-(i,1)}\cup\{(x^{(i)}_{j},y^{(i)}_{j})\}))
-
\mathcal{L}(\bs{\theta}^{*}(\mathcal{S}_{-(i,1)}))$. Compared with the previous Eq.\eqref{eq:hypergradient_01} and Eq.\eqref{eq:policy_gradient}, this two-point representation enjoys three major advantages. First, it only depends on function evaluations of the upper-level objective, thereby avoiding \emph{costly backpropagation} and \emph{the computation of the inverse Hessian matrix} required in Eq.~\eqref{eq:hypergradient_01}. Second, compared to Eq.\eqref{eq:policy_gradient}, Eq.~\eqref{eq:two_point_representation} does not rely on any \emph{unbounded} factor, thereby leading to more \emph{stable} updates. Third, the two subsets $\mathcal{S}_{-(i,1)}\cup(x^{(i)}_{j},y^{(i)}_{j})$ and   $\mathcal{S}_{-(i,1)}$  only differ in \emph{one data point}, so their corresponding function values are typically very close to each other. This strong coupling can significantly \emph{reduce the fluctuation} of the resulting gradient estimator, and thus naturally yields a \emph{variance-reduction} effect~\citep{duchi2015optimal,nesterov2017random,JMLR:v18:16-632}.

It is worth noting that the two-point representation in Eq.~\eqref{eq:two_point_representation} requires evaluating the marginal contribution of every data point $(x_j^{(i)}, y_j^{(i)})$ with respect to its corresponding subset $\mathcal{S}_{-(i,1)}$, which will eventually lead to a total of $\mathcal{O}(h+\sum_{i=1}^{h} n_i)$ lower-level retrainings. Here, $\sum_{i=1}^{h} n_i$ denotes the total number of data points across all sources and is typically \emph{very large} in practice. As a result, directly computing stochastic gradients based on Eq.~\eqref{eq:two_point_representation} is generally \emph{impractical}. To address this issue, we next introduce a \emph{computationally-efficient} reformulation of Eq.~\eqref{eq:two_point_representation}.


\begin{algorithm}[t]
\caption{SGD via Two-point Reducible Representation (\texttt{ReSGD})}
\label{alg:reducible_sgd}
\KwIn{Datasets $\{\mathcal{D}_i\}_{i=1}^{h}$, Budgets $\{K_i\}_{i=1}^{h}$, Learning rate $\eta$, Number of iterations $T$}
\KwOut{Selected subset $\mathcal{S}
\triangleq
\bigcup_{i=1}^{h}\bigcup_{k=1}^{K_i} d_{i,k}$}

Initialize $\bs{\pi}_i^{(0)}\in\Delta_{n_i}$ for all $i\in[h]$ and train a holdout model $\bs{\theta}^{*}(\mathcal{D})$\;
\For{$t=0,1,\ldots,T-1$}{
    Draw $K_i$ samples from source $\mathcal{D}_i$ according to $\bs{\pi}_i^{(t)}$ for all $i\in[h]$\;
    Construct $\mathcal{S}^{(t)}_{-(i,1)}$ by excluding the first draw from source $\mathcal{D}_i$ for all $i\in[h]$\;
    Train $\bs{\theta}^{*}(\mathcal{S}_{-(i,1)})$ for all $i\in[h]$\;
    
    Compute $\widehat{\bs{g}}^{(t)}_i\triangleq(\widehat{g}^{(t)}_{i,1},\ldots,\widehat{g}^{(t)}_{i,n_i})$ by reducible representation Eq.~\eqref{eq:two_point_representation1} $\forall i\in[h], j\in[n_i]$\;
    Update $\bs{\pi}_{i}^{(t+1)}\triangleq
\Pi\rlap{\hspace{0.08em}\raisebox{-0.25ex}{$^{\footnotemark}$}}_{\Delta_{n_i}}\left(
    \bs{\pi}_{i}^{(t)}-\eta\cdot\widehat{\bs{g}}_{i}^{(t)}
    \right),\forall i\in[h]$\; \setcounter{projroundfn}{\value{footnote}}}
Sample an index $\tau$ uniformly at random from $\{0,1,\ldots,T-1\}$\;    
Draw{\color{red}\footnotemark[\value{projroundfn}]}$K_i$ samples $\{d_{i,1},\dots,d_{i,K_i}\}$ from source $\mathcal{D}_i$ via  $\bs{\pi}_i^{(\tau)},\forall i\in[h]$\;
\end{algorithm}\footnotetext[\value{projroundfn}]{We provide the details of \emph{projection operation} and a more effective \emph{reservoir rounding} scheme in Appendix~\ref{appendix:projection} and \ref{appendix:RR}.}


\textbf{Reducible Representation.} We now derive a \emph{tractable} expression for the two-point difference of Eq.~\eqref{eq:two_point_representation} under Bayesian statistical theory~\citep{gelman1995bayesian,MLAPP}. Specifically, we can treat the model parameter $\bs{\theta}$ as a random variable with prior $\mathrm{p}(\bs{\theta})$, and seek to infer its corresponding posterior  $\mathrm{p}(\bs{\theta}|\mathcal{S})$ from the already-seen data subset $\mathcal{S}\subseteq\mathcal{D}$. We further suppose that the model follows  a predictive distribution $\mathrm{p}(y| x;\mathcal{S})\triangleq\int \mathrm{p}(y| x;\bs{\theta})\cdot\mathrm{p}(\bs{\theta}|\mathcal{S})\mathrm{d}\bs{\theta}$. Note that when a point estimate of $\bs{\theta}$ is used, the posterior $\mathrm{p}(\bs{\theta}|\mathcal{S})$ can be viewed as a Dirac measure~\citep{durrett2019probability,evans2025measure}. Under this probabilistic framework, the loss function is commonly specified as the negative log-likelihood of the predictive distribution, namely, $\ell(x,y;\bs{\theta}^{*}(\mathcal{S}))\triangleq-\log\mathrm{p}(y|x;\mathcal{S})$. Then, throughout the Bayes' rule and conditional independence, numerous prior works~\citep{mindermann2022prioritized,tong2025coreset} have shown that the marginal contribution of any data point $(x,y)$ with respect to a subset $\mathcal{S}$ can admit the following reducible form:(\emph{See Appendix~\ref{appendix:Reducible2}.})\vspace{-0.3em}\begin{equation}
 \begin{aligned}
&\mathcal{L}\big(\bs{\theta}^{*}\big(\mathcal{S}\cup(x,y)\big)\big)-\mathcal{L}\big(\bs{\theta}^{*}\big(\mathcal{S}\big)\big)= \log\mathrm{p}(\bs{y}|\bs{x}; \mathcal{S})-\log\mathrm{p}(\bs{y}|\bs{x}; \mathcal{S}\cup (x, y))\\&=\log\mathrm{p}(\bs{y}|\bs{x}; \mathcal{S})-\log\frac{
        \mathrm{p}(y|x; \mathcal{D})\cdot
        \mathrm{p}(\bs{y}|\bs{x};\mathcal{S})
    }{
       \mathrm{p}(y|x;\mathcal{S})
    }=\ell\big(x,y;\bs{\theta}^{*}(\mathcal{D})\big)
-\ell\big(x,y;\bs{\theta}^{*}(\mathcal{S})\big),
\label{eq:reducible_Rep}
\end{aligned}   
\end{equation}\vspace{-0.2em}where the pair $(\bs{x},\bs{y})$ denotes all samples in $\mathcal{D}$ and the second equality comes from Eq.(2) in \citep{mindermann2022prioritized}.

By substituting Eq.~\eqref{eq:reducible_Rep} into Eq.~\eqref{eq:two_point_representation}, we then can obtain the following reducible representation:\vspace{-0.35em}
\begin{equation}\label{eq:two_point_representation1}
\frac{\partial \Phi(\bs{\pi})}{\partial \pi_{i,j}}
\triangleq
K_i\cdot
\mathbb{E}\Bigg[
\underbrace{
\gradbox{$\displaystyle
\ell\big(x^{(i)}_{j},y^{(i)}_{j};\bs{\theta}^{*}(\mathcal{D})\big)
-
\ell\big(x^{(i)}_{j},y^{(i)}_{j};\bs{\theta}^{*}(\mathcal{S}_{-(i,1)})\big)
$}
}_{\text{Reducible Two-point Representation}}
\Bigg].
\end{equation}\vspace{-0.1em}It is worth noting that the reducible two-point representation in Eq.~\eqref{eq:two_point_representation1} only requires \textcolor{red}{$\mathcal{O}(h)$} lower-level trainings to compute the optimal parameters $\bs{\theta}^{}(\mathcal{S}_{-(i,1)}),\forall i\in[h]$ and $\bs{\theta}^{}(\mathcal{D})$. This is \emph{significantly smaller} than the \textcolor{blue}{$\mathcal{O}(h+\sum_{i=1}^{h} n_i)$} retrainings required by Eq.~\eqref{eq:two_point_representation}. Furthermore, we emphasize that the above Bayesian probabilistic framework is well motivated, since several prior works have rigorously shown that many neural networks can be modeled via Bayesian probabilistic model~\citep{wild2023rigorous,liu2019variational,jospin2022hands,gal2016dropout}.

%% file: ReSGD.tex
\section{Stochastic Gradient Descent via Reducible Two-point Representation}
Based on the reducible two-point representation in Eq.~\eqref{eq:two_point_representation1}, we now present a novel stochastic gradient descent algorithm, termed \texttt{ReSGD}, for multi-source bilevel data selection problem. 

As summarized in Algorithm~\ref{alg:reducible_sgd}, \texttt{ReSGD} first initializes a categorical prior $\bs{\pi}_i^{(0)}\in\Delta_{n_i}$ for each source $\mathcal{D}_{i}$ and trains a holdout model on the full dataset $\mathcal{D}$. After that, at every iteration $t$, \texttt{ReSGD} draws $K_i$ samples from each source $\mathcal{D}_i$ according to the current categorical prior $\bs{\pi}_i^{(t)}$. Then, for any $i\in[h]$,  we construct $\mathcal{S}_{-(i,1)}^{(t)}$ by excluding the first draw of $\mathcal{D}_i$, and train the corresponding model $\bs{\theta}^{*}(\mathcal{S}_{-(i,1)}^{(t)})$. With these trained models, \texttt{ReSGD} computes the stochastic gradient vector $\widehat{\bs{g}}_i^{(t)}\triangleq(\widehat{g}_{i,1}^{(t)},\ldots,\widehat{g}_{i,n_i}^{(t)})$ via Eq.~\eqref{eq:two_point_representation1}, where each coordinate is given by $ \widehat{g}_{i,j}^{(t)}
\triangleq
K_i\left[
\ell\big(x_j^{(i)},y_j^{(i)};\bs{\theta}^{*}(\mathcal{D})\big)
-\ell\big(x_j^{(i)},y_j^{(i)};\bs{\theta}^{*}(\mathcal{S}_{-(i,1)}^{(t)})\big)
\right]$. After obtaining $\widehat{\bs{g}}_i^{(t)}$, we update each categorical prior by projected gradient descent. Note that the symbol $\Pi_{\Delta_{n_i}}(\cdot)$ in Algorithm~\ref{alg:reducible_sgd}  denotes the Euclidean projection onto $\Delta_{n_i}$.  Finally, after running the above SGD iterations for $T$ steps, \texttt{ReSGD} draws $K_i$ data samples from each source $\mathcal{D}_i$ and returns their union.
\subsection{Approximation Guarantee under Weak Submodularity}This subsection aims to theoretically verify the effectiveness of our proposed \texttt{ReSGD}. Specifically, we will exploit an intrinsic combinatorial structure of our concerned multi-source data selection problem, known as \emph{weak submodularity}~\citep{chen2018weakly,das2018approximate,elenberg2018restricted,thiery2022two},  to establish an approximation guarantee for \texttt{ReSGD}.

Before delving into the details, we first introduce some commonly used notations. Specifically, for any set function $f:2^{\mathcal{D}}\rightarrow\mathbb{R}$, we denote the marginal gain of adding $\mathcal{B}$ to $\mathcal{A}$ by $f(\mathcal{B}|\mathcal{A})\triangleq f(\mathcal{A}\cup\mathcal{B})-f(\mathcal{A})$. Especially when $\mathcal{B}\triangleq\{e\}$ is a singleton, we simply write
$f(e\mid \mathcal{A})\triangleq f(\{e\}\mid \mathcal{A})$.
In addition,  we say that a set function $f$ is \emph{monotone} if and only if $f(\mathcal{A})\le f(\mathcal{B})$ for any $\mathcal{A}\subseteq \mathcal{B}\subseteq\mathcal{D}$. Since \emph{weak submodularity} is typically used to characterize \emph{utility maximization} problems, in the remainder of this subsection, we consider an equivalent utility function $U(\bs{\theta}^{*}(\mathcal{S}))$ of the upper-level loss $\mathcal{L}(\bs{\theta}^{*}(\mathcal{S}))$ in Eq.~\eqref{eq:multi}. Specifically, we define $U(\bs{\theta}^{*}(\mathcal{S}))\triangleq\mathcal{L}(\bs{\theta}_{\mathrm{init}})-\mathcal{L}(\bs{\theta}^{*}(\mathcal{S}))$, which measures the reduction in the upper-level loss brought by training on $\mathcal{S}$ w.r.t. the initial parameter setup $\bs{\theta}_{\mathrm{init}}$. 

With the above notations, we now formally introduce the concept of \emph{weak submodularity}.\begin{definition}[$(\gamma,\beta)$-Weak Submodularity]
\label{def:two_sided_weak_submodularity}
A utility set function $f:2^{\mathcal{D}}\rightarrow\mathbb{R}$ is called $(\gamma,\beta)$-weakly submodular if there exist $\gamma\in(0,1]$ and $\beta\ge 1$ such that, for any $\mathcal{A}\subseteq\mathcal{B}\subseteq\mathcal{D}$,
\vspace{-0.3em}
\begin{equation}
\begin{alignedat}{2}
    \sum_{e\in\mathcal{B}\setminus\mathcal{A}} f(e\mid\mathcal{A})
    &\ge \gamma\cdot f(\mathcal{B}\mid\mathcal{A}),
    \qquad&
    \sum_{e\in\mathcal{B}\setminus\mathcal{A}} f(e\mid \mathcal{B}\setminus\{e\})
    &\le \beta\cdot f(\mathcal{B}\mid \mathcal{A}).
\end{alignedat}\label{eq:two_sided_weak_submodularity}
\end{equation}\vspace{-0.5em}Here, $\gamma$ is called the lower submodularity ratio, while $\beta$ is called the upper submodularity ratio.
\end{definition}Intuitively, from the perspective of data selection, Eq.~\eqref{eq:two_sided_weak_submodularity} measures how closely the accumulated marginal gains of individual data points approximate the joint utility gain of selecting the entire subset. Furthermore, weak submodularity has been widely observed in a broad range of applications, including variable selection~\citep{das2018approximate,elenberg2018restricted}, video summarization~\citep{chen2018weakly,ijcai2022p666}, neural network pruning~\citep{el2022data,tofigh2025acli}, and sparse optimal transport~\citep{manupriya2024submodular,melis18a}. Next, we will show that such a combinatorial property also broadly appears in bilevel data selection problems.

\textbf{Warm-up: Bayesian Optimal Design~\citep{chaloner1995bayesian,hashemi2019submodular,rainforth2024modern}.} We consider the data selection problem for Bayesian linear regression. Specifically, we assume that the data is generated from a linear model $\bs{y}\triangleq X\bs{\theta}+\bs{\epsilon}$, where $\bs{y}\triangleq[y_1,\ldots,y_n]^{\top}\in\mathbb{R}^{n}$, 
$X\triangleq[x_1,\ldots,x_n]^{\top}\in\mathbb{R}^{n\times p}$, the noise vector $\bs{\epsilon}$ follows a Gaussian distribution $\mathcal{N}(\bs{0},\sigma^{2}\mathrm{I}_n)$ and the model parameter $\bs{\theta}$ is equipped with a Gaussian prior, i.e., $\bs{\theta}\sim\mathcal{N}(\bs{0},\Lambda)$. In this case, the lower-level optimization of bilevel data selection problem can be written as $\bs{\theta}^{*}(\mathcal{S})
\triangleq\mathop{\arg\min}_{\bs{\theta}}
\sum_{(x_i,y_i)\in\mathcal{S}}
(y_i-x_i^{\top}\bs{\theta})^{2}+\sigma^{2}\bs{\theta}^{\top}\Lambda^{-1}\bs{\theta}$, and our goal is to select a small subset $\mathcal{S}\subseteq\mathcal{D}$ such that the resulting posterior estimator has a small prediction error over the full dataset. Equivalently, we seek to minimize this expected upper-level loss function $\mathcal{L}(\bs{\theta}^{*}(\mathcal{S}))
\triangleq
\frac{1}{2n}\mathbb{E}_{\bs{\epsilon},\bs{\theta}}
\left[
\|\bs{y}-X\bs{\theta}^{*}(\mathcal{S})\|_2^{2}
\right]$. Note that \citep{borsos2020coresets,borsos2024data} recently showed that $\mathcal{L}(\bs{\theta}^{*}(\mathcal{S}))$ is closely related to the Bayesian V-optimal criterion
$\mathcal{L}_{\mathrm{V}}(\bs{\theta}^{*}(\mathcal{S}))
\triangleq
\frac{1}{2n}\mathbb{E}_{\bs{\epsilon},\bs{\theta}}[\|X\big(\bs{\theta}-\bs{\theta}^{*}(\mathcal{S})\big)\|_2^{2}]$. Specifically, when the number of data points $n$ is sufficiently large, these two objectives differ only by an additive constant $\sigma^{2}/2$. Therefore, minimizing $\mathcal{L}(\bs{\theta}^{*}(\mathcal{S}))$ can be approximately reduced to minimizing $\mathcal{L}_{\mathrm{V}}(\bs{\theta}^{*}(\mathcal{S}))$. More importantly, we further observe that the utility function induced by the Bayesian V-optimal criterion $\mathcal{L}_{\mathrm{V}}(\bs{\theta}^{*}(\mathcal{S}))$ is $(\gamma,\beta)$-\emph{weakly submodular},  as stated below.\begin{theorem}[Proof in  Appendix~\ref{appendix:thm:bayesian_v_weak_submodularity}]\label{thm:bayesian_v_weak_submodularity}
The utility function $U_{\mathrm{V}}(\mathcal{S})
\triangleq
\mathcal{L}_{\mathrm{V}}(\bs{0})
-\mathcal{L}_{\mathrm{V}}(\bs{\theta}^{*}(\mathcal{S}))$ induced by the Bayesian V-optimal criterion $\mathcal{L}_{\mathrm{V}}(\bs{\theta}^{*}(\mathcal{S}))$ is \textbf{monotone} and $(1/c,c)$-\textbf{weakly submodular}, where $c\triangleq1+\frac{s^{2}}{\sigma^{2}}\cdot\lambda_{max}(\Lambda)$, $s\triangleq\max_{i\in[n]}\|x_{i}\|_{2}$ and $\lambda_{\max}(\Lambda)$ is the largest eigenvalue of the matrix $\Lambda$.
\end{theorem}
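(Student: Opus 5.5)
We will first make $U_{\mathrm{V}}$ explicit. With Gaussian prior $\bs{\theta}\sim\mathcal{N}(\bs{0},\Lambda)$ and Gaussian noise, $\bs{\theta}^{*}(\mathcal{S})$ is the posterior mean. Hence $\mathbb{E}[(\bs{\theta}-\bs{\theta}^{*}(\mathcal{S}))(\bs{\theta}-\bs{\theta}^{*}(\mathcal{S}))^{\top}]$ is the posterior covariance
\begin{equation*}
\Sigma_{\mathcal{S}}\triangleq\Big(\Lambda^{-1}+\sigma^{-2}\sum_{i\in\mathcal{S}}x_ix_i^{\top}\Big)^{-1}.
\end{equation*}
Writing $M\triangleq\frac{1}{2n}X^{\top}X\succeq 0$, we get $\mathcal{L}_{\mathrm{V}}(\bs{\theta}^{*}(\mathcal{S}))=\mathrm{tr}(M\Sigma_{\mathcal{S}})$, while $\mathcal{L}_{\mathrm{V}}(\bs{0})=\mathrm{tr}(M\Lambda)=\mathrm{tr}(M\Sigma_{\emptyset})$. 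So $U_{\mathrm{V}}(\mathcal{S})=\mathrm{tr}(M(\Lambda-\Sigma_{\mathcal{S}}))$. For any $\mathcal{A}$ and $e\notin\mathcal{A}$, Sherman--Morrison gives the closed-form marginal gain
\begin{equation*}
U_{\mathrm{V}}(e\mid\mathcal{A})=\frac{x_e^{\top}\Sigma_{\mathcal{A}}M\Sigma_{\mathcal{A}}x_e}{\sigma^{2}+x_e^{\top}\Sigma_{\mathcal{A}}x_e}\ge 0.
\end{equation*}
This is nonnegative because $\Sigma_{\mathcal{A}}M\Sigma_{\mathcal{A}}\succeq 0$, which yields monotonicity immediately.

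For the two ratio inequalities, fix $\mathcal{A}\subseteq\mathcal{B}$ and let $V\triangleq[x_e]_{e\in\mathcal{B}\setminus\mathcal{A}}$. The Woodbury identity gives
\begin{equation*}
U_{\mathrm{V}}(\mathcal{B}\mid\mathcal{A})=\mathrm{tr}\big(M\Sigma_{\mathcal{A}}V(\sigma^{2}I+V^{\top}\Sigma_{\mathcal{A}}V)^{-1}V^{\top}\Sigma_{\mathcal{A}}\big).
\end{equation*}
The key estimates are two-sided Loewner bounds on the scalar and matrix denominators. Since $\Sigma_{\mathcal{A}}\preceq\Lambda$, we have $x_e^{\top}\Sigma_{\mathcal{A}}x_e\le s^{2}\lambda_{\max}(\Lambda)$, so $\sigma^{2}\le\sigma^{2}+x_e^{\top}\Sigma_{\mathcal{A}}x_e\le c\sigma^{2}$. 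This gives
\begin{equation*}
U_{\mathrm{V}}(e\mid\mathcal{A})\ge\frac{1}{c\sigma^{2}}\,x_e^{\top}\Sigma_{\mathcal{A}}M\Sigma_{\mathcal{A}}x_e.
\end{equation*}
Summing over $e$, the left side of the lower-ratio inequality is at least $\frac{1}{c\sigma^{2}}\mathrm{tr}(M\Sigma_{\mathcal{A}}VV^{\top}\Sigma_{\mathcal{A}})$. Conversely, $(\sigma^{2}I+V^{\top}\Sigma_{\mathcal{A}}V)^{-1}\preceq\sigma^{-2}I$, and conjugating by $\Sigma_{\mathcal{A}}V$ and pairing with $M\succeq0$ preserves the order. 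So $U_{\mathrm{V}}(\mathcal{B}\mid\mathcal{A})\le\sigma^{-2}\mathrm{tr}(M\Sigma_{\mathcal{A}}VV^{\top}\Sigma_{\mathcal{A}})$. Combining the two gives $\gamma=1/c$.

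The upper ratio $\beta=c$ is the step I expect to be the main obstacle, because the individual gains $U_{\mathrm{V}}(e\mid\mathcal{B}\setminus\{e\})$ are taken at the larger sets $\mathcal{B}\setminus\{e\}$, with different covariances $\Sigma_{\mathcal{B}\setminus\{e\}}$. I will rewrite everything relative to $\mathcal{B}$. By Sherman--Morrison in the reverse direction,
\begin{equation*}
\Sigma_{\mathcal{B}\setminus\{e\}}x_e=\frac{\sigma^{2}\,\Sigma_{\mathcal{B}}x_e}{\sigma^{2}-x_e^{\top}\Sigma_{\mathcal{B}}x_e}.
\end{equation*}
Substituting yields
\begin{equation*}
U_{\mathrm{V}}(e\mid\mathcal{B}\setminus\{e\})=\frac{\sigma^{2}\,x_e^{\top}\Sigma_{\mathcal{B}}M\Sigma_{\mathcal{B}}x_e}{\sigma^{2}-x_e^{\top}\Sigma_{\mathcal{B}}x_e}.
\end{equation*}
I will bound this above using $\sigma^{2}-x_e^{\top}\Sigma_{\mathcal{B}}x_e=\sigma^{4}/(\sigma^{2}+x_e^{\top}\Sigma_{\mathcal{B}\setminus\{e\}}x_e)\ge\sigma^{2}/c$. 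The result is $U_{\mathrm{V}}(e\mid\mathcal{B}\setminus\{e\})\le\frac{c}{\sigma^{2}}\,x_e^{\top}\Sigma_{\mathcal{B}}M\Sigma_{\mathcal{B}}x_e$, and summing gives $\frac{c}{\sigma^{2}}\mathrm{tr}(M\Sigma_{\mathcal{B}}VV^{\top}\Sigma_{\mathcal{B}})$. It remains to show
\begin{equation*}
U_{\mathrm{V}}(\mathcal{B}\mid\mathcal{A})\ge\sigma^{-2}\mathrm{tr}(M\Sigma_{\mathcal{B}}VV^{\top}\Sigma_{\mathcal{B}}).
\end{equation*}
For this I express $\Sigma_{\mathcal{A}}-\Sigma_{\mathcal{B}}$ from the $\mathcal{B}$ side: $\Sigma_{\mathcal{A}}-\Sigma_{\mathcal{B}}=\Sigma_{\mathcal{B}}V(\sigma^{2}I-V^{\top}\Sigma_{\mathcal{B}}V)^{-1}V^{\top}\Sigma_{\mathcal{B}}$, using $\Sigma_{\mathcal{A}}^{-1}=\Sigma_{\mathcal{B}}^{-1}-\sigma^{-2}VV^{\top}$. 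Then $(\sigma^{2}I-V^{\top}\Sigma_{\mathcal{B}}V)^{-1}\succeq\sigma^{-2}I$ finishes the bound. Before finalizing, I would verify the invertibility and positivity of $\sigma^{2}I-V^{\top}\Sigma_{\mathcal{B}}V$, which follows since $V^{\top}\Sigma_{\mathcal{B}}V\prec\sigma^{2}I$ when $\Sigma_{\mathcal{B}}^{-1}\succ\sigma^{-2}VV^{\top}$. I would also check that the constants match exactly $c=1+s^{2}\lambda_{\max}(\Lambda)/\sigma^{2}$.
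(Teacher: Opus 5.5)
Your proposal is correct and follows essentially the same route as the paper. You expand with Sherman--Morrison/Woodbury from the $\mathcal{A}$ side to get $\gamma=1/c$ and from the $\mathcal{B}$ side to get $\beta=c$, bounding the scalar denominators via $\Sigma_{\mathcal{S}}\preceq\Lambda$. The paper's identity $(1-x_e^{\top}\mathcal{M}_{\mathcal{B}}^{-1}x_e)^{-1}=1+x_e^{\top}\mathcal{M}_{\mathcal{B}\setminus\{e\}}^{-1}x_e\le c$ is exactly your $\sigma^{2}-x_e^{\top}\Sigma_{\mathcal{B}}x_e\ge\sigma^{2}/c$, just written with $\Sigma_{\mathcal{S}}=\sigma^{2}\mathcal{M}_{\mathcal{S}}^{-1}$; you also prove monotonicity explicitly, which the paper only asserts.

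One slip: your displayed formula for $U_{\mathrm{V}}(e\mid\mathcal{B}\setminus\{e\})$ has a spurious factor $\sigma^{2}$ in the numerator. It should read $x_e^{\top}\Sigma_{\mathcal{B}}M\Sigma_{\mathcal{B}}x_e/(\sigma^{2}-x_e^{\top}\Sigma_{\mathcal{B}}x_e)$. The bound $\frac{c}{\sigma^{2}}x_e^{\top}\Sigma_{\mathcal{B}}M\Sigma_{\mathcal{B}}x_e$ that you state next is nonetheless the correct one.
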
\vspace{-0.1em}Motivated by the results of Theorem~\ref{thm:bayesian_v_weak_submodularity}, we next establish an approximation guarantee for our proposed \texttt{ReSGD} algorithm under the  \emph{weak submodularity}.\vspace{-0.1em}\begin{theorem}[Proof in Appendix~\ref{appendix:approximation}]
\label{thm_SGD}Let $n\triangleq|\mathcal{D}|$, $K_{max}\triangleq\max_{i\in[h]}K_i$, learning rate $\eta
\triangleq
\sqrt{1/nK_{max}^{2}\cdot T}$ and $\mathcal{S}^{*}$ denotes the optimal solution of Eq.~\eqref{eq:multi}. If the utility function $U(\bs{\theta}^{*}(\mathcal{S}))
\triangleq
\mathcal{L}(\bs{\theta}_{\mathrm{init}})
-
\mathcal{L}(\bs{\theta}^{*}(\mathcal{S}))$ induced by Problem Eq.~\eqref{eq:multi} is $(\gamma,\beta)$-weakly  monotone submodular, then the subset $\mathcal{S}$ returned by Algorithm~\ref{alg:reducible_sgd} satisfies\vspace{-0.1em}\begin{equation*}
    \mathbb{E}\!\left[
U(\bs{\theta}^{*}(\mathcal{S}))
\right]
\ge
\left(\frac{\gamma^{2}}{\beta+\beta(1-\gamma)+\gamma^{2}}\right)
\cdot U(\bs{\theta}^{*}(\mathcal{S}^{*}))
-
\mathcal{O}\Big(
\sqrt{\frac{n\cdot K_{max}^{2}}{T}}
\Big).
\end{equation*}
\end{theorem}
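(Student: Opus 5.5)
The plan is to treat $G(\bs{\pi})\triangleq\mathbb{E}_{\mathcal{S}\sim\bs{\pi}}[U(\bs{\theta}^{*}(\mathcal{S}))]=\mathcal{L}(\bs{\theta}_{\mathrm{init}})-\Phi(\bs{\pi})$ as a continuous extension of $U$ over the product of sub-simplices. The argument has three parts: a \emph{stationarity-type inequality} for $G$ derived from weak submodularity, a standard nonconvex projected-SGD analysis turning approximate stationarity into the stated bound, and the lossless rounding property. Theorem~\ref{thm:two_point_representation} gives $\partial G/\partial\pi_{i,j}=K_i\,\mathbb{E}[U(e_{i,j}\mid\mathcal{S}_{-(i,1)})]$ with $e_{i,j}\triangleq(x^{(i)}_j,y^{(i)}_j)$. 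Hence the gradient is a scaled expected marginal gain, bounded in magnitude by $K_{max}$ times a bound on $U$.

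\textbf{Step 1 (key inequality).} For any $\bs{\pi}$, take the comparison point $\bs{\pi}^{*}$ that places mass $1/K_i$ on each of the $K_i$ elements of $\mathcal{S}^{*}\cap\mathcal{D}_i$. I will aim to show
\[
\langle \bs{\pi}^{*}-\bs{\pi},\nabla G(\bs{\pi})\rangle\ \ge\ \gamma\, U(\mathcal{S}^{*})-\Big(\tfrac{\beta+\beta(1-\gamma)}{\gamma}\Big)G(\bs{\pi})\quad(\text{up to rearrangement}),
\]
which rearranges to $G(\bs{\pi})\ge\frac{\gamma^2}{\beta+\beta(1-\gamma)+\gamma^2}U(\mathcal{S}^*)$ whenever the left side is $\le 0$.

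\emph{Positive part.} Write $\langle\bs{\pi}^{*},\nabla G\rangle=\sum_i\sum_{e\in\mathcal{S}^*_i}\mathbb{E}[U(e\mid\mathcal{S}_{-(i,1)})]$. Since $\mathcal{S}_{-(i,1)}\subseteq\mathcal{S}$, I would like to bound each term below by $\mathbb{E}[U(e\mid\mathcal{S})]$, but that requires submodularity. Instead I apply the lower ratio at $\mathcal{A}=\mathcal{S}_{-(i,1)}$ together with monotonicity, and couple the sources through a common draw to obtain a lower bound of roughly $\gamma\,\mathbb{E}[U(\mathcal{S}^*\cup\mathcal{S})-U(\mathcal{S})]\ge\gamma(U(\mathcal{S}^*)-G)$.

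\emph{Negative part.} By exchangeability of the draws, $\langle\bs{\pi},\nabla G\rangle=\sum_i\sum_k\mathbb{E}[U(d_{i,k}\mid\mathcal{S}_{-(i,k)})]$. This is bounded above by $\beta\,\mathbb{E}[U(\mathcal{S})]$ by the upper ratio with $\mathcal{A}=\emptyset$, $\mathcal{B}=\mathcal{S}$; repeated draws and $\emptyset$ draws contribute zero marginal gain. The extra $\beta(1-\gamma)$ term arises from the mismatch between conditioning on $\mathcal{S}_{-(i,1)}$ and on $\mathcal{S}$ when combining the two parts; I would track it via the lower ratio applied between $\mathcal{S}_{-(i,1)}$ and $\mathcal{S}$. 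This bookkeeping, which must produce exactly the constant $\beta+\beta(1-\gamma)$, is the main obstacle. If the direct route fails, I would try a boosted potential $\tilde G$ in the spirit of boosting-based analyses of weakly submodular functions.

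\textbf{Step 2 (SGD).} The estimator in Eq.~\eqref{eq:two_point_representation1} is unbiased, with per-coordinate magnitude $O(K_{max})$, so $\mathbb{E}\|\widehat{\bs{g}}\|^2=O(nK_{max}^2)$. The product of sub-simplices has diameter $O(\sqrt{h})\le O(\sqrt{n})$. Expanding $\|\bs{\pi}^{(t+1)}-\bs{\pi}^{*}\|^2$ with nonexpansiveness of the projection and telescoping, I obtain
\[
\frac1T\sum_t\mathbb{E}\langle\bs{\pi}^{*}-\bs{\pi}^{(t)},\nabla G(\bs{\pi}^{(t)})\rangle\le \frac{D^2}{2\eta T}+\frac{\eta}{2}\,nK_{max}^2.
\]
With the prescribed $\eta$, the right side is $O(\sqrt{nK_{max}^2/T})$. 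Plugging in Step 1 and averaging gives the bound for $\frac1T\sum_t\mathbb{E}G(\bs{\pi}^{(t)})$; the inequality in Step 1 must hold linearly in $G$, which is why I write it in that form.

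\textbf{Step 3 (rounding).} The output subset is sampled from $\bs{\pi}^{(\tau)}$ with $\tau$ uniform, so $\mathbb{E}[U(\mathcal{S})]=\frac1T\sum_t\mathbb{E}\,G(\bs{\pi}^{(t)})$ by the lossless-in-expectation property stated after Eq.~\eqref{eq:new_prob_relaxation}. Dividing the constant from Step 1 through completes the proof.
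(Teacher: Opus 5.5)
Your Steps 2 and 3 coincide with the paper's argument: comparator $\bs{\pi}_{\mathcal{S}^{*}}$, nonexpansive projection, telescoping, and the uniformly random output index $\tau$. Step 1, however, carries all the content, and it has a genuine gap that you only partly acknowledge.

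First, the inequality you target, $\langle\bs{\pi}^{*}-\bs{\pi},\nabla G(\bs{\pi})\rangle\ge\gamma U(\mathcal{S}^{*})-\frac{\beta+\beta(1-\gamma)}{\gamma}G(\bs{\pi})$, does not rearrange to the stated constant. Where the left side is nonpositive it gives $G\ge\frac{\gamma^{2}}{\beta+\beta(1-\gamma)}U(\mathcal{S}^{*})$, a strictly stronger claim that nothing in your sketch supports.

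Second, the positive-part argument breaks down as soon as $h\ge 2$. The lower ratio at $\mathcal{A}=\mathcal{S}_{-(i,1)}$ only yields $\sum_{e\in\mathcal{S}^{*}_i}U(e\mid\mathcal{S}_{-(i,1)})\ge\gamma\,U(\mathcal{S}^{*}_i\mid\mathcal{S}_{-(i,1)})$ source by source, and the base sets $\mathcal{S}_{-(i,1)}$ differ across $i$. Merging $\sum_i U(\mathcal{S}^{*}_i\mid\mathcal{S}_{-(i,1)})$ into $U(\mathcal{S}^{*}\mid\mathcal{S})$ is a diminishing-returns step that $(\gamma,\beta)$-weak submodularity does not provide. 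Coupling the sources through a common draw does not make the base sets coincide. For $h=1$ your route does work, since all terms share the base $\mathcal{S}_{-(1,1)}\subseteq\mathcal{S}$ and monotonicity finishes it.

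The missing idea is to move every term to the common base $\mathcal{S}$ at the price of a second factor $\gamma$. Since $\mathcal{S}=\mathcal{S}_{-(i,1)}\cup\{d_{i,1}\}$, the lower ratio applied to the pair $\{e,d_{i,1}\}$ over $\mathcal{S}_{-(i,1)}$ gives $U(e\mid\mathcal{S}_{-(i,1)})\ge\gamma\,U(e\mid\mathcal{S})-(1-\gamma)\,U(d_{i,1}\mid\mathcal{S}_{-(i,1)})$. The paper states this with the $k$-th element of $\mathcal{S}^{*}\cap\mathcal{D}_i$ paired with the $k$-th draw $d_{i,k}$, via exchangeability.

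Now all the first terms share the base $\mathcal{S}$, so the lower ratio plus monotonicity give $\gamma\sum_{e\in\mathcal{S}^{*}}\mathbb{E}[U(e\mid\mathcal{S})]\ge\gamma^{2}(U(\mathcal{S}^{*})-G)$. The penalties sum to at most $(1-\gamma)\sum_iK_i\,\mathbb{E}[U(d_{i,1}\mid\mathcal{S}_{-(i,1)})]=(1-\gamma)\langle\bs{\pi},\nabla G\rangle\le(1-\gamma)\beta G$. This uses exchangeability and the upper ratio with $\mathcal{A}=\emptyset$, and it is the very quantity you bound in your negative part.

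Subtracting that negative part ($\le\beta G$) gives exactly $\langle\bs{\pi}_{\mathcal{S}^{*}}-\bs{\pi},\nabla G(\bs{\pi})\rangle\ge\gamma^{2}U(\mathcal{S}^{*})-(\beta+\beta(1-\gamma)+\gamma^{2})G(\bs{\pi})$, which is the linear form your Step 2 needs. So the $\beta(1-\gamma)$ term is not a correction to the negative part, as you suggest. It is produced inside the positive part together with the second factor $\gamma$. The extra $+\gamma^{2}$ in the denominator comes from the $-\gamma^{2}G$ in $\gamma^{2}(U(\mathcal{S}^{*})-G)$.
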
\vspace{-0.2em}\begin{remark}
Theorem~\ref{thm_SGD} implies that, when the induced utility function 
$U(\bs{\theta}^{*}(\mathcal{S}))$ is $(\gamma,\beta)$-weakly  monotone submodular, if we set 
$T\triangleq\mathcal{O}(nK_{\max}^{2}/\epsilon^{2})$, the subset yielded by our proposed Algorithm~\ref{alg:reducible_sgd}  can secure a utility value of $(\rho\cdot U(\bs{\theta}^{*}(\mathcal{S}^{*}))-\epsilon)$ where $\rho\triangleq
\frac{\gamma^{2}}{\beta+\beta(1-\gamma)+\gamma^{2}}$.
 Equivalently, in terms of the original loss function, we have $\mathbb{E}\!\left[
\mathcal{L}(\bs{\theta}^{*}(\mathcal{S}))
\right]
\le
\rho\cdot \mathcal{L}(\bs{\theta}^{*}(\mathcal{S}^{*}))
+(1-\rho)\cdot \mathcal{L}(\bs{\theta}_{\mathrm{init}})
+\epsilon$.
\end{remark}

%% file: experiment.tex
\section{Experiments} \label{sec:experiments}
This section empirically validates the effectiveness of our proposed \texttt{ReSGD} algorithm via two different applications: Data summarization~\citep{borsos2024data,mirzasoleiman2013distributed,JMLR:v21:19-467} and Replay-based continual learning~\citep{borsos2020coresets,hao2024bilevel,tong2025coreset,zhou2022probabilistic}.

\begin{figure}[htbp]
    \centering
    \begin{subfigure}[b]{0.24\textwidth}
        \centering
        \includegraphics[width=\textwidth]{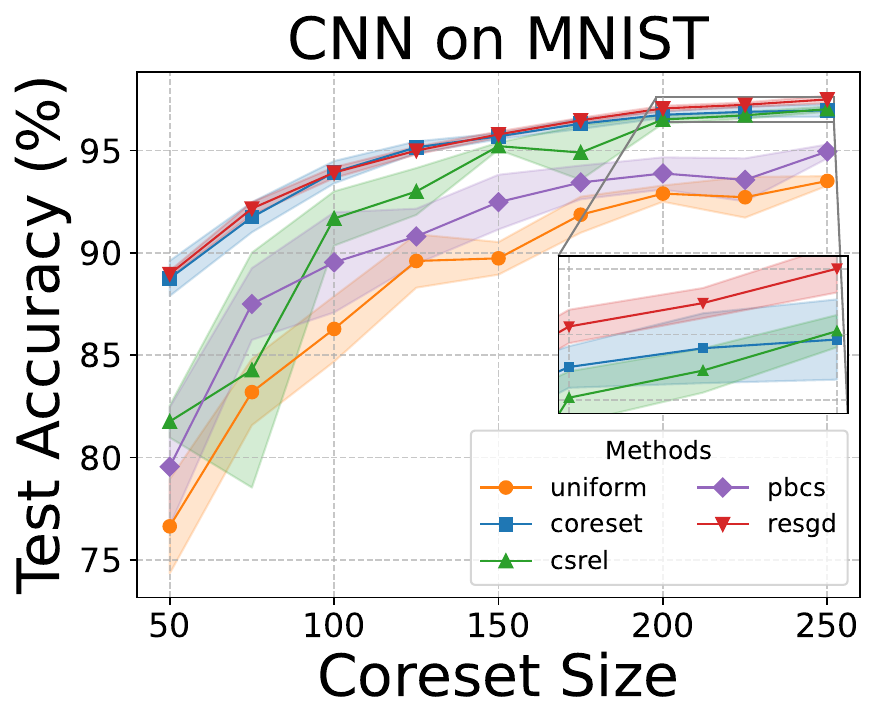}
        \label{fig:sub1}
    \end{subfigure}
    \hfill
    \begin{subfigure}[b]{0.24\textwidth}
        \centering
        \includegraphics[width=\textwidth]{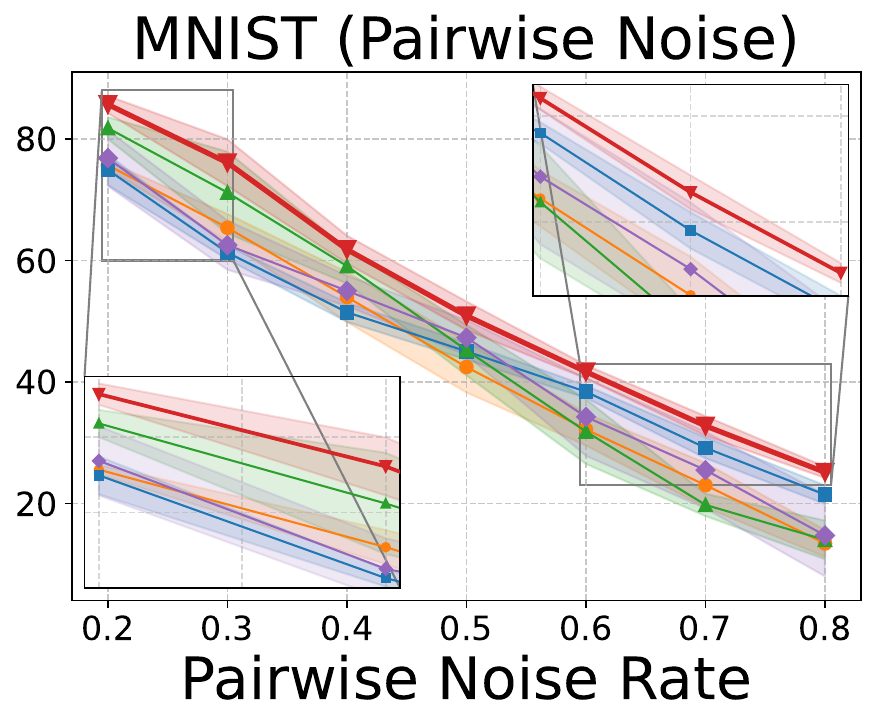}
        \label{fig:sub2}
    \end{subfigure}
    \hfill
    \begin{subfigure}[b]{0.24\textwidth}
        \centering
        \includegraphics[width=\textwidth]{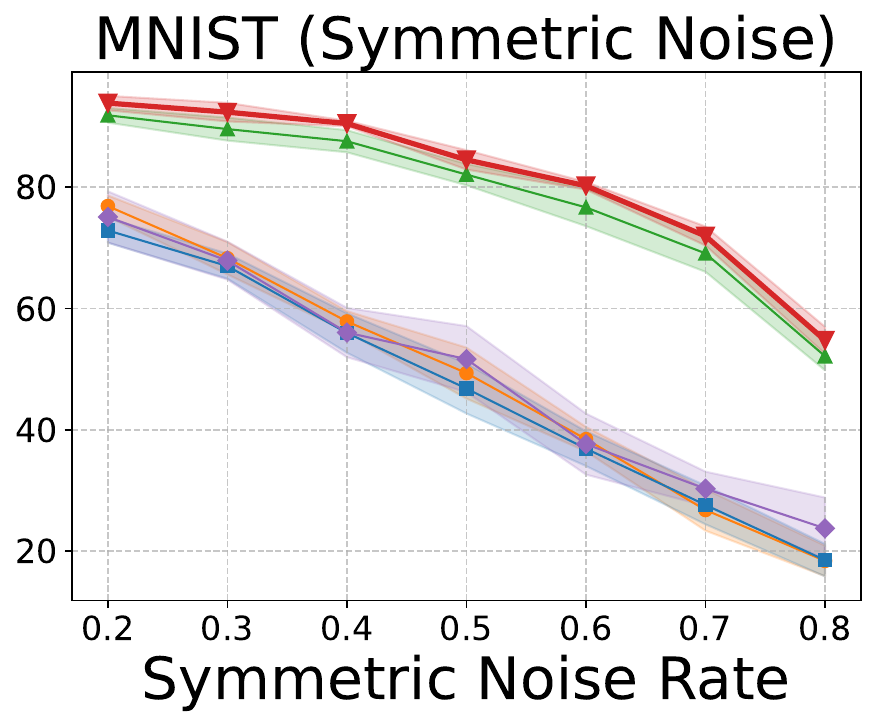}
        \label{fig:sub3}
    \end{subfigure}
    \hfill
    \begin{subfigure}[b]{0.24\textwidth}
        \centering
        \includegraphics[width=\textwidth]{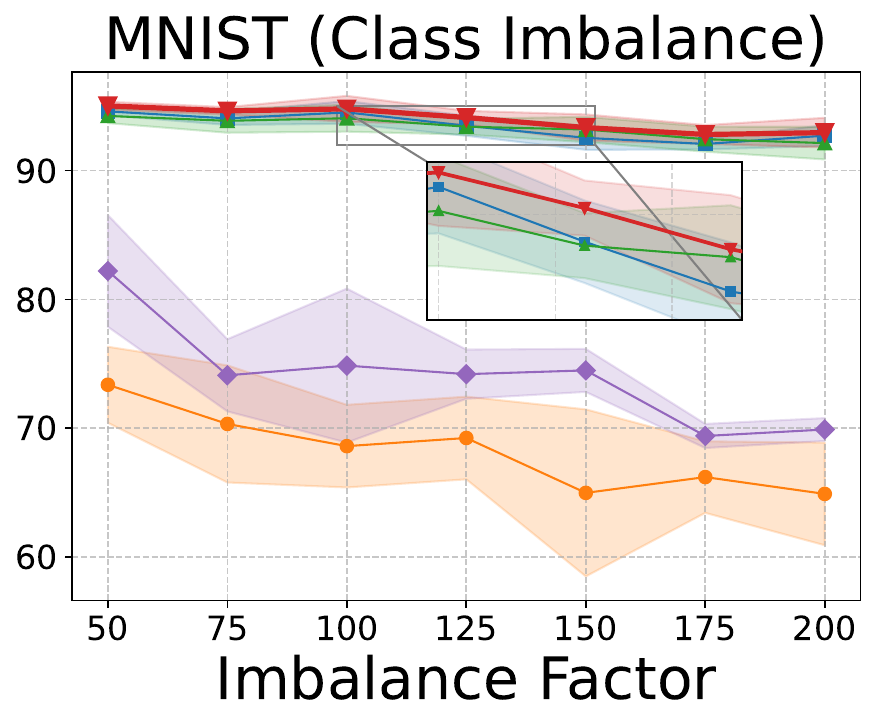}
        \label{fig:sub4}
    \end{subfigure}

    \begin{subfigure}[b]{0.24\textwidth}
        \centering
        \includegraphics[width=\textwidth]{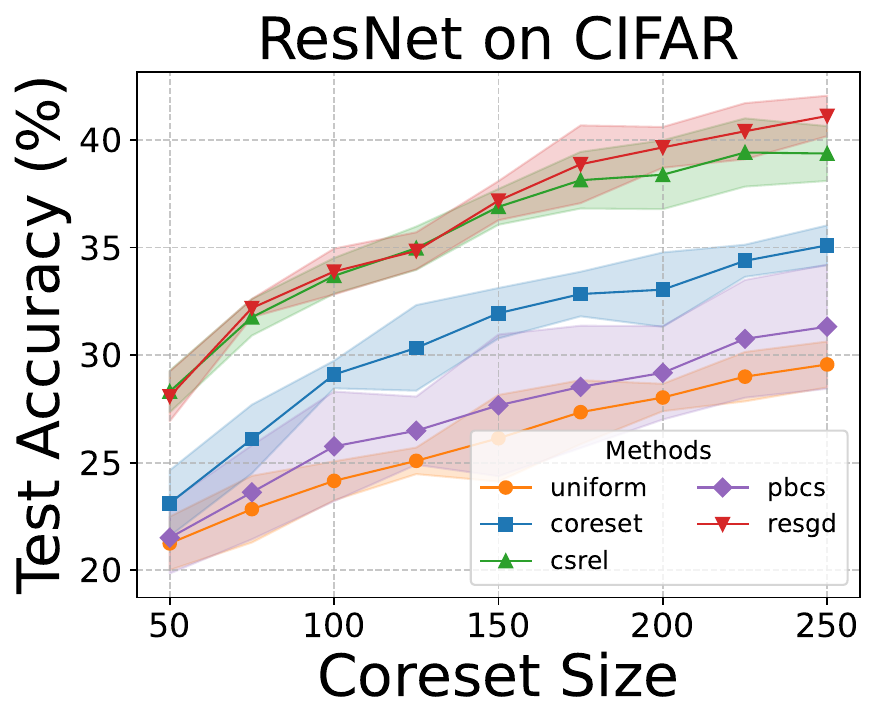}
        \label{fig:sub5}
    \end{subfigure}
    \hfill
    \begin{subfigure}[b]{0.24\textwidth}
        \centering
        \includegraphics[width=\textwidth]{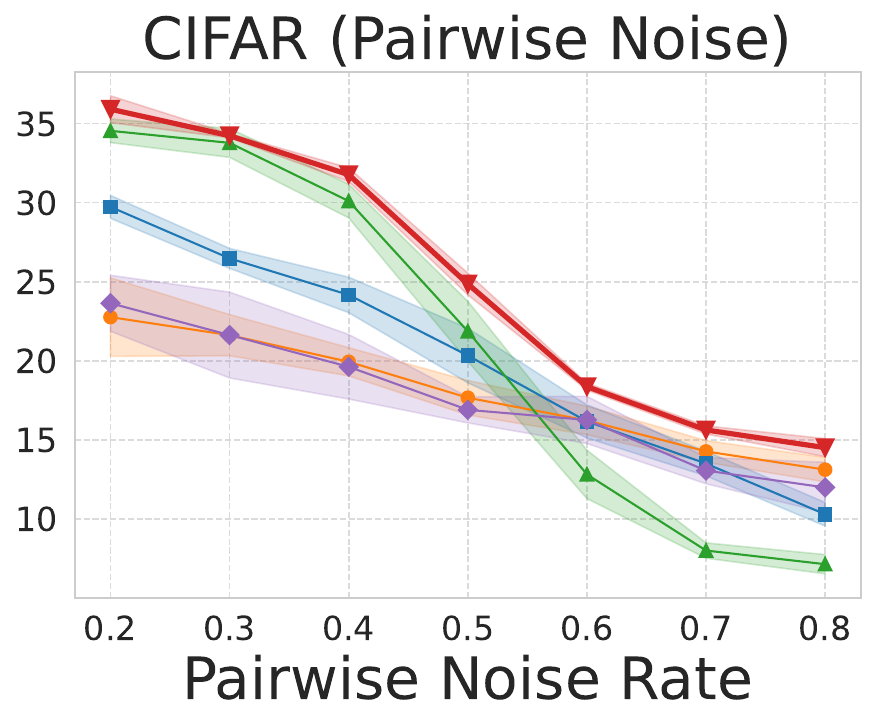}
        \label{fig:sub6}
    \end{subfigure}
    \hfill
    \begin{subfigure}[b]{0.24\textwidth}
        \centering
        \includegraphics[width=\textwidth]{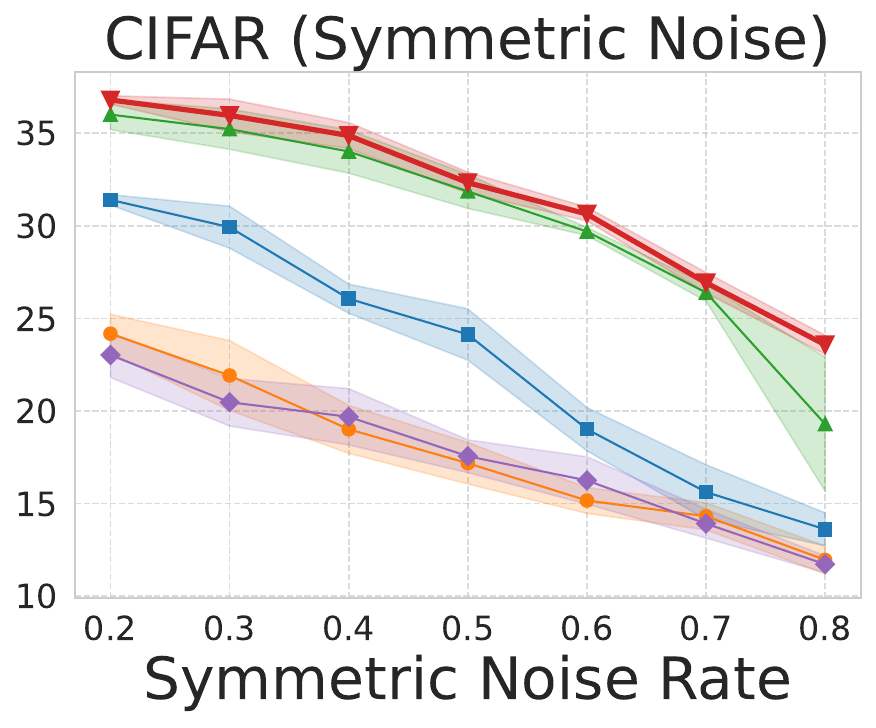}
        \label{fig:sub7}
    \end{subfigure}
    \hfill
    \begin{subfigure}[b]{0.24\textwidth}
        \centering
        \includegraphics[width=\textwidth]{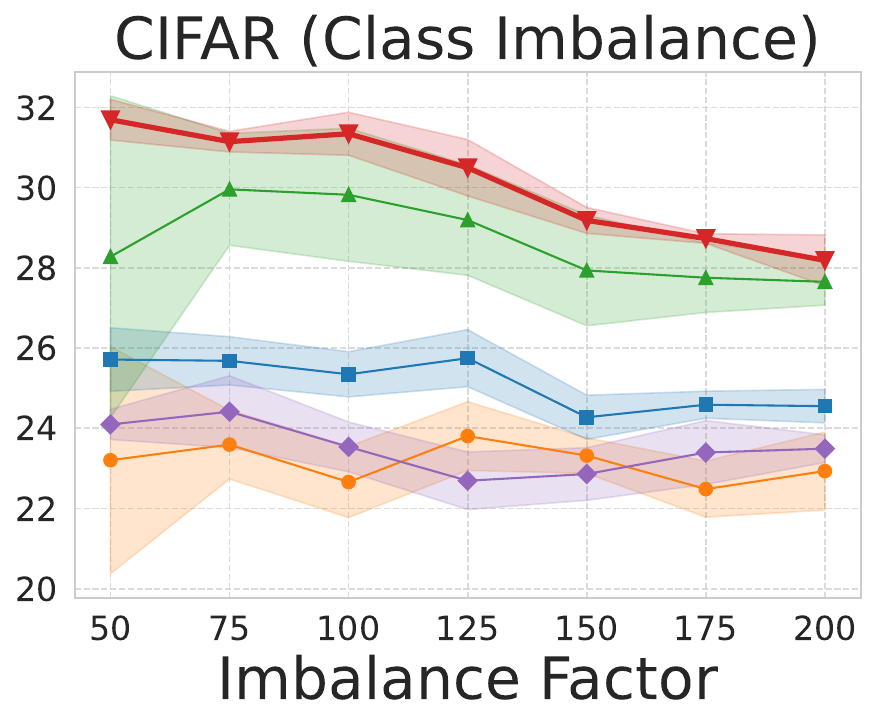}
        \label{fig:sub8}
    \end{subfigure}
   \vspace{-0.5em}\captionsetup{font=small}
    \caption{Performance comparison between \texttt{ReSGD} and other baselines on data summarization, illustrating the superior performance and stability of \texttt{ReSGD} under various data corruption scenarios.}
    \label{fig:total_plots}
\end{figure}\vspace{-0.5em}\textbf{Data Summarization.} We conduct experiments on three benchmarks: MNIST~\cite{deng2012mnist}, CIFAR-10~\cite{krizhevsky2009learning}, and CIFAR-100~\cite{krizhevsky2009learning}. To ensure a fair evaluation, we follow the model architecture and settings of~\cite{borsos2020coresets, tong2025coreset, zhou2022probabilistic}. We compare our \texttt{ReSGD} against a set of competitive baselines, including Greedy Coreset~\cite{borsos2020coresets}, PBCS~\cite{zhou2022probabilistic}, and CSRel~\cite{tong2025coreset}. Additionally, uniform sampling is included to provide a lower-bound performance reference. The experimental details are provided in the Appendix~\ref{appendix:experimental details}. Furthermore, to showcase the effectiveness of \texttt{ReSGD} in more practical and challenging scenarios, we follow the data corruptions via label noise (pairwise and symmetric) and class imbalance in~\cite{zhou2022probabilistic}. The details are deferred to Appendix~\ref{appendix:data_coruption}. As shown in Figure~\ref{fig:total_plots}, \texttt{ReSGD} consistently outperforms all baselines in terms of test accuracy across different coreset sizes, while exhibiting stronger \emph{robustness} to data corruptions, including label noise and class imbalance. Similar trends are observed on CIFAR-100, with detailed results provided in Appendix~\ref{appendix:ds_cifar100}.

\textbf{Continual Learning.} We evaluate \texttt{ReSGD} on Split MNIST~\cite{zenke2017continual}, Perm MNIST~\cite{goodfellow2013empirical}, Split CIFAR-10, and Split CIFAR-100. To ensure a fair evaluation, we follow the model architecture and settings of~\cite{borsos2020coresets, tong2025coreset, zhou2022probabilistic}. We set the memory size to 100 for Split MNIST and Perm MNIST, and 200 for Split CIFAR-10 and Split CIFAR-100. For the Split CIFAR-100 dataset, we split a total of 100 classes into 10 disjoint tasks and use ResNet-18~\cite{he2016deep} as the backbone. Detailed experiment settings and hyperparameters are shown in Appendix~\ref{appendix:experimental details}. The baselines include Uniform, $k$-Means Features~\cite{nguyen2017variational}, $k$-Center Embedding~\cite{sener2017active}, Hardest~\cite{aljundi2019task}, iCaRL~\cite{rebuffi2017icarl}, OCS ~\cite{yoon2022online}, BCSR~\cite{hao2024bilevel}, Greedy Coreset~\cite{borsos2020coresets}, PBCS~\cite{zhou2022probabilistic}, and CSReL-CL-Prv~\cite{tong2025coreset}. The average accuracies in Table~\ref{table:cl-result} show that our method performs on par with Greedy Coreset on the MNIST datasets and consistently outperforms all other baselines on the more complex datasets (Split CIFAR-10 and Split CIFAR-100), highlighting the superiority of our approach. Additionally, the experimental results indicate that considering previous tasks can further enhance CL performance. Further demonstrations of the effectiveness and compatibility of our method with other CL approaches are deferred to Appendix~\ref{appendix:cs_ex_cl}.

\textbf{Further Analysis.} We conduct effectiveness study in Appendix~\ref{appendix:adversarial} and meanwhile provide a detailed sensitivity analysis in Appendix~\ref{appendix:experimental details}.

\begin{table*}[t]
\small
\centering
\caption{{Performance comparison on continual learning benchmarks. We mark the Best results in bold. Our memory construction method performs on par with Greedy Coreset on the MNIST dataset and consistently outperforms other baselines on more complex datasets.}}
\scriptsize{
\resizebox{\linewidth}{!}{
\setlength\tabcolsep{3.pt}
\renewcommand\arraystretch{1.1}
\begin{tabular}{r||cccc}
\hline\thickhline
\rowcolor{gray!20}
\textbf{Method}      & \textbf{PermMNIST} & \textbf{SplitMNIST} & \textbf{SplitCIFAR-10} & \textbf{SplitCIFAR-100} \\
\hline\hline
Uniform              & 91.82 $\pm$ 0.94 & 78.09 $\pm$ 0.80 & 33.66 $\pm$ 1.11 & 7.06 $\pm$ 0.40 \\
\rowcolor{gray!10} 
$k$-Means Features   & 93.22 $\pm$ 1.47 & 78.29 $\pm$ 0.59 & 34.38 $\pm$ 1.26 & 8.05 $\pm$ 0.54 \\
$k$-Center Embedding & 93.69 $\pm$ 0.59 & 78.70 $\pm$ 0.56 & 34.07 $\pm$ 2.14 & 8.97 $\pm$ 0.47 \\
\rowcolor{gray!10} 
Hardest              & 88.17 $\pm$ 1.44 & 77.04 $\pm$ 0.84 & 25.35 $\pm$ 1.48 & 6.43 $\pm$ 0.37 \\
iCaRL                & 93.68 $\pm$ 0.45 & 79.62 $\pm$ 0.50 & 33.26 $\pm$ 0.82 & 8.31 $\pm$ 0.22 \\
\rowcolor{gray!10} 
OCS  & 84.86 $\pm$ 2.69 & 75.07 $\pm$ 0.91 & 34.12 $\pm$ 1.94 & 9.27 $\pm$ 0.45  \\

BCSR & 93.81 $\pm$ 0.91 & 78.30 $\pm$ 0.81 & 35.14 $\pm$ 3.64 & 12.11 $\pm$ 1.24  \\

\rowcolor{gray!10} 
Greedy Coreset       & \textbf{95.67} $\pm$ \textbf{0.21} & 79.31 $\pm$ 0.57 & 34.88 $\pm$ 2.76 & 8.41 $\pm$ 0.25 \\
PBCS                 & 94.22 $\pm$ 0.61 & 76.30 $\pm$ 0.81 & 34.07 $\pm$ 2.14 & 13.02 $\pm$ 0.29 \\
\rowcolor{gray!10} 
CSReL-CL-Prv         & 95.02 $\pm$ 0.23 & 78.64 $\pm$ 0.36 & 36.82 $\pm$ 0.95 & 15.49 $\pm$ 0.84 \\
\hline
\rowcolor[HTML]{D7F6FF}
\texttt{ReSGD}                & \textbf{95.72} $\pm$ \textbf{0.12} &\textbf{80.03} $\pm$ \textbf{0.17} & \textbf{37.68} $\pm$ \textbf{0.13} &\textbf{16.70} $\pm$ \textbf{0.12} \\
\hline
\end{tabular}}}
\vspace{-10pt}
\label{table:cl-result}
\end{table*}

%% file: conclusion.tex
\section{Conclusion}
In this paper, we present a novel probabilistic relaxation for the multi-source data selection problem, which only assigns probability mass to \emph{feasible} subsets and thus naturally induces a \emph{lossless} rounding scheme. 
Most importantly, we show that this relaxation admits an effective two-point gradient estimator, avoiding both the inverse-Hessian computation in \emph{0-1 relaxation} and the unbounded score in prior \emph{probabilistic relaxation}. 
From a Bayesian perspective, we further derive a reducible representation that reduces the gradient-computation cost from $\mathcal{O}(h+\sum_{i=1}^{h} n_i)$ retrainings to $\mathcal{O}(h)$, where $h$ is the number of sources and $\sum_{i=1}^{h} n_i$ is the total sample size. 
Building upon this reducible representation, we develop a stochastic algorithm, termed \texttt{ReSGD} and prove that, under $(\gamma,\beta)$-weak submodularity, its returned subset achieves a $\big(\frac{\gamma^{2}}{\beta+\beta(1-\gamma)+\gamma^{2}}\big)$-approximation guarantee with respect to the optimal subset $\mathcal{S}^{*}$. Finally, extensive experiments on data summarization and continual learning validate the effectiveness of our \texttt{ReSGD}.

%% file: appendix_weak-sub.tex
\section{Proof of Theorem~\ref{thm:bayesian_v_weak_submodularity} and the Equivalence of the V-Optimal Criterion}\label{appendix:thm:bayesian_v_weak_submodularity}
\subsection{Proof of Theorem~\ref{thm:bayesian_v_weak_submodularity}}
In this subsection, we show the detailed verification of Theorem~\ref{thm:bayesian_v_weak_submodularity}. Before that, we recall the bilevel data selection problem for Bayesian linear regression. Specifically, the data is generated from a linear model $\bs{y}\triangleq X\bs{\theta}+\bs{\epsilon}$, $\bs{\epsilon}$, where $\bs{y}\triangleq[y_1,\ldots,y_n]^{\top}\in\mathbb{R}^{n}$, 
$X\triangleq[x_1,\ldots,x_n]^{\top}\in\mathbb{R}^{n\times p}$, the noise vector $\bs{\epsilon}$ follows a Gaussian distribution $\mathcal{N}(\bs{0},\sigma^{2}\mathrm{I}_n)$ and the model parameter $\bs{\theta}$ is equipped with a Gaussian prior, i.e., $\bs{\theta}\sim\mathcal{N}(\bs{0},\Lambda)$.  In this case, the lower-level optimization of bilevel data selection problem can be written as $\bs{\theta}^{*}(\mathcal{S})
\triangleq\mathop{\arg\min}_{\bs{\theta}}
\sum_{(x_i,y_i)\in\mathcal{S}}
(y_i-x_i^{\top}\bs{\theta})^{2}+\sigma^{2}\bs{\theta}^{\top}\Lambda^{-1}\bs{\theta}$, and our goal is to select a small subset $\mathcal{S}\subseteq\mathcal{D}\triangleq\{x_1,\dots,x_n\}$ such that the resulting posterior estimator $\bs{\theta}^{*}(\mathcal{S})$ has a small prediction error over the full dataset. More importantly, from Bayesian statistical theory~\citep{chaloner1995bayesian,hashemi2019submodular,rainforth2024modern}, we can show that $\bs{\theta}^{*}(\mathcal{S})$  has the following closed form:
\begin{equation*}
 \bs{\theta}^{*}(\mathcal{S})\triangleq(X_{\mathcal{S}}^{T}X_{\mathcal{S}}+\sigma^{2}\Lambda^{-1})^{-1}X_{\mathcal{S}}^{T}\bs{y}_{\mathcal{S}},   
\end{equation*} where $X_{\mathcal{S}}$ and $\bs{y}_{\mathcal{S}}$ denotes the matrix $X$ and the vector $\bs{y}$ restricted to rows induced by $\mathcal{S}$, respectively.

Since the proof of Theorem~\ref{thm:bayesian_v_weak_submodularity} will repeatedly involve matrix inverses, we begin by recalling the classical Sherman-Morrisson-Woodbury formula, that is to say,
\begin{lemma}[Sherman-Morrisson-Woodbury formula~\citep{horn2012matrix,hager1989updating}]Let $A$, $U$, $C$ and $V$ be matrices of conformable
sizes. Then, $(A + UCV)^{-1}\triangleq A^{-1}-A^{-1}U(C^{-1}+VA^{-1}U)^{-1}VA^{-1}$.
\end{lemma}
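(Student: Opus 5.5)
The plan is to verify the identity directly: multiply $A+UCV$ by the proposed right-hand side and check that the product is the identity. Throughout we assume what the formula tacitly requires: $A$ is $n\times n$, $C$ is $k\times k$, $U$ is $n\times k$ and $V$ is $k\times n$, and $A$, $C$ and $W\triangleq C^{-1}+VA^{-1}U$ are all invertible. Under these assumptions a one-sided inverse of the square matrix $A+UCV$ is automatically two-sided, so the product will establish both that $A+UCV$ is invertible and that its inverse is the stated expression.

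Set $R\triangleq A^{-1}-A^{-1}UW^{-1}VA^{-1}$. Expanding $(A+UCV)R$ gives four terms:
\begin{equation*}
(A+UCV)R = I + UCVA^{-1} - UW^{-1}VA^{-1} - UCVA^{-1}UW^{-1}VA^{-1}.
\end{equation*}
The key step is to group the last two terms by factoring $U$ on the left and $W^{-1}VA^{-1}$ on the right:
\begin{equation*}
-U\big(I + CVA^{-1}U\big)W^{-1}VA^{-1}.
\end{equation*}
Next we rewrite $I+CVA^{-1}U = C\big(C^{-1}+VA^{-1}U\big) = CW$. The grouped term then becomes $-UCWW^{-1}VA^{-1} = -UCVA^{-1}$, which cancels the second term exactly. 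Hence $(A+UCV)R=I$, and therefore $R=(A+UCV)^{-1}$.

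No step is a genuine obstacle. The only point needing care is the factorization $I+CVA^{-1}U=CW$, which is where the invertibility of $C$ enters, together with the invertibility hypotheses being made explicit. If desired, one can also check the product in the other order, $R(A+UCV)=I$, symmetrically via $I+VA^{-1}UC=WC$. In the subsequent application to Theorem~\ref{thm:bayesian_v_weak_submodularity}, these hypotheses hold automatically. There $A=X_{\mathcal{S}}^{T}X_{\mathcal{S}}+\sigma^{2}\Lambda^{-1}$ is positive definite, $C$ is the identity (or a scalar for a single data point $x$), and $U=V^{T}$, so $W=I+VA^{-1}V^{T}$ is positive definite and hence invertible.
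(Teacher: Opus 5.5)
Your verification is correct. Expanding $(A+UCV)R$, grouping the last two terms as $-U(I+CVA^{-1}U)W^{-1}VA^{-1}$, and using $I+CVA^{-1}U=CW$ leaves exactly $I$, and squareness turns the right inverse into a two-sided one.

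The paper gives no proof of this lemma. It simply quotes the formula from Horn--Johnson and Hager, under the sole hypothesis of ``conformable sizes'', which is not enough for the right-hand side even to be defined. Your route therefore gives two things the citation does not. First, it is a self-contained three-line argument. Second, it lists the hypotheses actually needed ($A$, $C$ and $W=C^{-1}+VA^{-1}U$ invertible), and obtains invertibility of $A+UCV$ as a conclusion rather than an assumption.

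One correction to your closing remark about the application. The proof of Theorem~\ref{thm:bayesian_v_weak_submodularity} uses the formula in update form, but also in downdate form. It does so for $(\mathcal{M}_{\mathcal{B}}-X_{\mathcal{R}}^{T}X_{\mathcal{R}})^{-1}$ and, with $\mathcal{R}=\{e\}$, for $U_{\mathrm{V}}(\mathcal{B})-U_{\mathrm{V}}(\mathcal{B}\setminus\{e\})$. There $C=-I$ (resp.\ $C=-1$), so $W=-(I-X_{\mathcal{R}}\mathcal{M}_{\mathcal{B}}^{-1}X_{\mathcal{R}}^{T})$ is negative definite, not of the form $I+VA^{-1}V^{T}$.

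The hypotheses still hold, but for a different reason. Apply the lemma itself with $A=I$, $U=X_{\mathcal{R}}$, $C=\mathcal{M}_{\mathcal{A}}^{-1}$, $V=X_{\mathcal{R}}^{T}$, for which $W=\mathcal{M}_{\mathcal{B}}\succ 0$. This gives $I-X_{\mathcal{R}}\mathcal{M}_{\mathcal{B}}^{-1}X_{\mathcal{R}}^{T}=(I+X_{\mathcal{R}}\mathcal{M}_{\mathcal{A}}^{-1}X_{\mathcal{R}}^{T})^{-1}\succ 0$, so the downdate $W$ is invertible. This affects only your side remark, not the proof of the lemma.
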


With all above concepts and formula, we next prove Theorem~\ref{thm:bayesian_v_weak_submodularity}.
\begin{proof}
We first simplify the Bayesian V-optimal criterion $\mathcal{L}_{\mathrm{V}}(\bs{\theta}^{*}(\mathcal{S}))
\triangleq
\frac{1}{2n}\mathbb{E}_{\bs{\epsilon},\bs{\theta}}[\|X\big(\bs{\theta}-\bs{\theta}^{*}(\mathcal{S})\big)\|_2^{2}]$, namely,
\begin{equation*}
 \begin{aligned}
 &\mathcal{L}_{\mathrm{V}}(\bs{\theta}^{*}(\mathcal{S}))\triangleq
\frac{1}{2n}\mathbb{E}_{\bs{\epsilon},\bs{\theta}}[\|X\big(\bs{\theta}^{*}(\mathcal{S})-\bs{\theta}\big)\|_2^{2}]\\
&=\frac{1}{2n}\mathbb{E}_{\bs{\epsilon},\bs{\theta}}[\|X\big((X_{\mathcal{S}}^{T}X_{\mathcal{S}}+\sigma^{2}\Lambda^{-1})^{-1}X_{\mathcal{S}}^{T}\bs{y}_{\mathcal{S}}-\bs{\theta}\big)\|_2^{2}]\\
&=\frac{1}{2n}\mathbb{E}_{\bs{\epsilon},\bs{\theta}}[\|X\big((X_{\mathcal{S}}^{T}X_{\mathcal{S}}+\sigma^{2}\Lambda^{-1})^{-1}X_{\mathcal{S}}^{T}\big(X_{\mathcal{S}}\bs{\theta}+\bs{\epsilon}_{\mathcal{S}}\big)-\bs{\theta}\big)\|_2^{2}]\\
&=\frac{1}{2n}\mathbb{E}_{\bs{\epsilon},\bs{\theta}}[\|X\big((X_{\mathcal{S}}^{T}X_{\mathcal{S}}+\sigma^{2}\Lambda^{-1})^{-1}X_{\mathcal{S}}^{T}X_{\mathcal{S}}\bs{\theta}-\bs{\theta}+X\big((X_{\mathcal{S}}^{T}X_{\mathcal{S}}+\sigma^{2}\Lambda^{-1})^{-1}X_{\mathcal{S}}^{T}\bs{\epsilon}_{\mathcal{S}}\big)\|_2^{2}]\\
&=\frac{1}{2n}\mathbb{E}_{\bs{\epsilon},\bs{\theta}}[\|X\big((X_{\mathcal{S}}^{T}X_{\mathcal{S}}+\sigma^{2}\Lambda^{-1})^{-1}X_{\mathcal{S}}^{T}\bs{\epsilon}_{\mathcal{S}}-\sigma^{2}(X_{\mathcal{S}}^{T}X_{\mathcal{S}}+\sigma^{2}\Lambda^{-1})^{-1}\Lambda^{-1}\bs{\theta}\big)\|_2^{2}]\\
&=\frac{1}{2n}\mathbb{E}_{\bs{\epsilon},\bs{\theta}}[\|X(X_{\mathcal{S}}^{T}X_{\mathcal{S}}+\sigma^{2}\Lambda^{-1})^{-1}X_{\mathcal{S}}^{T}\bs{\epsilon}_{\mathcal{S}}\|_2^{2}]+\frac{1}{2n}\mathbb{E}_{\bs{\epsilon},\bs{\theta}}[\|\sigma^{2}\cdot X(X_{\mathcal{S}}^{T}X_{\mathcal{S}}+\sigma^{2}\Lambda^{-1})^{-1}\Lambda^{-1}\bs{\theta}\big)\|_{2}^{2}]\\
&=\frac{1}{2n}\mathrm{Tr}[X(X_{\mathcal{S}}^{T}X_{\mathcal{S}}+\sigma^{2}\Lambda^{-1})^{-1}X_{\mathcal{S}}^{T}\mathbb{E}_{\bs{\epsilon}}\big[\bs{\epsilon}_{\mathcal{S}}\bs{\epsilon}_{\mathcal{S}}^{T}\big]X_{\mathcal{S}}(X_{\mathcal{S}}^{T}X_{\mathcal{S}}+\sigma^{2}\Lambda^{-1})^{-1}X^{T}]\\
&+\frac{1}{2n}\mathrm{Tr}[\sigma^{2}\cdot X(X_{\mathcal{S}}^{T}X_{\mathcal{S}}+\sigma^{2}\Lambda^{-1})^{-1}\Lambda^{-1}\mathbb{E}_{\bs{\theta}}\big[\bs{\theta}\bs{\theta}^{T}\big]\Lambda^{-1}(X_{\mathcal{S}}^{T}X_{\mathcal{S}}+\sigma^{2}\Lambda^{-1})^{-1}X^{T}\cdot\sigma^{2}]\\&=\frac{\sigma^{2}}{2n}\mathrm{Tr}[X_{\mathcal{S}}(X_{\mathcal{S}}^{T}X_{\mathcal{S}}+\sigma^{2}\Lambda^{-1})^{-1}X^{T}X(X_{\mathcal{S}}^{T}X_{\mathcal{S}}+\sigma^{2}\Lambda^{-1})^{-1}X_{\mathcal{S}}^{T}]\\&+\frac{\sigma^{2}}{2n}\mathrm{Tr}[ \sigma^{2}\cdot X(X_{\mathcal{S}}^{T}X_{\mathcal{S}}+\sigma^{2}\Lambda^{-1})^{-1}\Lambda^{-1}(X_{\mathcal{S}}^{T}X_{\mathcal{S}}+\sigma^{2}\Lambda^{-1})^{-1}X^{T}]\\&=\frac{\sigma^{2}}{2n}\mathrm{Tr}[(X_{\mathcal{S}}^{T}X_{\mathcal{S}}+\sigma^{2}\Lambda^{-1})^{-1}X^{T}X(X_{\mathcal{S}}^{T}X_{\mathcal{S}}+\sigma^{2}\Lambda^{-1})^{-1}X_{\mathcal{S}}^{T}X_{\mathcal{S}}]\\&+\frac{\sigma^{2}}{2n}\mathrm{Tr}[(X_{\mathcal{S}}^{T}X_{\mathcal{S}}+\sigma^{2}\Lambda^{-1})^{-1}X^{T}X(X_{\mathcal{S}}^{T}X_{\mathcal{S}}+\sigma^{2}\Lambda^{-1})^{-1}\left(\sigma^{2}\cdot\Lambda^{-1}\right)]\\
&=\frac{\sigma^{2}}{2n}\mathrm{Tr}[(X_{\mathcal{S}}^{T}X_{\mathcal{S}}+\sigma^{2}\Lambda^{-1})^{-1}X^{T}X]=\frac{\sigma^{2}}{2n}\mathrm{Tr}[X(X_{\mathcal{S}}^{T}X_{\mathcal{S}}+\sigma^{2}\Lambda^{-1})^{-1}X^{T}],
 \end{aligned}   
\end{equation*} where $\mathbb{E}_{\bs{\epsilon}}\big[\bs{\epsilon}_{\mathcal{S}}\bs{\epsilon}_{\mathcal{S}}^{T}\big]=\sigma^{2}\mathrm{I}_{|\mathcal{S}|}$ and $\mathbb{E}_{\bs{\theta}}\big[\bs{\theta}\bs{\theta}^{T}\big]=\Lambda$.

With this equality, we can rewrite the utility function $U_{\mathrm{V}}(\mathcal{S})$ as 
\begin{equation*}
    \begin{aligned}
U_{\mathrm{V}}(\mathcal{S})&\triangleq
\mathcal{L}_{\mathrm{V}}(\bs{0})
-\mathcal{L}_{\mathrm{V}}(\bs{\theta}^{*}(\mathcal{S}))\\&=\frac{\sigma^{2}}{2n}\Big(\mathrm{Tr}[\sigma^{-2}\cdot X\Lambda X^{T}]-\mathrm{Tr}[X(X_{\mathcal{S}}^{T}X_{\mathcal{S}}+\sigma^{2}\Lambda^{-1})^{-1}X^{T}]\Big).
    \end{aligned}
\end{equation*} 

For simplicity, in the remainder of this proof, for any subset $\mathcal{S}$, we set $\mathcal{M}_{\mathcal{S}}\triangleq X_{\mathcal{S}}^{T}X_{\mathcal{S}}+\sigma^{2}\Lambda^{-1}$.

Next, we show this utility function $U_{\mathrm{V}}(\mathcal{S})$ satisfies the $(\gamma,\beta)$-weak submodularity.

At first, for any subset $\mathcal{A}\subseteq\mathcal{B}$ and $\mathcal{R}\triangleq\mathcal{B}\setminus\mathcal{A}$, we have that
\begin{equation*}
    \begin{aligned}
&U_{\mathrm{V}}(\mathcal{B})-U_{\mathrm{V}}(\mathcal{A})\\&= \frac{\sigma^{2}}{2n}\left(\mathrm{Tr}[X\mathcal{M}^{-1}_{\mathcal{A}}]X^{T}]-\mathrm{Tr}[X\mathcal{M}^{-1}_{\mathcal{B}}]X^{T}]\right)\\
&=\frac{\sigma^{2}}{2n}\left(\mathrm{Tr}[X\mathcal{M}^{-1}_{\mathcal{A}}]X^{T}]-\mathrm{Tr}[X\left(\mathcal{M}_{\mathcal{A}}+X_{\mathcal{R}}^{T}X_{\mathcal{R}}\right)^{-1}X^{T}]\right)\\
&=\frac{\sigma^{2}}{2n}\left(\mathrm{Tr}[X\mathcal{M}^{-1}_{\mathcal{A}}]X^{T}]-\mathrm{Tr}[X\left(\mathcal{M}_{\mathcal{A}}^{-1}-\mathcal{M}_{\mathcal{A}}^{-1}X_{\mathcal{R}}^{T}\left(I+X_{\mathcal{R}}\mathcal{M}_{\mathcal{A}}^{-1}X_{\mathcal{R}}^{T}\right)^{-1}X_{\mathcal{R}}\mathcal{M}_{\mathcal{A}}^{-1}\right)X^{T}]\right)\\
&=\frac{\sigma^{2}}{2n}\left(\mathrm{Tr}[X\mathcal{M}_{\mathcal{A}}^{-1}X_{\mathcal{R}}^{T}\left(I+X_{\mathcal{R}}\mathcal{M}_{\mathcal{A}}^{-1}X_{\mathcal{R}}^{T}\right)^{-1}X_{\mathcal{R}}\mathcal{M}_{\mathcal{A}}^{-1}X^{T}]\right)\\
&=\frac{\sigma^{2}}{2n}\left(\mathrm{Tr}[\left(I+X_{\mathcal{R}}\mathcal{M}_{\mathcal{A}}^{-1}X_{\mathcal{R}}^{T}\right)^{-1}X_{\mathcal{R}}\mathcal{M}_{\mathcal{A}}^{-1}X^{T}X\mathcal{M}_{\mathcal{A}}^{-1}X_{\mathcal{R}}^{T}]\right)
    \end{aligned}
\end{equation*}

Note that $\lambda_{min}(I+X_{\mathcal{R}}\mathcal{M}_{\mathcal{A}}^{-1}X_{\mathcal{R}}^{T})\ge1$, so we have that
\begin{equation*}
U_{\mathrm{V}}(\mathcal{B})-U_{\mathrm{V}}(\mathcal{A})\le\frac{\sigma^{2}}{2n}\mathrm{Tr}[X_{\mathcal{R}}\mathcal{M}_{\mathcal{A}}^{-1}X^{T}X\mathcal{M}_{\mathcal{A}}^{-1}X_{\mathcal{R}}^{T}].
\end{equation*}

Furthermore, we also have that (Note that $x_e\in\mathbb{R}^{p\times1}$)
\begin{equation*}
    \begin{aligned}
   U_{\mathrm{V}}(\mathcal{A}\cup\{e\})-U_{\mathrm{V}}(\mathcal{A})  &=\frac{\sigma^{2}}{2n}\left(\mathrm{Tr}[\left(1+x_{e}^{T}\mathcal{M}_{\mathcal{A}}^{-1}x_{e}\right)^{-1}x^{T}_{e}\mathcal{M}_{\mathcal{A}}^{-1}X^{T}X\mathcal{M}_{\mathcal{A}}^{-1}x_{e}]\right)\\
   &=\frac{\sigma^{2}}{2n}\cdot\frac{\mathrm{Tr}[x_{e}^{T}\mathcal{M}_{\mathcal{A}}^{-1}X^{T}X\mathcal{M}_{\mathcal{A}}^{-1}x_{e}]}{1+x_{e}^{T}\mathcal{M}_{\mathcal{A}}^{-1}x_{e}}
   \\&\ge\frac{\sigma^{2}}{2n}\cdot\frac{\mathrm{Tr}[x_{e}^{T}\mathcal{M}_{\mathcal{A}}^{-1}X^{T}X\mathcal{M}_{\mathcal{A}}^{-1}x_{e}]}{1+\frac{s^{2}}{\sigma^{2}}\cdot\lambda_{max}(\Lambda)},
    \end{aligned} 
\end{equation*} where the final inequality follows from $\mathcal{M}_{\mathcal{A}}=X_{\mathcal{S}}^{T}X_{\mathcal{S}}+\sigma^{2}\Lambda^{-1}$ such that $x_{e}^{T}\mathcal{M}_{\mathcal{A}}^{-1}x_{e}\le x_{e}^{T}\left(\sigma^{2}\Lambda^{-1}\right)^{-1}x_{e}\le\frac{x_{e}^{T}\Lambda x_{e}}{\sigma^{2}}\le\frac{s^{2}}{\sigma}\lambda_{max}(\Lambda)$.

Here, $s\triangleq\max_{i\in[n]}\|x_{i}\|_{2}$ and $\lambda_{\max}(\Lambda)$ is the largest eigenvalue of the matrix $\Lambda$. 

As a result, we have that
\begin{equation*}
    \begin{aligned}
\sum_{e\in\mathcal{B}\setminus\mathcal{A}}\left(U_{\mathrm{V}}(\mathcal{A}\cup\{e\})-U_{\mathrm{V}}(\mathcal{A})\right)&\ge\frac{\sigma^{2}}{2n}\cdot\frac{\sum_{e\in\mathcal{B}\setminus\mathcal{A}}\mathrm{Tr}[x_{e}^{T}\mathcal{M}_{\mathcal{A}}^{-1}X^{T}X\mathcal{M}_{\mathcal{A}}^{-1}x_{e}]}{1+\frac{s^{2}}{\sigma^{2}}\cdot\lambda_{max}(\Lambda)}\\&=\frac{\sigma^{2}}{2n}\cdot\frac{\mathrm{Tr}[X_{\mathcal{R}}\mathcal{M}_{\mathcal{A}}^{-1}X^{T}X\mathcal{M}_{\mathcal{A}}^{-1}X_{\mathcal{R}}^{T}]}{1+\frac{s^{2}}{\sigma^{2}}\cdot\lambda_{max}(\Lambda)}\ge\frac{U_{\mathrm{V}}(\mathcal{B})-U_{\mathrm{V}}(\mathcal{A})}{1+\frac{s^{2}}{\sigma^{2}}\cdot\lambda_{max}(\Lambda)},
    \end{aligned}
\end{equation*} where $\mathcal{R}\triangleq\mathcal{B}\setminus\mathcal{A}$.

Secondly, for any subset $\mathcal{A}\subseteq\mathcal{B}$ and $\mathcal{R}\triangleq\mathcal{B}\setminus\mathcal{A}$, we have that
\begin{equation*}
    \begin{aligned}
&U_{\mathrm{V}}(\mathcal{B})-U_{\mathrm{V}}(\mathcal{A})\\&= \frac{\sigma^{2}}{2n}\left(\mathrm{Tr}[X\mathcal{M}^{-1}_{\mathcal{A}}]X^{T}]-\mathrm{Tr}[X\mathcal{M}^{-1}_{\mathcal{B}}]X^{T}]\right)\\
&=\frac{\sigma^{2}}{2n}\left(\mathrm{Tr}[X\left(\mathcal{M}_{\mathcal{B}}-X_{\mathcal{R}}^{T}X_{\mathcal{R}}\right)^{-1}X^{T}]-\mathrm{Tr}[X\mathcal{M}^{-1}_{\mathcal{B}}]X^{T}]\right)\\
&=\frac{\sigma^{2}}{2n}\left(\mathrm{Tr}[X\left(\mathcal{M}_{\mathcal{B}}^{-1}+\mathcal{M}_{\mathcal{B}}^{-1}X_{\mathcal{R}}^{T}\left(I-X_{\mathcal{R}}\mathcal{M}_{\mathcal{B}}^{-1}X_{\mathcal{R}}^{T}\right)^{-1}X_{\mathcal{R}}\mathcal{M}_{\mathcal{B}}^{-1}\right)X^{T}-\mathrm{Tr}[X\mathcal{M}^{-1}_{\mathcal{B}}]X^{T}]]\right)\\
&=\frac{\sigma^{2}}{2n}\left(\mathrm{Tr}[X\mathcal{M}_{\mathcal{B}}^{-1}X_{\mathcal{R}}^{T}\left(I-X_{\mathcal{R}}\mathcal{M}_{\mathcal{B}}^{-1}X_{\mathcal{R}}^{T}\right)^{-1}X_{\mathcal{R}}\mathcal{M}_{\mathcal{B}}^{-1}X^{T}]\right)\\
&=\frac{\sigma^{2}}{2n}\left(\mathrm{Tr}[\left(I-X_{\mathcal{R}}\mathcal{M}_{\mathcal{B}}^{-1}X_{\mathcal{R}}^{T}\right)^{-1}X_{\mathcal{R}}\mathcal{M}_{\mathcal{B}}^{-1}X^{T}X\mathcal{M}_{\mathcal{B}}^{-1}X_{\mathcal{R}}^{T}]\right)
    \end{aligned}
\end{equation*}

Note that $\lambda_{max}(I-X_{\mathcal{R}}\mathcal{M}_{\mathcal{B}}^{-1}X_{\mathcal{R}}^{T})\le1$, so we have that
\begin{equation*}
U_{\mathrm{V}}(\mathcal{B})-U_{\mathrm{V}}(\mathcal{A})\ge\frac{\sigma^{2}}{2n}\mathrm{Tr}[X_{\mathcal{R}}\mathcal{M}_{\mathcal{B}}^{-1}X^{T}X\mathcal{M}_{\mathcal{B}}^{-1}X_{\mathcal{R}}^{T}].
\end{equation*}

Furthermore, we also have that
\begin{equation*}
    \begin{aligned}
   U_{\mathrm{V}}(\mathcal{B})-U_{\mathrm{V}}(\mathcal{B}\setminus\{e\})  =\frac{\sigma^{2}}{2n}\left(\mathrm{Tr}[\left(1-x_{e}^{T}\mathcal{M}_{\mathcal{B}}^{-1}x_{e}\right)^{-1}x_{e}^{T}\mathcal{M}_{\mathcal{B}}^{-1}X^{T}X\mathcal{M}_{\mathcal{B}}^{-1}x_{e}]\right).
    \end{aligned} 
\end{equation*}
From Sherman-Morrisson-Woodbury formula, we also can show that
\begin{equation*}
    \begin{aligned}
     \left(1-x_{e}^{T}\mathcal{M}_{\mathcal{B}}^{-1}x_{e}\right)^{-1}=1+x_{e}^{T}(\mathcal{M}_{\mathcal{B}}-x_{e}x_{e}^{T})^{-1}x_{e} =1+x_{e}^{T}\mathcal{M}_{\mathcal{B}\setminus\{e\}}^{-1}x_{e} \le1+\frac{s^{2}}{\sigma}\lambda_{max}(\Lambda), 
    \end{aligned}
\end{equation*} where $\mathcal{M}_{\mathcal{B}\setminus\{e\}}\triangleq X_{\mathcal{B}\setminus\{e\}}^{T}X_{\mathcal{B}\setminus\{e\}}+\sigma^{2}\Lambda^{-1}$ such that $x_{e}^{T}\mathcal{M}_{\mathcal{B}\setminus\{e\}}^{-1}x_{e}\le x_{e}^{T}\left(\sigma^{2}\Lambda^{-1}\right)^{-1}x_{e}\le\frac{x_{e}^{T}\Lambda x_{e}}{\sigma^{2}}\le\frac{s^{2}}{\sigma}\lambda_{max}(\Lambda)$.

As a result, we have that 
\begin{equation*}
    U_{\mathrm{V}}(\mathcal{B})-U_{\mathrm{V}}(\mathcal{B}\setminus\{e\})  \le\frac{\sigma^{2}}{2n}\left(1+\frac{s^{2}}{\sigma}\lambda_{max}(\Lambda)\right)\cdot\mathrm{Tr}[x_{e}^{T}\mathcal{M}_{\mathcal{B}}^{-1}X^{T}X\mathcal{M}_{\mathcal{B}}^{-1}x_{e}].
\end{equation*} 

So we can have that
\begin{equation*}
    \begin{aligned}
    \sum_{e\in\mathcal{B}\setminus\mathcal{A}}\left(U_{\mathrm{V}}(\mathcal{B})-U_{\mathrm{V}}(\mathcal{B}\setminus\{e\})\right)  &\le\frac{\sigma^{2}}{2n}\left(1+\frac{s^{2}}{\sigma}\lambda_{max}(\Lambda)\right)\cdot\sum_{e\in\mathcal{B}\setminus\mathcal{A}}\mathrm{Tr}[x_{e}^{T}\mathcal{M}_{\mathcal{B}}^{-1}X^{T}X\mathcal{M}_{\mathcal{B}}^{-1}x_{e}]\\&=\frac{\sigma^{2}}{2n}\left(1+\frac{s^{2}}{\sigma}\lambda_{max}(\Lambda)\right)\cdot\mathrm{Tr}[X_{\mathcal{R}}\mathcal{M}_{\mathcal{B}}^{-1}X^{T}X\mathcal{M}_{\mathcal{B}}^{-1}X_{\mathcal{R}}^{T}]\\&\le \left(1+\frac{s^{2}}{\sigma}\lambda_{max}(\Lambda)\right)\cdot\left(U_{\mathrm{V}}(\mathcal{B})-U_{\mathrm{V}}(\mathcal{A})\right).
    \end{aligned}
\end{equation*}

In summary, The utility function $U_{\mathrm{V}}(\mathcal{S})
\triangleq
\mathcal{L}_{\mathrm{V}}(\bs{0})
-\mathcal{L}_{\mathrm{V}}(\bs{\theta}^{*}(\mathcal{S}))$ be the induced by the Bayesian V-optimal criterion $\mathcal{L}_{\mathrm{V}}(\bs{\theta}^{*}(\mathcal{S}))$ is monotone and \textbf{ $(1/c,c)$-weakly submodular}, where $c\triangleq1+\frac{s^{2}}{\sigma^{2}}\cdot\lambda_{max}(\Lambda)$, $s\triangleq\max_{i\in[n]}\|x_{i}\|_{2}$ and $\lambda_{\max}(\Lambda)$ is the largest eigenvalue of the matrix $\Lambda$.
\end{proof}

\subsection{Equivalence of the V-Optimal Criterion}
In this subsection, we show the equivalence between the total 
prediction error $\mathcal{L}(\bs{\theta}^{*}(\mathcal{S}))
\triangleq
\frac{1}{2n}\mathbb{E}_{\bs{\epsilon},\bs{\theta}}
\left[
\|\bs{y}-X\bs{\theta}^{*}(\mathcal{S})\|_2^{2}
\right]$ and the Bayesian V-optimal criterion
$\mathcal{L}_{\mathrm{V}}(\bs{\theta}^{*}(\mathcal{S}))
\triangleq
\frac{1}{2n}\mathbb{E}_{\bs{\epsilon},\bs{\theta}}[\|X\big(\bs{\theta}-\bs{\theta}^{*}(\mathcal{S})\big)\|_2^{2}]$. Specifically, when the number of data points $n$ is sufficiently large, these two objectives differ only by an additive constant $\sigma^{2}/2$. 
Consequently, the weak submodularity of $\mathcal{L}_{\mathrm{V}}(\bs{\theta}^{*}(\mathcal{S}))$ established above, to some extent, also implies the weak submodularity of the total prediction error $\mathcal{L}(\bs{\theta}^{*}(\mathcal{S}))$.
\begin{lemma}Under the aforementioned assumptions of Bayesian linear regression,  we can show that
\[ \lim_{n \rightarrow \infty}  \mathcal{L}(\bs{\theta}^{*}(\mathcal{S}))- \mathcal{L}_{\mathrm{V}}(\bs{\theta}^{*}(\mathcal{S}))=\frac{\sigma^2}{2}.\]
\end{lemma}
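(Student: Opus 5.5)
We expand the squared residual. Writing $\bs{y}=X\bs{\theta}+\bs{\epsilon}$, we have
\[
\bs{y}-X\bs{\theta}^{*}(\mathcal{S})=X\big(\bs{\theta}-\bs{\theta}^{*}(\mathcal{S})\big)+\bs{\epsilon},
\]
so that
\[
\mathcal{L}(\bs{\theta}^{*}(\mathcal{S}))-\mathcal{L}_{\mathrm{V}}(\bs{\theta}^{*}(\mathcal{S}))=\frac{1}{2n}\mathbb{E}\|\bs{\epsilon}\|_2^2+\frac{1}{n}\mathbb{E}\big[\bs{\epsilon}^{T}X(\bs{\theta}-\bs{\theta}^{*}(\mathcal{S}))\big].
\]
The first term equals $\frac{1}{2n}\cdot n\sigma^2=\sigma^2/2$ exactly. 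The whole task therefore reduces to showing that the cross term vanishes as $n\to\infty$.

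For the cross term, we use the closed form $\bs{\theta}^{*}(\mathcal{S})=\mathcal{M}_{\mathcal{S}}^{-1}X_{\mathcal{S}}^{T}(X_{\mathcal{S}}\bs{\theta}+\bs{\epsilon}_{\mathcal{S}})$ from the proof of Theorem~\ref{thm:bayesian_v_weak_submodularity}. Here $\bs{\theta}$ is independent of $\bs{\epsilon}$ and both have mean zero, so $\mathbb{E}[\bs{\epsilon}^{T}X\bs{\theta}]=0$. The term $\mathbb{E}[\bs{\epsilon}^{T}X\mathcal{M}_{\mathcal{S}}^{-1}X_{\mathcal{S}}^{T}X_{\mathcal{S}}\bs{\theta}]=0$ for the same reason. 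What remains is
\[
-\frac{1}{n}\mathbb{E}\big[\bs{\epsilon}^{T}X\mathcal{M}_{\mathcal{S}}^{-1}X_{\mathcal{S}}^{T}\bs{\epsilon}_{\mathcal{S}}\big]=-\frac{\sigma^2}{n}\mathrm{Tr}\big[X_{\mathcal{S}}\mathcal{M}_{\mathcal{S}}^{-1}X_{\mathcal{S}}^{T}\big],
\]
because only the coordinates of $\bs{\epsilon}$ in $\mathcal{S}$ correlate with $\bs{\epsilon}_{\mathcal{S}}$. This selects the $\mathcal{S}$-rows of $X$, and we then use $\mathbb{E}[\bs{\epsilon}_{\mathcal{S}}\bs{\epsilon}_{\mathcal{S}}^T]=\sigma^2 I$.

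The main step is bounding this trace uniformly in $n$. Since $\mathcal{M}_{\mathcal{S}}=X_{\mathcal{S}}^TX_{\mathcal{S}}+\sigma^2\Lambda^{-1}\succeq X_{\mathcal{S}}^TX_{\mathcal{S}}$, the matrix $X_{\mathcal{S}}\mathcal{M}_{\mathcal{S}}^{-1}X_{\mathcal{S}}^{T}$ has all eigenvalues in $[0,1)$; it is a shrunken hat matrix. Moreover its rank is at most $\min(p,|\mathcal{S}|)$. Hence the trace is at most $p$, and the cross term is bounded in absolute value by $\sigma^2 p/n\to0$. The only point requiring care is interpreting the limit: $\mathcal{S}$ is a subset of a growing dataset, while the dimension $p$ stays fixed. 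With this bound the limit equals $\sigma^2/2$, which holds for every $\mathcal{S}$, as the lemma claims.
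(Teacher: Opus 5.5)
Your proof is correct. It follows the paper through the expansion of the square, and the two arguments differ only in how the cross term $\frac{1}{n}\mathbb{E}[\bs{\epsilon}^{T}X(\bs{\theta}-\bs{\theta}^{*}(\mathcal{S}))]$ is handled.

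The paper fixes $\bs{\theta}$ and splits the sum over coordinates. It kills the $i\notin\mathcal{S}$ part by independence of $\epsilon_i$ from $\bs{\theta}^{*}(\mathcal{S})$, which depends only on $\bs{\epsilon}_{\mathcal{S}}$. It then dismisses the $i\in\mathcal{S}$ part qualitatively: because $\mathcal{S}$ is a fixed finite set, that part is an $n$-independent quantity divided by $n$.

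You instead plug in the ridge closed form and compute the cross term exactly, which yields the identity
\[
\mathcal{L}(\bs{\theta}^{*}(\mathcal{S}))-\mathcal{L}_{\mathrm{V}}(\bs{\theta}^{*}(\mathcal{S}))=\frac{\sigma^{2}}{2}-\frac{\sigma^{2}}{n}\mathrm{Tr}\big[X_{\mathcal{S}}\mathcal{M}_{\mathcal{S}}^{-1}X_{\mathcal{S}}^{T}\big].
\]
You then bound the trace, which is the effective degrees of freedom of the ridge fit on $\mathcal{S}$, by $\min(p,|\mathcal{S}|)$ using the spectrum of the shrunken hat matrix.

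Your route buys two things:
\begin{itemize}
\item an explicit non-asymptotic rate $\sigma^{2}\min(p,|\mathcal{S}|)/n$;
\item coverage of the regime where $|\mathcal{S}|$ grows with $n$ at fixed $p$, which the paper's finiteness step does not address.
\end{itemize}
The paper's route, in exchange, never uses the closed form of $\bs{\theta}^{*}(\mathcal{S})$, only that it is a function of the $\mathcal{S}$-data. It would therefore go through unchanged for any lower-level estimator trained on $\mathcal{S}$ with finite second moments.
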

\begin{proof}
\textbf{Fixed} $\bs{\theta}$, we have,
   \begin{eqnarray*}
 \mathcal{L}(\bs{\theta}^{*}(\mathcal{S}))&=& \frac{1}{2n}\mathbb{E}_{\bs{\epsilon}}\left[
\|\bs{y}-X\bs{\theta}^{*}(\mathcal{S})\|_2^{2}\right]\\
 &=& \frac{1}{2n} \mathbb{E}_{\bs{\epsilon}}\left[\|X\bs{\theta} - X\bs{\theta}^{*}(\mathcal{S})\|^2_2\right]  -  \frac{1}{n} \mathbb{E}_{\bs{\epsilon}}\left[\epsilon^\top(X\bs{\theta}^{*}(\mathcal{S}) - X\bs{\theta} )\right]  +  \frac{1}{2n} \mathbb{E}_{\bs{\epsilon}}\left[\|\bs{\epsilon}\|_2^2\right] \\
 &=& \mathcal{L}_{\mathrm{V}}(\bs{\theta}^{*}(\mathcal{S})) -  \frac{1}{n} \mathbb{E}_{\bs{\epsilon}}\left[\epsilon^\top(X\bs{\theta}^{*}(\mathcal{S}) - X\bs{\theta} )\right] + \frac{\sigma^2}{2} \\  
  &=& \mathcal{L}_{\mathrm{V}}(\bs{\theta}^{*}(\mathcal{S})) -  \frac{1}{n} \mathbb{E}_{\bs{\epsilon}}\left[\sum_{i \in S}\epsilon_i(x_i^T\bs{\theta}^{*}(\mathcal{S}) - x_i^T\bs{\theta} )\right] -  \frac{1}{n} \mathbb{E}_{\bs{\epsilon}}\left[\sum_{i \in [n] \setminus S}\epsilon_i(x_i^T\bs{\theta}^{*}(\mathcal{S}) - x_i^T\bs{\theta})\right]+ \frac{\sigma^2}{2} \\ 
 \end{eqnarray*}
When $n\rightarrow\infty$, we have $\lim_{n\rightarrow\infty}\frac{1}{n} \mathbb{E}_{\bs{\epsilon}}\left[\sum_{i \in S}\epsilon_i(x_i^T\bs{\theta}^{*}(\mathcal{S}) - x_i^T\bs{\theta} )\right] = 0$ since $\mathcal{S}$ is a finite set. Since $\bs{\theta}^{*}(\mathcal{S})$ only depends on $\bs{\epsilon}_{\mathcal{S}}$, we have
\begin{equation*}
\mathbb{E}_{\bs{\epsilon}}\left[\sum_{i \in [n] \setminus S}\epsilon_i(x_i^T\bs{\theta}^{*}(\mathcal{S}) - x_i^T\bs{\theta})\right] = \sum_{i \in [n] \setminus S}\mathbb{E}_{\bs{\epsilon}}\left[\epsilon_i\right]\mathbb{E}_{\bs{\epsilon}}\left[x_i^T\bs{\theta}^{*}(\mathcal{S}) - x_i^T\bs{\theta}\right]=0.
\end{equation*}
    As a consequence, as $\lim_{n \rightarrow \infty}  \mathcal{L}(\bs{\theta}^{*}(\mathcal{S}))- \mathcal{L}_{\mathrm{V}}(\bs{\theta}^{*}(\mathcal{S}))=\frac{\sigma^2}{2}$.

\end{proof}

%% file: appendix_gradient_computation.tex
\section{Gradient Estimators for $\Phi(\bs{\pi})$}
\label{appendix:ge}
In this section, we will show two different gradient estimators for our proposed probabilistic relaxation $\Phi(\bs{\pi})$, namely policy gradient estimator and two-point estimator in Eq.\eqref{eq:two_point_representation}.
\subsection{Policy Gradient Estimator}\label{appendix:pge}
At first, from Eq.\eqref{eq:new_prob_relaxation}, we know that
\begin{equation*}
\begin{aligned}
 \Phi(\bs{\pi})&\triangleq\mathbb{E}_{d_{i,k}\sim\bs{\pi}_{i},\forall i,k}\left[\mathcal{L}\Big(\bs{\theta}^{*}\big(\cup_{i=1}^{h}\cup_{k=1}^{K_i}d_{i,k}\big)\Big)\right]\\
&=\sum_{d_{i,k}\in\mathcal{D}_{i}\cup\{\emptyset\},\forall i,k}
		\Bigg(
		\mathcal{L}\Big(\bs{\theta}^{*}\big(\cup_{i=1}^{h}\cup_{k=1}^{K_i}d_{i,k}\big)\Big)
		\cdot
		\prod_{i=1}^{h}\prod_{k=1}^{K_i}\mathrm{p}(d_{i,k}\mid\bs{\pi}_{i})
		\Bigg),   
\end{aligned}
\end{equation*} where the symbol $d_{i,k}\sim\bs{\pi}_{i}$ means that we sample $d_{i,k}$ from $\mathcal{D}_{i}\cup\{\emptyset\}$ according to the categorical distribution $\bs{\pi}_{i}$. Then, like the policy gradient methods~\citep{zhou2022probabilistic,sutton1999policy,attia2022stochastic}, we can show that
\begin{equation*}
    \begin{aligned}
&\frac{\partial\Phi(\bs{\pi})}{\partial\pi_{i,j}}=\sum_{d_{i,k}\in\mathcal{D}_{i}\cup\{\emptyset\},\forall i\in[h],k\in[K_{i}]}
		\Bigg(\mathcal{L}\Big(\bs{\theta}^{*}\big(\cup_{i=1}^{h}\cup_{k=1}^{K_i}d_{i,k}\big)\Big)
		\nabla_{\pi_{i,j}}\left(\prod_{i=1}^{h}\prod_{k=1}^{K_i}\mathrm{p}(d_{i,k}\mid\bs{\pi}_{i})\right)
		\Bigg)\\&=\sum_{d_{i,k}\in\mathcal{D}_{i}\cup\{\emptyset\},\forall i,k}
		\Bigg(\mathcal{L}\Big(\bs{\theta}^{*}\big(\cup_{i=1}^{h}\cup_{k=1}^{K_i}d_{i,k}\big)\Big)
		\nabla_{\pi_{i,j}}\log\left(\prod_{i=1}^{h}\prod_{k=1}^{K_i}\mathrm{p}(d_{i,k}\mid\bs{\pi}_{i})\right)\left(\prod_{i=1}^{h}\prod_{k=1}^{K_i}\mathrm{p}(d_{i,k}\mid\bs{\pi}_{i})\right)
		\Bigg)\\
        &=\mathbb{E}_{d_{i,k}\sim\bs{\pi}_{i},\forall i\in[h],k\in[K_{i}]}\left[\mathcal{L}\Big(\bs{\theta}^{*}\big(\cup_{i=1}^{h}\cup_{k=1}^{K_i}d_{i,k}\big)\Big)
		\nabla_{\pi_{i,j}}\log\left(\prod_{i=1}^{h}\prod_{k=1}^{K_i}\mathrm{p}(d_{i,k}\mid\bs{\pi}_{i})\right)\right]\\
&=\mathbb{E}_{d_{i,k}\sim\bs{\pi}_{i},\forall i\in[h],k\in[K_{i}]}\left[\mathcal{L}\Big(\bs{\theta}^{*}\big(\cup_{i=1}^{h}\cup_{k=1}^{K_i}d_{i,k}\big)\Big)\sum_{k=1}^{K_{i}}\left(\frac{\mathbb{I}[d_{i,k}=(x_{j}^{(i)},y_{j}^{(i)})]}{\pi_{i,j}}-\frac{\mathbb{I}[d_{i,k}=\emptyset]}{1-\sum_{k=1}^{K_{i}}\pi_{i,k}}\right)\right],
    \end{aligned}
\end{equation*} where the second equality follows from $\frac{\mathrm{d}\log(x)}{\mathrm{d}x}=1/x$ and $\mathbb{I}[\cdot]$ represents the indictor function.

Note that, like Eq.~\eqref{eq:policy_gradient}, the policy gradient estimator of our proposed probabilistic relaxation
 still involves an \emph{unbounded} score factor $\left(\frac{\mathbb{I}[d_{i,k}=(x_{j}^{(i)},y_{j}^{(i)})]}{\pi_{i,j}}-\frac{\mathbb{I}[d_{i,k}=\emptyset]}{1-\sum_{k=1}^{K_{i}}\pi_{i,k}}\right)$. Specifically, when $\pi_{i,k}\rightarrow0$ or $\sum_{k=1}^{K_{i}}\pi_{i,k}\rightarrow1$, the factor $\left(\frac{\mathbb{I}[d_{i,k}=(x_{j}^{(i)},y_{j}^{(i)})]}{\pi_{i,j}}-\frac{\mathbb{I}[d_{i,k}=\emptyset]}{1-\sum_{k=1}^{K_{i}}\pi_{i,k}}\right)$ will also become unbounded.
 \subsection{Two-point Gradient Estimator}\label{appendix:two-point}
 In this subsection, we verify Theorem~\ref{thm:two_point_representation}.
 \begin{proof}
At first, we can show that
\begin{equation}\label{add+1}
    \begin{aligned}
      &\frac{\partial\Phi(\bs{\pi})}{\partial\pi_{i,j}}=\sum_{d_{\hat{i},\hat{k}}\in\mathcal{D}_{\hat{i}}\cup\{\emptyset\},\forall \hat{i}\in[h],\hat{k}\in[K_{i}]}
		\Bigg(\mathcal{L}\Big(\bs{\theta}^{*}\big(\cup_{\hat{i}=1}^{h}\cup_{\hat{k}=1}^{K_i}d_{\hat{i},\hat{k}}\big)\Big)
		\nabla_{\pi_{i,j}}\left(\prod_{\hat{i}=1}^{h}\prod_{\hat{k}=1}^{K_i}\mathrm{p}(d_{\hat{i},\hat{k}}\mid\bs{\pi}_{\hat{i}})\right)
		\Bigg)\\
&=\sum_{d_{\hat{i},\hat{k}}\in\mathcal{D}_{\hat{i}}\cup\{\emptyset\},\forall \hat{i},\hat{k}}
		\Bigg(\mathcal{L}\Big(\bs{\theta}^{*}\big(\cup_{\hat{i}=1}^{h}\cup_{\hat{k}=1}^{K_i}d_{\hat{i},\hat{k}}\big)\Big)
		\sum_{k=1}^{K_{i}}\left(\nabla_{\pi_{i,j}}\mathrm{p}(d_{i,k}\mid\bs{\pi}_{i})\cdot\prod_{(\hat{i},\hat{k})\neq(i,k)}\mathrm{p}(d_{\hat{i},\hat{k}}\mid\bs{\pi}_{\hat{i}})\right)
		\Bigg)\\
&=\sum_{k=1}^{K_{i}}\left(\sum_{d_{\hat{i},\hat{k}}\in\mathcal{D}_{\hat{i}}\cup\{\emptyset\},\forall \hat{i},\hat{k}}
		\Big(\mathcal{L}\Big(\bs{\theta}^{*}\big(\cup_{\hat{i}=1}^{h}\cup_{\hat{k}=1}^{K_i}d_{\hat{i},\hat{k}}\big)\Big)
		\nabla_{\pi_{i,j}}\mathrm{p}(d_{i,k}\mid\bs{\pi}_{i})\cdot\prod_{(\hat{i},\hat{k})\neq(i,k)}\mathrm{p}(d_{\hat{i},\hat{k}}\mid\bs{\pi}_{\hat{i}})
		\Big)\right)\\
&=K_{i}\cdot\left(\sum_{d_{\hat{i},\hat{k}}\in\mathcal{D}_{\hat{i}}\cup\{\emptyset\},\forall \hat{i},\hat{k}}
		\Big(\mathcal{L}\Big(\bs{\theta}^{*}\big(\cup_{\hat{i}=1}^{h}\cup_{\hat{k}=1}^{K_i}d_{\hat{i},\hat{k}}\big)\Big)
		\nabla_{\pi_{i,j}}\mathrm{p}(d_{i,1}\mid\bs{\pi}_{i})\cdot\prod_{(\hat{i},\hat{k})\neq(i,1)}\mathrm{p}(d_{\hat{i},\hat{k}}\mid\bs{\pi}_{\hat{i}})
		\Big)\right)\\
&=K_{i}\left(\sum_{d_{\hat{i},\hat{k}}\in\mathcal{D}_{\hat{i}}\cup\{\emptyset\},(\hat{i},\hat{k})\neq(i,1)}\left(\sum_{d_{i,1}\in\mathcal{D}_{i}\cup\{\emptyset\}}\mathcal{L}\Big(\bs{\theta}^{*}\big(\cup_{\hat{i},\hat{k}}d_{\hat{i},\hat{k}}\big)\Big)\nabla_{\pi_{i,j}}\mathrm{p}(d_{i,1}\mid\bs{\pi}_{i})\right)\prod_{(\hat{i},\hat{k})\neq(i,1)}\mathrm{p}(d_{\hat{i},\hat{k}}\mid\bs{\pi}_{\hat{i}})\right),
    \end{aligned}
\end{equation}where the fourth equality comes from the independence between $\{d_{i,1},d_{i,2},\dots,d_{i,K_{i}}\}$.

When $d_{i,1}=(x_{j}^{(i)},y_{j}^{(i)})$ or $d_{i,1}=\emptyset$, we can know that $\nabla_{\pi_{i,j}}\mathrm{p}(d_{i,1}\mid\bs{\pi}_{i})=1$ or $\nabla_{\pi_{i,j}}\mathrm{p}(d_{i,1}\mid\bs{\pi}_{i})=-1$, respectively. As for other cases, we have $\nabla_{\pi_{i,j}}\mathrm{p}(d_{i,1}\mid\bs{\pi}_{i})=0$. Thus, if we define $\mathcal{S}_{-(i,1)}\triangleq \bigcup_{(i',k')\neq(i,1)}d_{i',k'}$ as the subset obtained by excluding the first draw $d_{i,1}$ from source $\mathcal{D}_{i}$, we have the following equality:
\begin{equation*}
    \begin{aligned}
      \sum_{d_{i,1}\in\mathcal{D}_{i}\cup\{\emptyset\}}\mathcal{L}\Big(\bs{\theta}^{*}\big(\cup_{\hat{i},\hat{k}}d_{i,k}\big)\Big)\nabla_{\pi_{i,j}}\mathrm{p}(d_{i,1}\mid\bs{\pi}_{i})=\mathcal{L}\Big(\bs{\theta}^{*}\big(\mathcal{S}_{-(i,1)}\cup(x^{(i)}_{j},y^{(i)}_{j})\big)\Big)
-
\mathcal{L}\Big(\bs{\theta}^{*}\big(\mathcal{S}_{-(i,1)}\big)\Big)
.  
    \end{aligned}
\end{equation*}

As a result, we have that
     \begin{equation}\label{add+2}
\frac{\partial \Phi(\bs{\pi})}{\partial \pi_{i,j}}
\triangleq
K_i\cdot
\mathbb{E}\Bigg[\displaystyle
\mathcal{L}\Big(\bs{\theta}^{*}\big(\mathcal{S}_{-(i,1)}\cup(x^{(i)}_{j},y^{(i)}_{j})\big)\Big)
-
\mathcal{L}\Big(\bs{\theta}^{*}\big(\mathcal{S}_{-(i,1)}\big)\Big)
\Bigg],
\end{equation} where the expectation is taken over all random draws except $d_{i,1}$.
 \end{proof}
 \clearpage
 \subsection{Reducible Two-point Representation}\label{appendix:Reducible2}
 At first, we review the Bayes rule: for any random event $A$, $B$ and $C$, we have 
\begin{equation*}
    \mathrm{p}(A\mid B,C)\triangleq\frac{\mathrm{p}(B\mid A,C)\cdot\mathrm{p}(A\mid C)}{\mathrm{p}(B\mid C)}.
\end{equation*}

As a result, in Eq.\eqref{eq:reducible_Rep}, if we set $A=\bs{y}$, $B=y$ and $C=\{\bs{x},x,\mathcal{S}\}$, we have that
\begin{equation*}
    \begin{aligned}
     \mathrm{p}(\bs{y}|\bs{x}; \mathcal{S}\cup (x, y))&=\frac{\mathrm{p}(y|x; \mathcal{S}\cup (\bs{x}, \bs{y}))\cdot \mathrm{p}(\bs{y}|\bs{x},x,\mathcal{S})}{\mathrm{p}(y|x,\bs{x},\mathcal{S})}\\
     &=\frac{\mathrm{p}(y|x; \mathcal{S}\cup\mathcal{D})\cdot \mathrm{p}(\bs{y}|\bs{x},\mathcal{S})}{\mathrm{p}(y|x,\mathcal{S})}=\frac{\mathrm{p}(y|x;\mathcal{D})\cdot \mathrm{p}(\bs{y}|\bs{x},\mathcal{S})}{\mathrm{p}(y|x,\mathcal{S})},
    \end{aligned}
\end{equation*} where the second equality comes from the the conditional independence $\mathrm{p}(y_{i}|x_{i},x_{j};\mathcal{S})=\mathrm{p}(y_{i}|x_{i};\mathcal{S})$(Please refer to Eq.(2) in \citep{sujit2023prioritizing}).

As a result, we have that
\begin{equation*}
  \log \mathrm{p}(\bs{y}|\bs{x}; \mathcal{S}\cup (x, y))-\log\mathrm{p}(\bs{y}|\bs{x},\mathcal{S})=\log\mathrm{p}(y|x;\mathcal{D})-\log \mathrm{p}(y|x,\mathcal{S}).
\end{equation*}

Furthermore, in Bayesian probabilistic framework, the loss function is commonly specified as the negative log-likelihood of the predictive distribution, namely, $\ell(x,y;\bs{\theta}^{*}(\mathcal{S}))\triangleq-\log\mathrm{p}(y|x;\mathcal{S})$, then we can have that, throughout the independence between samples:
\begin{equation*}
    \begin{aligned}
    &\mathcal{L}\big(\bs{\theta}^{*}\big(\mathcal{S}\big)\big)=\sum_{i=1}^{n}\ell(x_i,y_i;\bs{\theta}^{*}(\mathcal{S}))\\
    &=-\sum_{i=1}^{n}\log\mathrm{p}(y_i|x_i;\mathcal{S})=-\log\left(\prod_{i=1}^{n}\mathrm{p}(y_i|x_i;\mathcal{S})\right)=-\log\mathrm{p}(\bs{y}|\bs{x}; \mathcal{S}).
    \end{aligned}
\end{equation*}

As a result, we have 
\begin{equation*}
    \begin{aligned}
&\mathcal{L}\big(\bs{\theta}^{*}\big(\mathcal{S}\cup(x,y)\big)\big)-\mathcal{L}\big(\bs{\theta}^{*}\big(\mathcal{S}\big)\big)\\&= \log\mathrm{p}(\bs{y}|\bs{x}; \mathcal{S})-\log\mathrm{p}(\bs{y}|\bs{x}; \mathcal{S}\cup (x, y))\\&=\log\mathrm{p}(y|x;\mathcal{D})-\log \mathrm{p}(y|x,\mathcal{S})=\ell\big(x,y;\bs{\theta}^{*}(\mathcal{D})\big)
-\ell\big(x,y;\bs{\theta}^{*}(\mathcal{S})\big).
    \end{aligned}
\end{equation*}

%% file: appendix_approximation.tex
\section{Proof of Theorem~\ref{thm_SGD}}
\label{appendix:approximation}
In this section, we prove the core theorem of this paper, namely, Theorem~\ref{thm_SGD}. Before that, we prove a key first-order inequality regarding the upper-level utility function  $U(\bs{\theta}^{*}(\mathcal{S}))
\triangleq
\mathcal{L}(\bs{\theta}_{\mathrm{init}})
-
\mathcal{L}(\bs{\theta}^{*}(\mathcal{S}))$.

Next, we define some commonly used notations. At first, we set the corresponding probabilistic relaxation of he upper-level utility function  $U(\bs{\theta}^{*}(\mathcal{S}))$ as $\Psi(\bs{\pi})$, that is,
\begin{equation*}
\begin{aligned}
 \Psi(\bs{\pi})&\triangleq
		\sum_{d_{i,k}\in\mathcal{D}_{i}\cup\{\emptyset\},\forall i,k}
		\Bigg(U\Big(\bs{\theta}^{*}\big(\cup_{i=1}^{h}\cup_{k=1}^{K_i}d_{i,k}\big)\Big)
		\cdot
		\prod_{i=1}^{h}\prod_{k=1}^{K_i}\mathrm{p}(d_{i,k}\mid\bs{\pi}_{i})
		\Bigg)\\
       &=\sum_{d_{i,k}\in\mathcal{D}_{i}\cup\{\emptyset\},\forall i,k}
		\Bigg(\left(\mathcal{L}(\bs{\theta}_{\mathrm{init}})
-\mathcal{L}\Big(\bs{\theta}^{*}\big(\cup_{i=1}^{h}\cup_{k=1}^{K_i}d_{i,k}\big)\Big)\right)
		\cdot
		\prod_{i=1}^{h}\prod_{k=1}^{K_i}\mathrm{p}(d_{i,k}\mid\bs{\pi}_{i})
		\Bigg)\\
        &=\mathcal{L}(\bs{\theta}_{\mathrm{init}})-\Phi(\bs{\pi}),
\end{aligned}
\end{equation*} where $\Phi(\bs{\pi})$ is our proposed probabilistic relaxation in Eq.\eqref{eq:new_prob_relaxation}. Furthermore, for any feasible data subset $\mathcal{S}$ satisfying 
$|\mathcal{S}\cap\mathcal{D}_{i}|= K_i$ for all $i\in[h]$, we define its 
corresponding \emph{uniform prior} as $\bs{\pi}_{\mathcal{S}}
\triangleq
\sum_{i=1}^{h}\frac{1}{K_i}\one_{\mathcal{S}\cap\mathcal{D}_{i}}$.  In other words,  for each source $\mathcal{D}_i$, $\bs{\pi}_{\mathcal{S}}$ places mass
$1/K_i$ on every candidate in $\mathcal{S}\cap\mathcal{D}_i$ and mass $0$ on the
remaining candidates. Note that $\one_{\mathcal{A}}$ denotes the indicator vector of subset $\mathcal{A}$, whose
entry is $1$ if data candidates is included in $\mathcal{A}$ and $0$ otherwise.

With the above preparations in place, we then show that
\begin{theorem}\label{thm_core_inequality}
When the utility function $U(\bs{\theta}^{*}(\mathcal{S}))
\triangleq
\mathcal{L}(\bs{\theta}_{\mathrm{init}})
-
\mathcal{L}(\bs{\theta}^{*}(\mathcal{S}))$ induced by Problem Eq.~\eqref{eq:multi} is $(\gamma,\beta)$-weakly  monotone submodular, for any feasible subset $\mathcal{B}$, namely, $|\mathcal{B}\cap\mathcal{D}_{i}|\le K_{i},\forall i\in[h]$, the following inequality holds:
\begin{equation*}
\gamma^{2}\cdot U(\bs{\theta}^{*}(\mathcal{B}))-(\beta+\beta(1-\gamma)+\gamma^{2}) \cdot\Psi(\bs{\pi})\le\left\langle\bs{\pi}_{\mathcal{B}}-\bs{\pi},\nabla\Psi(\bs{\pi})\right\rangle.
\end{equation*}
\end{theorem}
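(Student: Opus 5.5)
The plan is to compute both inner products explicitly through the two-point representation of Theorem~\ref{thm:two_point_representation}, and then bound each one with a different half of Definition~\ref{def:two_sided_weak_submodularity}. Since $\Psi(\bs{\pi})=\mathcal{L}(\bs{\theta}_{\mathrm{init}})-\Phi(\bs{\pi})$, Theorem~\ref{thm:two_point_representation} gives
\[
\frac{\partial\Psi}{\partial\pi_{i,j}}=K_i\,\mathbb{E}\big[U(e^{(i)}_j\mid\mathcal{S}_{-(i,1)})\big],
\]
where I write $U(\cdot)$ for $U(\bs{\theta}^{*}(\cdot))$ and $e^{(i)}_j=(x^{(i)}_j,y^{(i)}_j)$. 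I will assume $\bs{\theta}_{\mathrm{init}}=\bs{\theta}^{*}(\emptyset)$, so that $U(\emptyset)=0$ and $\Psi\ge 0$.

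\textbf{Step 1 (the $\bs{\pi}_{\mathcal{B}}$ term).} Because $\bs{\pi}_{\mathcal{B}}$ puts mass $1/K_i$ on each element of $\mathcal{B}_i\triangleq\mathcal{B}\cap\mathcal{D}_i$, the factor $K_i$ cancels:
\[
\langle\bs{\pi}_{\mathcal{B}},\nabla\Psi\rangle=\mathbb{E}\Big[\sum_{i}\sum_{e\in\mathcal{B}_i}U(e\mid\mathcal{S}_{-(i,1)})\Big].
\]

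\textbf{Step 2 (the $\bs{\pi}$ term).} Averaging over $d_{i,1}$ and using exchangeability of the $K_i$ draws,
\[
\langle\bs{\pi},\nabla\Psi\rangle=\mathbb{E}\Big[\sum_{i}\sum_{k=1}^{K_i}U(d_{i,k}\mid\mathcal{S}_{-(i,k)})\Big],
\]
where the gain of $\emptyset$ is zero. A draw whose element is duplicated elsewhere contributes zero. Every other draw contributes $U(e\mid\mathcal{S}\setminus\{e\})$ for a distinct $e\in\mathcal{S}$. The upper-ratio inequality with $\mathcal{A}=\emptyset$ then gives
\[
\langle\bs{\pi},\nabla\Psi\rangle\le\beta\,\mathbb{E}[U(\mathcal{S})]=\beta\Psi(\bs{\pi}).
\]
This accounts for the leading $\beta$ in the coefficient of $\Psi$.

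\textbf{Step 3 (lower bound for Step 1).} For each $i$, I will apply the lower-ratio inequality with base set $\mathcal{S}_{-(i,1)}$ and superset $\mathcal{S}_{-(i,1)}\cup\mathcal{B}_i$, which gives
\[
\sum_{e\in\mathcal{B}_i}U(e\mid\mathcal{S}_{-(i,1)})\ge\gamma\big[U(\mathcal{S}_{-(i,1)}\cup\mathcal{B}_i)-U(\mathcal{S}_{-(i,1)})\big].
\]
Monotonicity gives $U(\mathcal{S}_{-(i,1)})\le U(\mathcal{S})$. I will also write
\[
U(\mathcal{S}_{-(i,1)}\cup\mathcal{B}_i)=U(\mathcal{S}\cup\mathcal{B}_i)-U(d_{i,1}\mid\mathcal{S}_{-(i,1)}\cup\mathcal{B}_i).
\]
This leaves a main part $\gamma\sum_i U(\mathcal{B}_i\mid\mathcal{S})$ and a correction $-\gamma\sum_i U(d_{i,1}\mid\mathcal{S}_{-(i,1)}\cup\mathcal{B}_i)$.

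For the main part, I will pass from blocks to single elements using the lower ratio once more. The cleanest route is to apply the lower-ratio inequality directly with base set $\mathcal{S}$ and superset $\mathcal{S}\cup\mathcal{B}$. Then monotonicity gives $U(\mathcal{S}\cup\mathcal{B})\ge U(\mathcal{B})$, so $U(\mathcal{B}\mid\mathcal{S})\ge U(\mathcal{B})-U(\mathcal{S})$. Together with the outer factor $\gamma$, this yields a bound of the form $\gamma^{2}U(\mathcal{B})-\gamma^{2}\Psi$, which matches the $\gamma^{2}$ on both sides of the target inequality.

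\textbf{Main obstacle.} The hard part is bridging from the per-source bases $\mathcal{S}_{-(i,1)}$ to the common base $\mathcal{S}$ while keeping the loss under control. This is exactly the correction term above. Its total loss should be $\beta(1-\gamma)\Psi$; combined with Step 2 and the $\gamma^2\Psi$ from the main part, that produces the coefficient $\beta+\beta(1-\gamma)+\gamma^{2}$.

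My first attempt at the correction term is to bound $\sum_i U(d_{i,1}\mid\cdot)$ by removal-type gains of distinct elements of $\mathcal{S}$, and then invoke the upper ratio to get $\beta\Psi$. Here $(1-\gamma)$ would be the weight left over after splitting each marginal gain as $\gamma(\cdot)+(1-\gamma)(\cdot)$. The difficulty is that the sets $\mathcal{S}_{-(i,1)}\cup\mathcal{B}_i$ are not subsets of $\mathcal{S}$, so the upper-ratio inequality does not apply to them directly.

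If that step fails, I will instead write
\[
U(e\mid\mathcal{S}_{-(i,1)})=U(e\mid\mathcal{S})+\big[U(d_{i,1}\mid\mathcal{S}_{-(i,1)})-U(d_{i,1}\mid\mathcal{S}_{-(i,1)}\cup\{e\})\big].
\]
Then $\sum_i U(d_{i,1}\mid\mathcal{S}_{-(i,1)})\le\beta\Psi$ by Step 2, and the negative bracketed terms are handled by convexly mixing the two available lower bounds with weights $\gamma$ and $1-\gamma$.

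Finally, I will take expectations, subtract the bound from Step 2, and rearrange to obtain the claimed inequality.
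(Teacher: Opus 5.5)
Your Steps 1 and 2 agree with the paper's computation of $\langle\bs{\pi}_{\mathcal{B}},\nabla\Psi(\bs{\pi})\rangle$ and of $\langle\bs{\pi},\nabla\Psi(\bs{\pi})\rangle\le\beta\Psi(\bs{\pi})$. Step 3, however, has a genuine gap.

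Your primary route applies the lower ratio block-wise, which leaves you with block gains $\sum_i U(\mathcal{B}_i\mid\mathcal{S})$. Definition~\ref{def:two_sided_weak_submodularity} only lower-bounds sums of \emph{singleton} gains, so it gives no inequality $\sum_i U(\mathcal{B}_i\mid\mathcal{S})\ge\gamma\,U(\mathcal{B}\mid\mathcal{S})$. That would be a block-level submodularity ratio, and the element-wise ratio does not imply it; a counterexample is a function whose items are redundant within a source but complementary across sources. You also cannot return to singleton gains at base $\mathcal{S}$: already for submodular $U$ one has $U(\mathcal{B}_i\mid\mathcal{S})\le\sum_{e\in\mathcal{B}_i}U(e\mid\mathcal{S})$, which is the wrong direction. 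The correction $-\gamma\sum_i U(d_{i,1}\mid\mathcal{S}_{-(i,1)}\cup\mathcal{B}_i)$ is also of the wrong shape. Its weight is $\gamma$, not $1-\gamma$, and its base is not contained in $\mathcal{S}$, so neither the upper ratio nor Step 2 controls it.

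The fallback does not repair this. Your identity $U(e\mid\mathcal{S}_{-(i,1)})=U(e\mid\mathcal{S})+U(d_{i,1}\mid\mathcal{S}_{-(i,1)})-U(d_{i,1}\mid\mathcal{S}_{-(i,1)}\cup\{e\})$ is exact and uses no weak submodularity, so it cannot by itself produce a factor $1-\gamma$. Step 2 gives an \emph{upper} bound on $U(d_{i,1}\mid\mathcal{S}_{-(i,1)})$, but that term enters your lower bound with a plus sign, so the bound is useless there. The term that actually needs an upper bound, $U(d_{i,1}\mid\mathcal{S}_{-(i,1)}\cup\{e\})$, is left to an unspecified ``convex mixing''.

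The missing ingredient is the lower ratio applied to a two-element set. Take a base $A$ and a draw $d$ with $A\cup\{d\}=\mathcal{S}$. Then $\gamma\big[U(d\mid A)+U(e\mid\mathcal{S})\big]=\gamma\,U(\{e,d\}\mid A)\le U(e\mid A)+U(d\mid A)$, which rearranges to $U(e\mid A)\ge\gamma\,U(e\mid\mathcal{S})-(1-\gamma)\,U(d\mid A)$. This moves each singleton gain to the common base $\mathcal{S}$ without ever involving $U(d\mid A\cup\{e\})$.

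The paper uses exchangeability of the draws to rewrite the term for the $k$-th element of $\mathcal{B}\cap\mathcal{D}_i$ with base $\mathcal{S}_{-(i,k)}$, and pairs it with $d=d_{i,k}$. Feasibility, $|\mathcal{B}\cap\mathcal{D}_i|\le K_i$, is what makes this pairing possible. Summing over all pairs gives two pieces:
\begin{itemize}
\item The main part is $\gamma\sum_{e\in\mathcal{B}}U(e\mid\mathcal{S})\ge\gamma^{2}\big(U(\mathcal{B})-U(\mathcal{S})\big)$, using a single element-wise application of the lower ratio at base $\mathcal{S}$ and monotonicity.
\item The corrections add up to $(1-\gamma)\sum_{i,k}U(d_{i,k}\mid\mathcal{S}_{-(i,k)})$, which your Step 2 bounds by $(1-\gamma)\beta\Psi(\bs{\pi})$ in expectation.
\end{itemize}
This second piece is where the $\beta(1-\gamma)$ in the coefficient comes from.
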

\begin{proof}
At first, we investigate $\langle \bs{\pi}_{\mathcal{B}}, \nabla \Psi(\bs{\pi}) \rangle$. To this end, for each source $\mathcal{D}_i$,  we rewrite $\mathcal{B}\cap\mathcal{D}_i
=
\{(x^{(i)}_{j_1},y^{(i)}_{j_1}),\dots,
(x^{(i)}_{j_{K_i}},y^{(i)}_{j_{K_i}})\}$,
where $j_1,\dots,j_{K_i}\in[n_i]$ are distinct indices.

\begin{equation}\label{equ_appendixD_1}
	\begin{aligned}
\langle\bs{\pi}_{\mathcal{S}},\nabla\Psi(\bs{\pi})\rangle&=\langle\sum_{i=1}^{h}\frac{1}{K_i}\one_{\mathcal{S}\cap\mathcal{D}_{i}},\nabla\Psi(\bs{\pi})\rangle=\sum_{i=1}^{h}\sum_{k=1}^{K_{i}}\frac{1}{K_i}\cdot\frac{\partial \Psi(\bs{\pi})}{\partial \pi_{i,j_{k}}}=-\sum_{i=1}^{h}\sum_{k=1}^{K_{i}}\frac{1}{K_i}\cdot\frac{\partial \Phi(\bs{\pi})}{\partial \pi_{i,j_{k}}}\\
&=\sum_{i=1}^{h}\sum_{k=1}^{K_{i}}\mathbb{E}\Bigg[\mathcal{L}\Big(\bs{\theta}^{*}\big(\mathcal{S}_{-(i,1)}\big)\Big)-\mathcal{L}\Big(\bs{\theta}^{*}\big(\mathcal{S}_{-(i,1)}\cup(x^{(i)}_{j_k},y^{(i)}_{j_k})\big)\Big)\Bigg]\\
&=\sum_{i=1}^{h}\sum_{k=1}^{K_{i}}\mathbb{E}\Bigg[U\Big(\bs{\theta}^{*}\big(\mathcal{S}_{-(i,1)}\cup(x^{(i)}_{j_k},y^{(i)}_{j_k})\big)\Big)
-U\Big(\bs{\theta}^{*}\big(\mathcal{S}_{-(i,1)}\big)\Big)\Bigg]\\
&=\sum_{i=1}^{h}\sum_{k=1}^{K_{i}}\mathbb{E}\Bigg[U\Big(\bs{\theta}^{*}\big(\mathcal{S}_{-(i,k)}\cup(x^{(i)}_{j_k},y^{(i)}_{j_k})\big)\Big)
-U\Big(\bs{\theta}^{*}\big(\mathcal{S}_{-(i,k)}\big)\Big)\Bigg]\\
\end{aligned}
\end{equation} where $\mathcal{S}_{-(i,k)}\triangleq \bigcup_{(i',k')\neq(i,k)}d_{i',k'}$ as the subset obtained by excluding the $k$-th draw $d_{i,k}$ from source $\mathcal{D}_{i}$ and the final inequality comes from the fact that
$\mathcal{S}_{-(i,k)}$ and $\mathcal{S}_{-(i,1)}$ are identically distributed.

Furthermore, if a set function $f$ is $(\gamma,\beta)$-weakly monotone submodular,  we firstly can show that for any two data candidates $e_{1},e_{2}\in\mathcal{D}$ and the subset $\mathcal{D}\subseteq\mathcal{D}$, we have that  
\begin{equation*}
\gamma\big(f(e_1| \mathcal{B}\cup\{e_{2}\})+f(e_{2}|\mathcal{B})\big)=\gamma\big(f(\mathcal{B}\cup\{e_{1},e_{2}\})-f(\mathcal{B})\big)\le f(e_{1}|\mathcal{B})+ f(e_{2}|\mathcal{B}),
\end{equation*}such that 
\begin{equation}\label{equ_appendix2_3}
f(e_{1}|\mathcal{B})\ge \gamma f(e_1| \mathcal{B}\cup\{e_{2}\})-(1-\gamma)f(e_{2}|\mathcal{B}).
\end{equation}

Note that, in the field of set function optimization, we often denote the marginal gain of adding $\mathcal{B}$ to $\mathcal{A}$ by $f(\mathcal{B}|\mathcal{A})\triangleq f(\mathcal{A}\cup\mathcal{B})-f(\mathcal{A})$. Especially when $\mathcal{B}\triangleq\{e\}$ is a singleton, we simply write
$f(e\mid \mathcal{A})\triangleq f(\{e\}\mid \mathcal{A})$.

According to Eq.\eqref{equ_appendix2_3}, we subsequently can have that
\begin{equation}\label{equ_appendix2_4}
	\begin{aligned}
		&\langle\bs{\pi}_{\mathcal{S}},\nabla\Psi(\bs{\pi})\rangle\\
		&=\sum_{i=1}^{h}\sum_{k=1}^{K_{i}}\mathbb{E}\Bigg[U\Big(\bs{\theta}^{*}\big(\mathcal{S}_{-(i,k)}\cup(x^{(i)}_{j_k},y^{(i)}_{j_k})\big)\Big)
-U\Big(\bs{\theta}^{*}\big(\mathcal{S}_{-(i,k)}\big)\Big)\Bigg]\\
&=\sum_{i=1}^{h}\sum_{k=1}^{K_{i}}\mathbb{E}\Bigg[
U\Big(\bs{\theta}^{*}\big((x^{(i)}_{j_k},y^{(i)}_{j_k})\mid \mathcal{S}_{-(i,k)}\big)\Big)\Bigg]\\
	 &\ge\sum_{i=1}^{h}\sum_{k=1}^{K_{i}}\mathbb{E}\Bigg[
\gamma\cdot U\Big(\bs{\theta}^{*}\big((x^{(i)}_{j_k},y^{(i)}_{j_k})\mid \mathcal{S}_{-(i,k)}\cup d_{i,k}\big)\Big)-(1-\gamma)\cdot U\Big(\bs{\theta}^{*}\big(d_{i,k}|\mathcal{S}_{-(i,k)}\big)\Big)\Bigg],
	\end{aligned}
\end{equation} where $d_{i,k}$ is the $k$-th draw  from source $\mathcal{D}_{i}$.

Then, from the first inequality in Eq.\eqref{eq:two_sided_weak_submodularity},  we have that 
\begin{equation}\label{equ_appendix2_5}
	\begin{aligned}
&\sum_{i=1}^{h}\sum_{k=1}^{K_{i}}\mathbb{E}\Bigg[U\Big(\bs{\theta}^{*}\big((x^{(i)}_{j_k},y^{(i)}_{j_k})\mid \mathcal{S}_{-(i,k)}\cup d_{i,k}\big)\Big)\Bigg]\\
	&\ge\gamma\cdot\mathbb{E}\Bigg[U\Big(\bs{\theta}^{*}\big(\cup_{i=1}^{h}\cup_{k=1}^{K_{i}}(x^{(i)}_{j_k},y^{(i)}_{j_k})\mid \mathcal{S}_{-(i,k)}\cup d_{i,k}\big)\Big)\Bigg]\\
    &=\gamma\cdot\mathbb{E}\Bigg[U\Big(\bs{\theta}^{*}\big(\mathcal{B}\mid \cup_{i=1}^{h}\cup_{k=1}^{K_{i}} d_{i,k}\big)\Big)\Bigg]\\
&=\gamma\cdot\mathbb{E}\Bigg[U\Big(\bs{\theta}^{*}\big(\mathcal{B}\cup( \cup_{i=1}^{h}\cup_{k=1}^{K_{i}} d_{i,k})\big)\Big)-U\Big(\bs{\theta}^{*}\big( \cup_{i=1}^{h}\cup_{k=1}^{K_{i}} d_{i,k}\big)\Big)\Bigg]\\
	&\ge\gamma\cdot\Big(U(\bs{\theta}^{*}(\mathcal{B}))-\Psi(\bs{\pi})\Big),
\end{aligned}
\end{equation} where the final inequality follows from the monotonicity of the utility function $U$.

Furthermore,  from the second inequality in Eq.\eqref{eq:two_sided_weak_submodularity}, we also have, 
\begin{equation}\label{equ_appendix2_6}
	\begin{aligned}
		&\sum_{i=1}^{h}\sum_{k=1}^{K_{i}}\mathbb{E}\Bigg[U\Big(\bs{\theta}^{*}\big(d_{i,k}|\mathcal{S}_{-(i,k)}\big)\Big)\Bigg]\\
        &\le\beta\cdot\mathbb{E}\Bigg[ U\Big(\bs{\theta}^{*}\big(\cup_{i=1}^{h}\cup_{k=1}^{K_{i}} d_{i,k}\big)\Big)-U(\bs{\theta}^{*}(\emptyset))\Bigg]\\
		&\le\beta\cdot\mathbb{E}\Bigg[ U\Big(\bs{\theta}^{*}\big(\cup_{i=1}^{h}\cup_{k=1}^{K_{i}} d_{i,k}\big)\Big)\Bigg]=\beta\cdot\Psi(\bs{\pi}), 
	\end{aligned}
\end{equation}where the second inequality follows from Eq.\eqref{eq:two_sided_weak_submodularity}.
	
Merging Eq.\eqref{equ_appendix2_6} and Eq.\eqref{equ_appendix2_5} into Eq.\eqref{equ_appendix2_4}, we have that
	\begin{equation}\label{add+4}
		\begin{aligned}
			&\langle\bs{\pi}_{\mathcal{S}},\nabla\Psi(\bs{\pi})\rangle\\
&\ge\sum_{i=1}^{h}\sum_{k=1}^{K_{i}}\mathbb{E}\Bigg[
\gamma\cdot U\Big(\bs{\theta}^{*}\big((x^{(i)}_{j_k},y^{(i)}_{j_k})\mid \mathcal{S}_{-(i,k)}\cup d_{i,k}\big)\Big)-(1-\gamma)\cdot U\Big(\bs{\theta}^{*}\big(d_{i,k}|\mathcal{S}_{-(i,k)}\big)\Big)\Bigg]\\
&\ge\gamma^{2}\Big(U(\bs{\theta}^{*}(\mathcal{B}))-\Psi(\bs{\pi})\Big)-(1-\gamma)\beta\cdot\Psi(\bs{\pi})\\
&=\gamma^{2}\cdot U(\bs{\theta}^{*}(\mathcal{B}))-(\beta(1-\gamma)+\gamma^{2})\cdot\Psi(\bs{\pi}).
		\end{aligned}
	\end{equation}

Secondly, we investigate $\langle \bs{\pi}, \nabla \Psi(\bs{\pi}) \rangle$.

At first,  from the fifth equality in Eq~\eqref{add+1} and Eq~\eqref{add+2}, we have that 
\begin{equation*}
	\begin{aligned}
		&\frac{\partial\Psi(\bs{\pi})}{\partial\pi_{i,j}}=-\frac{\partial\Phi(\bs{\pi})}{\partial\pi_{i,j}}\\&=-K_{i}\left(\sum_{d_{\hat{i},\hat{k}}\in\mathcal{D}_{\hat{i}}\cup\{\emptyset\},(\hat{i},\hat{k})\neq(i,1)}\left(\sum_{d_{i,1}\in\mathcal{D}_{i}\cup\{\emptyset\}}\mathcal{L}\Big(\bs{\theta}^{*}\big(\cup_{\hat{i},\hat{k}}d_{\hat{i},\hat{k}}\big)\Big)\nabla_{\pi_{i,j}}\mathrm{p}(d_{i,1}\mid\bs{\pi}_{i})\right)\prod_{(\hat{i},\hat{k})\neq(i,1)}\mathrm{p}(d_{\hat{i},\hat{k}}\mid\bs{\pi}_{\hat{i}})\right)\\
&=-K_{i}\left(\sum_{d_{\hat{i},\hat{k}}\in\mathcal{D}_{\hat{i}}\cup\{\emptyset\},(\hat{i},\hat{k})\neq(i,1)}\mathcal{L}\Big(\bs{\theta}^{*}\big((x^{(i)}_{j},y^{(i)}_{j})\mid \cup_{(\hat{i},\hat{k})\neq(k,1)}d_{\hat{i},\hat{k}}\big)\Big)\prod_{(\hat{i},\hat{k})\neq(i,1)}\mathrm{p}(d_{\hat{i},\hat{k}}\mid\bs{\pi}_{\hat{i}})\right)\\
&=K_{i}\left(\sum_{d_{\hat{i},\hat{k}}\in\mathcal{D}_{\hat{i}}\cup\{\emptyset\},(\hat{i},\hat{k})\neq(i,1)}U\Big(\bs{\theta}^{*}\big((x^{(i)}_{j},y^{(i)}_{j})\mid \cup_{(\hat{i},\hat{k})\neq(k,1)}d_{\hat{i},\hat{k}}\big)\Big)\prod_{(\hat{i},\hat{k})\neq(i,1)}\mathrm{p}(d_{\hat{i},\hat{k}}\mid\bs{\pi}_{\hat{i}})\right)
	\end{aligned}
\end{equation*} 
As a result, we have that
{\small
\begin{equation*}
\begin{aligned}
	&\langle \bs{\pi}, \nabla \Psi(\bs{\pi}) \rangle\\&=\sum_{i=1}^{h}\sum_{j=1}^{n_{i}}\pi_{i,j}\cdot\frac{\partial\Psi(\bs{\pi})}{\partial\pi_{i,j}}\\
&=\sum_{i=1}^{h}\sum_{j=1}^{n_{i}}\pi_{i,j}K_{i}\left(\sum_{d_{\hat{i},\hat{k}}\in\mathcal{D}_{\hat{i}}\cup\{\emptyset\},(\hat{i},\hat{k})\neq(i,1)}U\Big(\bs{\theta}^{*}\big((x^{(i)}_{j},y^{(i)}_{j})\mid \cup_{(\hat{i},\hat{k})\neq(k,1)}d_{\hat{i},\hat{k}}\big)\Big)\prod_{(\hat{i},\hat{k})\neq(i,1)}\mathrm{p}(d_{\hat{i},\hat{k}}\mid\bs{\pi}_{\hat{i}})\right)\\    
&=\sum_{i=1}^{h}K_{i}\sum_{j=1}^{n_{i}}\mathrm{p}\left(d_{i,1}=(x^{(i)}_{j},y^{(i)}_{j})\mid\bs{\pi}_{i}\right)\left(\sum_{d_{\hat{i},\hat{k}}\in\mathcal{D}_{\hat{i}}\cup\{\emptyset\},(\hat{i},\hat{k})\neq(i,1)}U\Big(\bs{\theta}^{*}\big((x^{(i)}_{j},y^{(i)}_{j})\mid \cup_{(\hat{i},\hat{k})\neq(k,1)}d_{\hat{i},\hat{k}}\big)\Big)\prod_{(\hat{i},\hat{k})\neq(i,1)}\mathrm{p}(d_{\hat{i},\hat{k}}\mid\bs{\pi}_{\hat{i}})\right)\\ &=\sum_{i=1}^{h}K_{i}\sum_{j=1}^{n_{i}}\left(\sum_{d_{\hat{i},\hat{k}}\in\mathcal{D}_{\hat{i}}\cup\{\emptyset\},(\hat{i},\hat{k})\neq(i,1)}U\Big(\bs{\theta}^{*}\big((x^{(i)}_{j},y^{(i)}_{j})\mid \cup_{(\hat{i},\hat{k})\neq(k,1)}d_{\hat{i},\hat{k}}\big)\Big)\cdot\mathrm{p}\left(d_{i,1}=(x^{(i)}_{j},y^{(i)}_{j})\mid\bs{\pi}_{i}\right)\prod_{(\hat{i},\hat{k})\neq(i,1)}\mathrm{p}(d_{\hat{i},\hat{k}}\mid\bs{\pi}_{\hat{i}})\right)\\ &=\sum_{i=1}^{h}K_{i}\sum_{j=1}^{n_{i}}\left(\sum_{d_{\hat{i},\hat{k}}\in\mathcal{D}_{\hat{i}}\cup\{\emptyset\},\forall\hat{i},\hat{k}}U\Big(\bs{\theta}^{*}\big(d_{k,1}\mid \cup_{(\hat{i},\hat{k})\neq(k,1)}d_{\hat{i},\hat{k}}\big)\Big)\prod_{\hat{i}=1}^{h}\prod_{\hat{k}=1}^{K_{i}}\mathrm{p}(d_{\hat{i},\hat{k}}\mid\bs{\pi}_{\hat{i}})\right)\\   &=\sum_{i=1}^{h}K_{i}\cdot\mathbb{E}\Bigg[U\Big(\bs{\theta}^{*}\big(d_{i,1}\mid \cup_{(\hat{i},\hat{k})\neq(k,1)}d_{\hat{i},\hat{k}}\big)\Big)\Bigg]=\sum_{i=1}^{h}\sum_{k=1}^{K_{i}}\cdot\mathbb{E}\Bigg[U\Big(\bs{\theta}^{*}\big(d_{i,k}\mid \cup_{(\hat{i},\hat{k})\neq(k,1)}d_{\hat{i},\hat{k}}\big)\Big)\Bigg]\\
&=\sum_{i=1}^{h}\sum_{k=1}^{K_{i}}\mathbb{E}\Bigg[U\Big(\bs{\theta}^{*}\big(d_{i,k}|\mathcal{S}_{-(i,k)}\big)\Big)\Bigg].
\end{aligned}
\end{equation*}} 

Then, from the weak submodularity of $U$, we have 
\begin{equation}\label{add+3}
    \langle \bs{\pi}, \nabla \Psi(\bs{\pi}) \rangle\le\sum_{i=1}^{h}\sum_{k=1}^{K_{i}}\mathbb{E}\Bigg[U\Big(\bs{\theta}^{*}\big(d_{i,k}|\mathcal{S}_{-(i,k)}\big)\Big)\Bigg]\le\beta\cdot\Psi(\bs{\pi}),
\end{equation} where the final inequality  comes from Eq.\eqref{eq:two_sided_weak_submodularity}.

As a result, by merging Eq.~\ref{add+3} and Eq.~\ref{add+4}, we then have that
\begin{equation*}
    \begin{aligned}
      \left\langle\bs{\pi}_{\mathcal{B}}-\bs{\pi},\nabla\Psi(\bs{\pi})\right\rangle&=\left\langle\bs{\pi}_{\mathcal{B}},\nabla\Psi(\bs{\pi})\right\rangle-\left\langle\bs{\pi},\nabla\Psi(\bs{\pi})\right\rangle\\
      &=\ge \gamma^{2}\cdot U(\bs{\theta}^{*}(\mathcal{B}))-(\beta(1-\gamma)+\gamma^{2})\cdot\Psi(\bs{\pi})-\beta\cdot\Psi(\bs{\pi})\\
      &=\ge\gamma^{2}\cdot U(\bs{\theta}^{*}(\mathcal{B}))-(\beta+\beta(1-\gamma)+\gamma^{2}) \cdot\Psi(\bs{\pi}).
    \end{aligned}
\end{equation*}
\end{proof}

Beside Theorem~\ref{thm_core_inequality}, we next prove the boundedness of $\nabla\Psi(\bs{\pi})$, that is to say, 
\begin{lemma}\label{lemma_bound_gradient}
 if we denote the maximum marginal value of the utility function $U(\bs{\theta}^{*}(\mathcal{S}))$ as $M_{U}\triangleq\max_{\mathcal{S}\subseteq\mathcal{D},e\in\mathcal{D}\setminus\mathcal{S}}\Big(U(\bs{\theta}^{*}(\mathcal{S}\cup\{e\}))-U(\bs{\theta}^{*}(\mathcal{S}))\Big)$, we have that, for any $i\in[h]$, $j\in[n_{i}]$,
	\begin{equation*}
		\Big|\frac{\partial\Psi(\bs{\pi})}{\partial\pi_{i,j}}\Big|\le M_{U}\cdot K_{max},
	\end{equation*} where $K_{max}\triangleq\max_{i\in[h]}K_i$ is the maximum budget over the $h$ communities $\{\mathcal{D}_{1},\dots,\mathcal{D}_{h}\}$. Similarly, we also can infer that the two-point gradient estimator in Eq.\eqref{eq:two_point_representation}
    also can be bounded by $M_{U}\cdot K_{max}$.
\end{lemma}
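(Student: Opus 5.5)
The bound will follow from the two-point representation of Theorem~\ref{thm:two_point_representation}, rewritten in terms of the utility $U$. Since $\Psi(\bs{\pi})=\mathcal{L}(\bs{\theta}_{\mathrm{init}})-\Phi(\bs{\pi})$ and the constant has zero derivative, Eq.~\eqref{eq:two_point_representation} gives, for every $i\in[h]$ and $j\in[n_i]$,
\begin{equation*}
\frac{\partial\Psi(\bs{\pi})}{\partial\pi_{i,j}}=K_i\cdot\mathbb{E}\Big[U\big(\bs{\theta}^{*}(\mathcal{S}_{-(i,1)}\cup(x^{(i)}_{j},y^{(i)}_{j}))\big)-U\big(\bs{\theta}^{*}(\mathcal{S}_{-(i,1)})\big)\Big].
\end{equation*}
This is exactly the identity used in the second part of the proof of Theorem~\ref{thm_core_inequality}. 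The expectation is over the draws other than $d_{i,1}$, so each realization of the integrand is a single-element marginal gain of $U$.

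Next, bound each realization pointwise. Fix a realization of $\mathcal{S}_{-(i,1)}$ and write $e=(x^{(i)}_{j},y^{(i)}_{j})$.
\begin{itemize}
\item If $e\in\mathcal{S}_{-(i,1)}$, the difference is $0$.
\item If $e\notin\mathcal{S}_{-(i,1)}$, the difference is $U(\bs{\theta}^{*}(\mathcal{S}_{-(i,1)}\cup\{e\}))-U(\bs{\theta}^{*}(\mathcal{S}_{-(i,1)}))$. By monotonicity of $U$ (part of the weak-submodularity hypothesis) it is nonnegative, and by the definition of $M_U$ it is at most $M_U$.
\end{itemize}
So every realization lies in $[0,M_U]$. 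Taking expectations keeps the quantity in $[0,M_U]$, and multiplying by $K_i\le K_{max}$ gives $|\partial\Psi/\partial\pi_{i,j}|\le M_U K_{max}$.

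For the stochastic estimator, note that the sampled two-point quantity $K_i[\mathcal{L}(\bs{\theta}^{*}(\mathcal{S}_{-(i,1)}\cup e))-\mathcal{L}(\bs{\theta}^{*}(\mathcal{S}_{-(i,1)}))]$ is, realization by realization, exactly $-K_i$ times one of the marginal gains just bounded. Its absolute value is therefore at most $K_{max}M_U$ without taking any expectation. If one instead assumes the reducible representation Eq.~\eqref{eq:reducible_Rep} holds exactly, the reducible estimator Eq.~\eqref{eq:two_point_representation1} coincides with this quantity and inherits the same bound.

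The only real subtlety is the case where $M_U$ might fail to dominate a marginal gain. The maximum defining $M_U$ is taken only over $e\notin\mathcal{S}$, so the case $e\in\mathcal{S}_{-(i,1)}$ has to be handled separately; it gives $0$, as above. Without monotonicity one would need $M_U$ to be defined as a maximum of absolute marginal values, but under the stated hypothesis the one-sided bound suffices.
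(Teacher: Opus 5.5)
Your argument is correct and is essentially the one the paper intends. The paper states this lemma without a written proof and relies on the marginal-gain form $\partial\Psi(\bs{\pi})/\partial\pi_{i,j}=K_i\,\mathbb{E}\big[U(\bs{\theta}^{*}(\mathcal{S}_{-(i,1)}\cup\{e\}))-U(\bs{\theta}^{*}(\mathcal{S}_{-(i,1)}))\big]$ derived in the proof of Theorem~\ref{thm_core_inequality}, just as you do. Your explicit handling of the case $e\in\mathcal{S}_{-(i,1)}$ makes the argument more careful than the statement as written. So does your remark that monotonicity is required: it is not stated in the lemma, and without it the bound with the one-sided $M_U$ can fail.
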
 

Next, we prove Theorem~\ref{thm_SGD}.
\begin{proof}
At first, let $\mathcal{S}^{*}\subseteq\mathcal{D}$ denote the optimal solution for Eq.\eqref{eq:multi}. Then, 
we denote $\bs{\pi}^{(t)}\triangleq(\bs{\pi}^{(t+1)}_1,\dots,\bs{\pi}^{(t+1)}_h)$ and set the correponding two-point gradient estimator as $\widehat{\nabla\Phi}(\bs{\pi})$, namely, $\widehat{\nabla\Phi}(\bs{\pi})\triangleq[\widehat{\frac{\partial\Phi(\bs{\pi})}{\partial\pi_{i,j}}}]_{i,j}$ with $\widehat{\frac{\partial\Phi(\bs{\pi})}{\partial\pi_{i,j}}}\triangleq
K_{i}\left(\mathcal{L}\Big(\bs{\theta}^{*}\big(\mathcal{S}_{-(i,1)}\cup(x^{(i)}_{j},y^{(i)}_{j})\big)\Big)
-
\mathcal{L}\Big(\bs{\theta}^{*}\big(\mathcal{S}_{-(i,1)}\big)\Big)\right)$. 

Subsequently, according to the projective step in \texttt{ReSGD}, we can show that
\begin{equation*}
\begin{aligned}
\left\|\bs{\pi}^{(t+1)}-\bs{\pi}_{\mathcal{S}^{*}}\right\|_{2}^{2}&\le\left\|\bs{\pi}^{(t)}-\eta\cdot\widehat{\nabla\Phi}(\bs{\pi}^{(t)})-\bs{\pi}_{\mathcal{S}^{*}}\right\|_{2}^{2}\\
&=\left\|\bs{\pi}^{(t)}-\bs{\pi}_{\mathcal{S}^{*}}\right\|_{2}^{2}-2\eta\cdot\left\langle\widehat{\nabla\Phi}(\bs{\pi}^{(t)}), \bs{\pi}^{(t)}-\bs{\pi}_{\mathcal{S}^{*}}\right\rangle+\eta^{2}\left\|\widehat{\nabla\Phi}(\bs{\pi}^{(t)})\right\|_{2}^{2}\\
&=\left\|\bs{\pi}^{(t)}-\bs{\pi}_{\mathcal{S}^{*}}\right\|_{2}^{2}-2\eta\cdot\left\langle\widehat{\nabla\Phi}(\bs{\pi}^{(t)}), \bs{\pi}^{(t)}-\bs{\pi}_{\mathcal{S}^{*}}\right\rangle+\eta^{2}\cdot n\cdot M_{U}^{2}\cdot K_{max}^{2}\\
\end{aligned}		
\end{equation*} where  the final  inequality comes from Lemma~\ref{lemma_bound_gradient}.

As a result, we have that
\begin{equation}\label{equ_SGA_1}
	\begin{aligned}
		&\mathbb{E}\left\|\bs{\pi}^{(t+1)}-\bs{\pi}_{\mathcal{S}^{*}}\right\|_{2}^{2}\le\left\|\bs{\pi}^{(t)}-\eta\cdot\widehat{\nabla\Phi}(\bs{\pi}^{(t)})-\bs{\pi}_{\mathcal{S}^{*}}\right\|_{2}^{2}\\
&=\mathbb{E}\left\|\bs{\pi}^{(t)}-\bs{\pi}_{\mathcal{S}^{*}}\right\|_{2}^{2}-2\eta\cdot\mathbb{E}\left\langle\widehat{\nabla\Phi}(\bs{\pi}^{(t)}), \bs{\pi}^{(t)}-\bs{\pi}_{\mathcal{S}^{*}}\right\rangle+\eta^{2}\cdot n\cdot M_{U}^{2}\cdot K_{max}^{2}\\
&=\mathbb{E}\left\|\bs{\pi}^{(t)}-\bs{\pi}_{\mathcal{S}^{*}}\right\|_{2}^{2}-2\eta\cdot\mathbb{E}\bigg[\left\langle\mathbb{E}\big[\widehat{\nabla\Phi}(\bs{\pi}^{(t)})\mid \bs{\pi}^{(t)}\big], \bs{\pi}^{(t)}-\bs{\pi}_{\mathcal{S}^{*}}\right\rangle\bigg]+\eta^{2}\cdot n\cdot M_{U}^{2}\cdot K_{max}^{2}\\
&=\mathbb{E}\left\|\bs{\pi}^{(t)}-\bs{\pi}_{\mathcal{S}^{*}}\right\|_{2}^{2}-2\eta\cdot\mathbb{E}\bigg[\left\langle\nabla\Phi(\bs{\pi}^{(t)}), \bs{\pi}^{(t)}-\bs{\pi}_{\mathcal{S}^{*}}\right\rangle\bigg]+\eta^{2}\cdot n\cdot M_{U}^{2}\cdot K_{max}^{2}\\
&=\mathbb{E}\left\|\bs{\pi}^{(t)}-\bs{\pi}_{\mathcal{S}^{*}}\right\|_{2}^{2}+2\eta\cdot\mathbb{E}\bigg[\left\langle\nabla\Psi(\bs{\pi}^{(t)}), \bs{\pi}^{(t)}-\bs{\pi}_{\mathcal{S}^{*}}\right\rangle\bigg]+\eta^{2}\cdot n\cdot M_{U}^{2}\cdot K_{max}^{2},
	\end{aligned}		
\end{equation} where the final equality follows from $\nabla\Psi=-\nabla\Phi$.

Furthermore, according to $(\gamma,\beta)$-weak submodularity, namely, Theorem~\ref{thm_core_inequality}, we also can show that,
\begin{equation*}
    \begin{aligned}
      &\mathbb{E}\left\|\bs{\pi}^{(t+1)}-\bs{\pi}_{\mathcal{S}^{*}}\right\|_{2}^{2}\le\mathbb{E}\left\|\bs{\pi}^{(t)}-\bs{\pi}_{\mathcal{S}^{*}}\right\|_{2}^{2}+2\eta\cdot\mathbb{E}\bigg[\left\langle\nabla\Psi(\bs{\pi}^{(t)}), \bs{\pi}^{(t)}-\bs{\pi}_{\mathcal{S}^{*}}\right\rangle\bigg]+\eta^{2}\cdot n\cdot M_{U}^{2}\cdot K_{max}^{2}\\
      &\le\mathbb{E}\left\|\bs{\pi}^{(t)}-\bs{\pi}_{\mathcal{S}^{*}}\right\|_{2}^{2}-2\eta\cdot\mathbb{E}\bigg[\left\langle\nabla\Psi(\bs{\pi}^{(t)}), \bs{\pi}_{\mathcal{S}^{*}}-\bs{\pi}^{(t)}\right\rangle\bigg]+\eta^{2}\cdot n\cdot M_{U}^{2}\cdot K_{max}^{2}\\
      &\le\mathbb{E}\left\|\bs{\pi}^{(t)}-\bs{\pi}_{\mathcal{S}^{*}}\right\|_{2}^{2}+2\eta\cdot\left((\beta+\beta(1-\gamma)+\gamma^{2})\cdot\mathbb{E}[\Psi(\bs{\pi}^{(t)})]-\gamma^{2}\cdot U(\bs{\theta}^{*}(\mathcal{B}))\right)+\eta^{2}\cdot n\cdot M_{U}^{2}\cdot K_{max}^{2}
    \end{aligned}
\end{equation*}

 We further can have that
\begin{equation}\label{add++}
    \begin{aligned}
     &\sum_{t=0}^{T-1}2\eta\cdot\left(\gamma^{2}\cdot U(\bs{\theta}^{*}(\mathcal{B}))-(\beta+\beta(1-\gamma)+\gamma^{2})\cdot\mathbb{E}[\Psi(\bs{\pi}^{(t)})]\right)\\&\le\sum_{t=0}^{T-1}\left(\mathbb{E}\left\|\bs{\pi}^{(t)}-\bs{\pi}_{\mathcal{S}^{*}}\right\|_{2}^{2}-\mathbb{E}\left\|\bs{\pi}^{(t+1)}-\bs{\pi}_{\mathcal{S}^{*}}\right\|_{2}^{2} \right)+nT\eta^{2}\cdot M_{U}^{2}\cdot K_{max}^{2}\\&=\mathbb{E}\left\|\bs{\pi}^{(0)}-\bs{\pi}_{\mathcal{S}^{*}}\right\|_{2}^{2}-\mathbb{E}\left\|\bs{\pi}^{(T)}-\bs{\pi}_{\mathcal{S}^{*}}\right\|_{2}^{2}+nT\eta^{2}\cdot M_{U}^{2}\cdot K_{max}^{2}.
    \end{aligned}
\end{equation}

Due to the randomness of the index $\tau$,
 we finally can infer that the subset return by our \texttt{ReSGD} satifies that
\begin{equation*}
	\begin{aligned}
		&\mathbb{E}[U(\bs{\theta}^{*}(\mathcal{S}))]=\mathbb{E}[\Psi(\bs{\pi}^{(\tau)})]=\frac{\sum_{t=0}^{T-1}\mathbb{E}[\Psi(\bs{\pi}^{(t)})]}{T}\\
        &\ge\Big(\frac{\gamma^{2}}{\beta+\beta(1-\gamma)+\gamma^{2}}\Big)\cdot U(\bs{\theta}^{*}(\mathcal{B}))-\frac{\mathbb{E}\left\|\bs{\pi}^{(0)}-\bs{\pi}_{\mathcal{S}^{*}}\right\|_{2}^{2}-\mathbb{E}\left\|\bs{\pi}^{(T)}-\bs{\pi}_{\mathcal{S}^{*}}\right\|_{2}^{2}}{2\eta\cdot T\cdot(\beta+\beta(1-\gamma)+\gamma^{2})}-\frac{n\cdot M_{U}^{2}\cdot K_{max}^{2}}{2(\beta+\beta(1-\gamma)+\gamma^{2})}\cdot\eta,   
	\end{aligned}	
\end{equation*} where the second equality follows from the randomness of the index $\tau$ and the final inequality comes from Eq.\eqref{add++}.

Since the Lemma B.1 in \citep{thiery2022two} show that  $\beta(1-\gamma)+\gamma^{2}\ge\frac{3}{4}$, when $\eta
\triangleq
\sqrt{\frac{1}{nK_{max}^{2}\cdot T}}$, we have $\mathbb{E}\!\left[
U(\bs{\theta}^{*}(\mathcal{S}))
\right]
\ge
\frac{\gamma^{2}}{\beta+\beta(1-\gamma)+\gamma^{2}}
\cdot U(\bs{\theta}^{*}(\mathcal{S}^{*}))
-
\mathcal{O}\Big(
\sqrt{\frac{n\cdot K_{max}^{2}}{T}}
\Big)$. 

\end{proof}

%% file: appendix_Projection.tex
\section{Projection Calculation}\label{appendix:projection}
\label{app:projection}
\begin{algorithm}[t]
\caption{Projection onto $\Delta_{n}$}
\label{alg:projection_delta}
\KwIn{Vector $\bs{z}\in\mathbb{R}^{n}$}
\KwOut{Vector $\bs{\pi}\triangleq\Pi_{\Delta_n}(\bs{z})$}

Set $\bs{z}_{+}\leftarrow \max\{\bs{z},\bs{0}\}$\;

\If{$\bs{1}^{\top}\bs{z}_{+}\le 1$}{
    \Return{$\bs{z}_{+}$}\;
}

Find $\lambda^{*}\ge 0$ such that $\sum_{j=1}^{n}\max\{z_j-\lambda^{*},0\}=1$ by bisection.

Set $\pi_j\leftarrow\max\{z_j-\lambda^{*},0\}$ for all $j\in[n]$\;

\Return{$\bs{\pi}$}\;
\end{algorithm}
In Algorithm~\ref{alg:reducible_sgd}, each prior vector is updated by projected
gradient descent, namely, 
\begin{equation*}
    \bs{\pi}_{i}^{(t+1)}
\triangleq
\Pi_{\Delta_{n_i}}\left(
\bs{\pi}_{i}^{(t)}-\eta\widehat{\bs{g}}_{i}^{(t)}
\right),
\quad \forall i\in[h],
\end{equation*} where $\Delta_{n_i}\triangleq\{\bs{p}\in[0,1]^{n_i}:\ \|\bs{p}\|_1\le 1\}$ denotes the sub-probability simplex.

If we set  $\bs{z}
\triangleq
\bs{\pi}_{i}^{(t)}-\eta\widehat{\bs{g}}_{i}^{(t)}$, the projection operation is equivalent to solving the following quadratic
programming:
\begin{equation}\label{eq:qp}
    \min_{\bs{\pi}\in\mathbb{R}^{n}}
\frac{1}{2}\|\bs{\pi}-\bs{z}\|_2^2,
\quad
\text{s.t.}\quad
\bs{\pi}\ge \bs{0},\quad
\bs{1}^{\top}\bs{\pi}\le 1.
\end{equation}
The main purpose of this section is to provide an efficient method for this
singly constrained quadratic programming problem Eq.~\eqref{eq:qp}, namely, Algorithm~\ref{alg:projection_delta}.

Specifically, Algorithm~\ref{alg:projection_delta} separates the projection
into two cases. It first computes the nonnegative part
$\bs{z}_{+}\triangleq\max\{\bs{z},\bs{0}\}$. When
$\|\bs{z}_{+}\|_{1}\le 1$, this vector already lies in $\Delta_n$ and is
therefore the Euclidean projection of $\bs{z}$ onto $\Delta_n$.

Specifically, Algorithm~\ref{alg:projection_delta} proceeds in two cases.
It first computes the nonnegative truncation
$\bs{z}_{+}\triangleq\max\{\bs{z},\bs{0}\}$. If
$\|\bs{z}_{+}\|_{1}\le 1$, then $\bs{z}_{+}$ is already feasible for
Eq.~\eqref{eq:qp}. In this case, no further normalization is needed, and the
projection is simply given by $\bs{z}_{+}$.

If $\|\bs{z}_{+}\|_{1}>1$, we find the optimal dual variable
$\lambda^{*}\ge 0$ by solving the following one-dimensional equation: $\sum_{j=1}^{n}\max\{z_j-\lambda^{*},0\}=1$.Since the left-hand side is monotonically decreasing in $\lambda^{*}$, this
equation can be efficiently solved by bisection~\citep{pardalos1990algorithm}. After obtaining
$\lambda^{*}$, the projected vector is computed coordinate-wise as $\pi_j\triangleq\max\{z_j-\lambda^{*},0\},
\quad \forall j\in[n]$.

We now justify Algorithm~\ref{alg:projection_delta}. If
$\bs{z}_{+}\triangleq\max\{\bs{z},\bs{0}\}$ satisfies
$\bs{1}^{\top}\bs{z}_{+}\le 1$, then $\bs{z}_{+}\in\Delta_n$. Moreover, for
each coordinate, the closest nonnegative point to $z_j$ is
$\max\{z_j,0\}$. Hence, under this case, the nonnegativity projection already
satisfies the $\ell_1$ constraint, and therefore $\Pi_{\Delta_n}(\bs{z})=\bs{z}_{+}$.

It remains to consider the case $\bs{1}^{\top}\bs{z}_{+}>1$. In this case, the
constraint $\bs{1}^{\top}\bs{\pi}\le 1$ must be active at the optimum; otherwise,
one could still decrease the objective by moving $\bs{\pi}$ closer to
$\bs{z}_{+}$. Therefore, the problem reduces to finding the closest point to
$\bs{z}$ over the set $\left\{
\bs{\pi}\in\mathbb{R}^{n}_{+}:
\bs{1}^{\top}\bs{\pi}=1
\right\}$. Consider the following Lagrangian
\[
\mathcal{L}(\bs{\pi},\lambda,\bs{\mu})
=
\frac{1}{2}\|\bs{\pi}-\bs{z}\|_2^2
+
\lambda(\bs{1}^{\top}\bs{\pi}-1)
-
\bs{\mu}^{\top}\bs{\pi},
\]
where $\lambda\in\mathbb{R}$ is the multiplier for the equality constraint
$\bs{1}^{\top}\bs{\pi}=1$, and $\bs{\mu}\ge\bs{0}$ is the multiplier for the
nonnegativity constraint $\bs{\pi}\ge\bs{0}$. The KKT conditions give
\[
\pi_j-z_j+\lambda-\mu_j=0,\qquad
\mu_j\pi_j=0,\qquad
\pi_j\ge 0,\qquad
\mu_j\ge 0,
\quad \forall j\in[n].
\]
From these conditions, if $\pi_j>0$, then $\mu_j=0$ and thus
$\pi_j=z_j-\lambda$. If $\pi_j=0$, then $\lambda-z_j=\mu_j\ge0$ such that $z_j-\lambda\le0$. Therefore, the optimal solution must take the form $\pi_j^{*}
=
\max\{z_j-\lambda^{*},0\},
\quad \forall j\in[n]$, for some scalar $\lambda^{*}$. Since we know that the budget constraint is active in this
case, $\lambda^{*}$ is determined by $\sum_{j=1}^{n}\max\{z_j-\lambda^{*},0\}=1$.

%% file: appendix_reservoir.tex
\section{Effective Reservoir Rounding}\label{appendix:RR}
\begin{algorithm}[t]
\caption{Reservoir Rounding Scheme}
\label{alg:Reservoir_Rounding}
\KwIn{Priors $\{\bs{\pi}_i\}_{i=1}^{h}$, budgets $\{K_i\}_{i=1}^{h}$, datasets $\{\mathcal{D}_i\}_{i=1}^{h}$}
\KwOut{Selected subset $\mathcal{S}\triangleq
\bigcup_{i=1}^{h}\bigcup_{k=1}^{K_i}d_{i,k}$}

\For{$i=1,2,\ldots,h$}{
    Set $K_i^{\min}\leftarrow \min\{\|\bs{\pi}_i\|_0,K_i\}$\;
    Draw $K_i^{\min}$ distinct samples
    $\{d_{i,1},\ldots,d_{i,K_i^{\min}}\}$ from
    $\mathcal{D}_i$ via $\bs{\pi}_i$ \emph{without replacement}\;

    \If{$K_i^{\min}<K_i$}{
        Draw $K_i-K_i^{\min}$ additional samples $\{d_{i,K_i^{\min}+1},\ldots,d_{i,K_i}\}$ \emph{uniformly at random} from $\mathcal{D}_i\setminus\{d_{i,1},\ldots,d_{i,K_i^{\min}}\}$\;
    }
}
\end{algorithm}

A straightforward rounding strategy is to \emph{independently} sample $K_i$ data
candidates from each source $\mathcal{D}_i$ according to the learned prior
$\bs{\pi}_i$. However, this operation may select the same
data candidate multiple times. Consequently, the number of distinct selected
data points from source $\mathcal{D}_i$ can be smaller than the prescribed
budget $K_i$, i.e., $\left|\bigcup_{k=1}^{K_i}\{d_{i,k}\}\right|<K_i$. When the induced utility function is monotone, such duplicate selections can
reduce the practical utility of the final selected subset, since part of the
selection budget is effectively wasted.

To address this issue, we adopt a simple without-replacement sampling scheme, termed Reservoir Rounding. As detailed in Algorithm~\ref{alg:Reservoir_Rounding}, Reservoir Rounding first computes
$K_i^{\min}\triangleq\min\{\|\bs{\pi}_i\|_0,K_i\}$, which is the maximum number
of distinct candidates that can be sampled according to the nonzero support of
$\bs{\pi}_i$. It then draws $K_i^{\min}$ distinct data candidates from
$\mathcal{D}_i$ according to $\bs{\pi}_i$ \emph{without replacement}. If
$K_i^{\min}<K_i$,  the remaining $K_i-K_i^{\min}$ candidates are \emph{uniformly} sampled
from the unselected data points in $\mathcal{D}_i$. In this way, reservoir
rounding scheme always returns exactly $K_i$ distinct candidates from each source
$\mathcal{D}_i$.  In particular, we show that this reservoir rounding scheme not only ensures
that the final selected subset contains exactly $K_i$ distinct data candidates
from each source $\mathcal{D}_i$ for all $i\in[h]$, but also guarantees that
the expected utility of the resulting subset is \emph{lossless}, that is to say,
\begin{theorem}
\label{thm:Reservoir_Rounding}
Suppose the induced utility function
$U(\bs{\theta}^{*}(\mathcal{B}))$ is monotone with respect to the selected
subset $\mathcal{B}$. For any priors
$\bs{\pi}_i\in\Delta_{n_i}$ and $\|\bs{\pi}_i\|_1=1$,$\forall i\in[h]$, let
$\mathcal{S}$ be the subset returned by Algorithm~\ref{alg:Reservoir_Rounding}.
Then $\mathcal{S}$ satisfies the following two properties: 1) $|\mathcal{S}\cap\mathcal{D}_i|=K_i,\forall i\in[h]$ and 2) $\mathbb{E}\left[
U(\bs{\theta}^{*}(\mathcal{S}))
\right]
\ge
\Psi(\bs{\pi}_1,\ldots,\bs{\pi}_h)$, where $\Psi$ denotes the probabilistic relaxation of the utility function
$U(\bs{\theta}^{*}(\mathcal{B}))$.
\end{theorem}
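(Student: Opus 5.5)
Property 1) follows directly from how Algorithm~\ref{alg:Reservoir_Rounding} is built. For each source $\mathcal{D}_i$ the first stage returns $K_i^{\min}$ \emph{distinct} candidates from the support of $\bs{\pi}_i$. This is well defined because sampling without replacement according to $\bs{\pi}_i$ can produce at most $\|\bs{\pi}_i\|_0$ distinct elements, and $K_i^{\min}\le\|\bs{\pi}_i\|_0$. The second stage adds $K_i-K_i^{\min}$ further distinct elements drawn from $\mathcal{D}_i$ minus those already chosen. This requires $n_i\ge K_i$, which we assume implicitly, since otherwise the budget could not be met at all. Because the sources are disjoint, $|\mathcal{S}\cap\mathcal{D}_i|=K_i$.

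For property 2) the plan is a coupling argument combined with monotonicity of $U$. We construct, on a single probability space, the i.i.d.\ draws $d_{i,1},\dots,d_{i,K_i}\sim\bs{\pi}_i$ that define $\Psi$ and the reservoir output $\mathcal{S}$. The goal is to arrange that $\mathcal{T}_i\triangleq\bigcup_k\{d_{i,k}\}\subseteq\mathcal{S}\cap\mathcal{D}_i$ almost surely. Since $\|\bs{\pi}_i\|_1=1$, the dummy symbol $\emptyset$ has probability zero, so every i.i.d.\ draw is a real candidate. Once the containment holds, monotonicity gives $U(\bs{\theta}^*(\bigcup_i\mathcal{T}_i))\le U(\bs{\theta}^*(\mathcal{S}))$ pointwise. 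Taking expectations, and using that the left side has expectation $\Psi(\bs{\pi})$ by the definition of the relaxation, yields the claim. Independence across sources lets us build the coupling one source at a time.

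The coupling for a fixed source works as follows. Generate an infinite i.i.d.\ sequence $e_1,e_2,\dots\sim\bs{\pi}_i$ and set $d_{i,k}=e_k$ for $k\le K_i$. Define the without-replacement sample as the first $K_i^{\min}$ distinct values appearing in the sequence, in order. This process terminates almost surely, because $\bs{\pi}_i$ has exactly $\|\bs{\pi}_i\|_0\ge K_i^{\min}$ support points, each of positive mass. It also has exactly the law of successive sampling without replacement proportional to $\bs{\pi}_i$. The reason is that, given the distinct values seen so far, the next new value is distributed as $\bs{\pi}_i$ conditioned on the unseen support, which is the standard renormalized successive sampling. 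We interpret ``draw via $\bs{\pi}_i$ without replacement'' in Algorithm~\ref{alg:Reservoir_Rounding} in exactly this sense. The uniform completion stage is then run independently. Under this construction, $\mathcal{T}_i$ consists of at most $K_i$ distinct values, all of which appear among the first $K_i$ terms of the sequence. We need these to be contained in the first $K_i^{\min}$ distinct values.

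This containment is the main point to check, and it splits into two cases. If $K_i^{\min}=K_i$, then the first $K_i$ terms contain at most $K_i$ distinct values, which are the earliest distinct values seen, so they lie among the first $K_i$ distinct values. If $K_i^{\min}=\|\bs{\pi}_i\|_0<K_i$, the without-replacement stage exhausts the entire support, and every $d_{i,k}$ lies in the support. In both cases $\mathcal{T}_i\subseteq\mathcal{S}\cap\mathcal{D}_i$. The subtle part is justifying that the infinite-sequence construction reproduces the algorithm's without-replacement distribution; everything else is bookkeeping.
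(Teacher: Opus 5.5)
Your proof is correct, but it reaches property 2) by a different route than the paper. The paper argues one draw at a time with conditional expectations. Conditioned on the first $K_i-1$ without-replacement indices, the last draw picks an unchosen $m$ with probability $\pi_{i,m}/(1-P)$. The extra weight $P\pi_{i,m}/(1-P)$ on each unchosen $m$ is traded, via monotonicity, for weight on the current set. That is exactly what an independent draw $e_{i,K_i}\sim\bs{\pi}_i$ contributes when it lands on an already-chosen candidate. The paper then repeats this backward over the remaining draws and chains the source-wise inequalities over $i=h,\dots,1$ with the tower property.

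Your rejection-of-repeats coupling collapses both inductions into a single almost-sure inclusion $\mathcal{T}_i\subseteq\mathcal{A}_i$, with $\mathcal{A}_i$ the first-stage sample. This is a stochastic dominance in the inclusion order, so the inequality holds for every monotone set function at once. The product coupling across sources is then immediate. You also avoid a point the paper leaves implicit in ``repeating the same argument'': that the partially averaged function $\mathcal{X}\mapsto\mathbb{E}_{e}[U(\bs{\theta}^{*}(\widetilde{\mathcal{S}}\cup\mathcal{X}\cup\{e\}))]$ is again monotone.

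The price is the fact you correctly single out: the first new values of an i.i.d.\ $\bs{\pi}_i$ stream must follow renormalized successive sampling. This is a strong-Markov argument at the stopping times when new values appear, whereas the paper's computation needs no auxiliary construction.

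A side benefit of your construction is that it extends to sub-probability priors $\|\bs{\pi}_i\|_1<1$. Read the without-replacement step as renormalized successive sampling and take the first new non-dummy values; you then get $\mathbb{E}[U(\bs{\theta}^{*}(\mathcal{S}))]\ge\Psi(\bs{\pi})$ verbatim. Projected iterates of \texttt{ReSGD} can land in this regime, and the paper only addresses it through its normalization remark.
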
 
\begin{remark}
The condition $\|\bs{\pi}_i\|_1=1$ is imposed only for simplicity.
When $\|\bs{\pi}_i\|_1<1$, one can naively normalize the prior as $\widetilde{\bs{\pi}}_i
\triangleq
\frac{\bs{\pi}_i}{\|\bs{\pi}_i\|_1}$. Since the induced utility function $U$ is monotone, increasing the total sampling
mass from $\|\bs{\pi}_i\|_1$ to $1$ can increase the value of the
corresponding probabilistic relaxation.
\end{remark}
\begin{proof}
We first verify the exact-budget property. For each source $\mathcal{D}_i$,
Algorithm~\ref{alg:Reservoir_Rounding} sets $K_i^{\min}\triangleq \min\{\|\bs{\pi}_i\|_0,K_i\}$. It then samples $K_i^{\min}$ distinct candidates according to $\bs{\pi}_i$
without replacement. If $K_i^{\min}<K_i$, the algorithm additionally samples
$K_i-K_i^{\min}$ candidates from the remaining unselected data points in
$\mathcal{D}_i$. Therefore, all selected candidates from $\mathcal{D}_i$ are
distinct, and $|\mathcal{S}\cap\mathcal{D}_i|=K_i, \forall i\in[h]$.

It remains to prove the utility guarantee. 
At first, we review the definition $\Psi(\bs{\pi})$. Specifically, for each source $\mathcal{D}_i$,  we independently draw $K_{i}$ data candidates $\{e_{i,1},\dots,e_{i,K_{i}}\}$ from $\mathcal{D}_i\cup\{\emptyset\}$. If we let $\mathcal{S}_{ind}
\triangleq
\bigcup_{i=1}^{h}\bigcup_{k=1}^{K_i}e_{i,k}$,
we have $\Psi(\bs{\pi})
=
\mathbb{E}_{\mathcal{S}}\big[U(\bs{\theta}^{*}(\mathcal{S}_{ind}))\big]$. Then, we denote by $\mathcal{A}_i$ the
first-stage subset sampled from $\mathcal{D}_i$ without replacement, namely, $\mathcal{A}_i
\triangleq
\{d_{i,1},\ldots,d_{i,K_i^{\min}}\}$.

Since the second sampling step only adds extra candidates to
$\bigcup_{i=1}^{h}\mathcal{A}_i$, we naturally have that $\mathbb{E}\big[U(\bs{\theta}^{*}(\mathcal{S}))\big]
\ge
\mathbb{E}\left[U(\bs{\theta}^{*}(\bigcup_{i=1}^{h}\mathcal{A}_i))
\right]$, where $\mathcal{S}$ is the output of Algorithm~\ref{alg:Reservoir_Rounding}.
 Thus, it suffices to show that $\mathbb{E}\left[U(\bs{\theta}^{*}(\bigcup_{i=1}^{h}\mathcal{A}_i))\right]
\ge\mathbb{E}_{\mathcal{S}}\big[U(\bs{\theta}^{*}(\mathcal{S}_{ind}))\big]=\Psi(\bs{\pi})$.

Specifically, we will first show that, for any subset
$\widetilde{\mathcal{S}}$ independent of the randomness from source
$\mathcal{D}_i$, the following inequality holds
\begin{equation}\label{add+++}
    \mathbb{E}_{\mathcal{A}_i}
\left[
U\left(\bs{\theta}^{*}(\widetilde{\mathcal{S}}\cup\mathcal{A}_i)\right)
\right]
\ge
\mathbb{E}
\left[
U\left(\bs{\theta}^{*}(
\widetilde{\mathcal{S}}
\cup
\big(\bigcup_{k=1}^{K_i}e_{i,k}\big)
)\right)\right],
\end{equation}where $\{e_{i,1},\dots,e_{i,K_{i}}\}$ are drawn independently according to $\bs{\pi}_{i}$.

If $\|\bs{\pi}_i\|_0<K_i$, then the first-stage without-replacement sampling
selects all data candidates with nonzero probability under $\bs{\pi}_i$.
Therefore,  $\bigcup_{k=1}^{K_i}e_{i,k}\subseteq\mathcal{A}_i$. By monotonicity of $U$, the above
inequality Eq.\eqref{add+++} holds.

It remains to consider the case $\|\bs{\pi}_i\|_0\ge K_i$. Let $\mathcal{A}_i=\{p^{(i)}_{N_1},\ldots,p^{(i)}_{N_{K_{i}}}\}$ where $p^{(i)}_{N_k}$ denotes the data pair sampled without replacement from
source $\mathcal{D}_i$. We then prove Eq.\eqref{add+++}.

It is worth noting that when conditioned on the first
$K_i-1$ selected indices $N_1,\ldots,N_{K_i-1}$, 
the last without-replacement draw selects an unchosen candidate $m$ with
probability $\frac{\pi_{i,m}}{1-P},
\quad
m\notin\{N_1,\ldots,N_{K_i-1}\}$, where  $P\triangleq \sum_{r=1}^{K_i-1}\pi_{i,N_r}$.

Hence,
\begin{equation*}
  \begin{aligned}
&\mathbb{E}\!\left[
U\left(\bs{\theta}^{*}(
\widetilde{\mathcal{S}}
\cup
\{v_i^{N_1},\ldots,v_i^{N_{K_i-1}}\}
\cup
\{v_i^{N_{K_i}}\}
)\right)
\middle|
N_1,\ldots,N_{K_i-1}
\right]  \\
&=
\sum_{m\notin\{N_1,\ldots,N_{K_i-1}\}}
\frac{\pi_{i,m}}{1-P}
U\left(\bs{\theta}^{*}(
\widetilde{\mathcal{S}}
\cup
\{v_i^{N_1},\ldots,v_i^{N_{K_i-1}}\}
\cup
\{v_i^m\}
)\right)\\
&=\sum_{m\notin\{N_1,\ldots,N_{K_i-1}\}}
(\pi_{i,m}+\frac{P\cdot\pi_{i,m}}{1-P})\cdot
U\left(\bs{\theta}^{*}(
\widetilde{\mathcal{S}}
\cup
\{v_i^{N_1},\ldots,v_i^{N_{K_i-1}}\}
\cup
\{v_i^m\}
)\right)\\
&\ge\sum_{m\notin\{N_1,\ldots,N_{K_i-1}\}}
\pi_{i,m}\cdot
U\left(\bs{\theta}^{*}(
\widetilde{\mathcal{S}}
\cup
\{v_i^{N_1},\ldots,v_i^{N_{K_i-1}}\}
\cup
\{v_i^m\}
)\right)\\&+\frac{P\cdot\sum_{m\notin\{N_1,\ldots,N_{K_i-1}\}}\pi_{i,m}}{1-P}\cdot
U\left(\bs{\theta}^{*}(
\widetilde{\mathcal{S}}
\cup
\{v_i^{N_1},\ldots,v_i^{N_{K_i-1}}\}
)\right)\\
&=\sum_{m\notin\{N_1,\ldots,N_{K_i-1}\}}
\pi_{i,m}\cdot
U\left(\bs{\theta}^{*}(
\widetilde{\mathcal{S}}
\cup
\{v_i^{N_1},\ldots,v_i^{N_{K_i-1}}\}
\cup
\{v_i^m\}
)\right)+P\cdot U\left(\bs{\theta}^{*}(
\widetilde{\mathcal{S}}
\cup
\{v_i^{N_1},\ldots,v_i^{N_{K_i-1}}\}
)\right)\\
&=\sum_{m\notin\{N_1,\ldots,N_{K_i-1}\}}
\pi_{i,m}\cdot
U\left(\bs{\theta}^{*}(
\widetilde{\mathcal{S}}
\cup
\{v_i^{N_1},\ldots,v_i^{N_{K_i-1}}\}
\cup
\{v_i^m\}
)\right)\\&+\sum_{m\in\{N_1,\ldots,N_{K_i-1}\}}\pi_{i,m}\cdot U\left(\bs{\theta}^{*}(
\widetilde{\mathcal{S}}
\cup
\{v_i^{N_1},\ldots,v_i^{N_{K_i-1}}\}\cup
\{v_i^m\}
)\right)\\
&=\mathbb{E}_{e_{i,K_i}}\left[U\left(\bs{\theta}^{*}(
\widetilde{\mathcal{S}}
\cup
\{v_i^{N_1},\ldots,v_i^{N_{K_i-1}}\}\cup
\{e_{i,K_i}\}\right])\right),
\end{aligned}  
\end{equation*} where the fourth equality follows from the fact that when $m\in\{N_1,\ldots,N_{K_i-1}\}$, we have
\begin{equation*}
    U\left(\bs{\theta}^{*}(
\widetilde{\mathcal{S}}
\cup
\{v_i^{N_1},\ldots,v_i^{N_{K_i-1}}\cup
\{v_i^m\}\}
)\right)=U\left(\bs{\theta}^{*}(
\widetilde{\mathcal{S}}
\cup
\{v_i^{N_1},\ldots,v_i^{N_{K_i-1}}\}
)\right).
\end{equation*}

This shows that the last without-replacement draw can be replaced by an
independent draw without increasing the expected function value. Repeating the
same argument for the \((K_i-1)\)-th draw, the \((K_i-2)\)-th draw, and so on,
we obtain the result of Eq.\eqref{add+++}.

Finally, applying the above source-wise comparison Eq.\eqref{add+++} sequentially over
$i=h,h-1,\ldots,1$ and using the tower property of conditional expectation, we
obtain 
\begin{equation*}
    \mathbb{E}\left[U(\bs{\theta}^{*}(\bigcup_{i=1}^{h}\mathcal{A}_i))\right]
\ge\mathbb{E}_{\mathcal{S}}\big[U(\bs{\theta}^{*}(\mathcal{S}_{ind}))\big]=\Psi(\bs{\pi}).
\end{equation*}

Combining this with the fact $\mathbb{E}\big[U(\bs{\theta}^{*}(\mathcal{S}))\big]
\ge
\mathbb{E}\left[U(\bs{\theta}^{*}(\bigcup_{i=1}^{h}\mathcal{A}_i))
\right]$, we have $\mathbb{E}\!\left[
U(\bs{\theta}^{*}(\mathcal{S}))
\right]
\ge
\mathbb{E}_{\mathcal{S}}\big[U(\bs{\theta}^{*}(\mathcal{S}_{ind}))\big]=\Psi(\bs{\pi})$.
\end{proof}

%% file: appendix_ds_cifar100.tex
\section{Data Summarization on Cifar-100 Dataset}\label{appendix:ds_cifar100}

\begin{figure}[ht]
     \centering

     \begin{subfigure}[b]{0.48\textwidth}
         \centering
         \includegraphics[width=\textwidth]{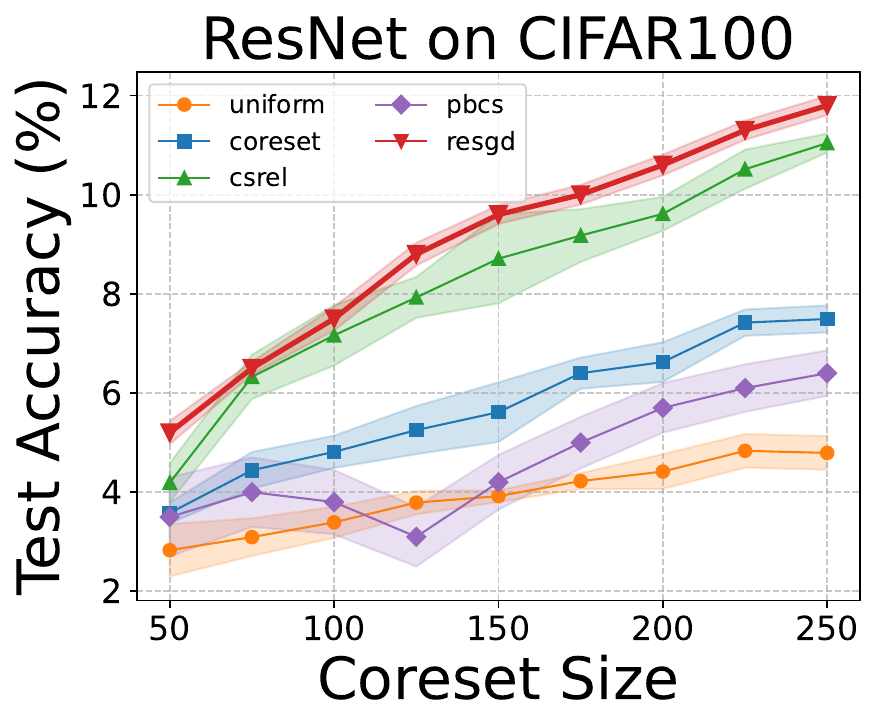}
         \caption{}
         \label{fig:cifar100_main}
     \end{subfigure}
     \hfill
     \begin{subfigure}[b]{0.48\textwidth}
         \centering
         \includegraphics[width=\textwidth]{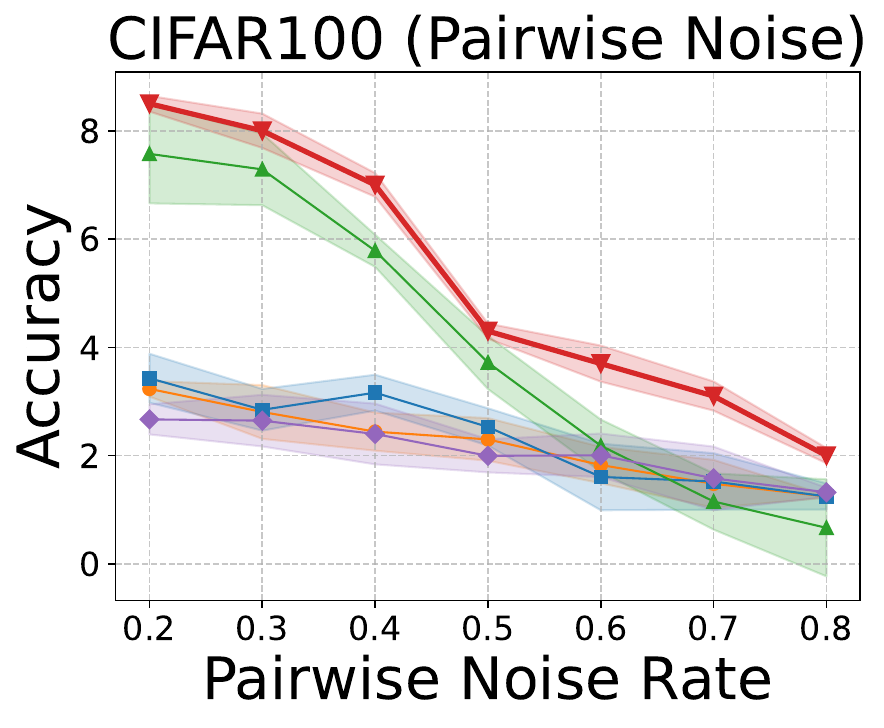}
         \caption{}
         \label{fig:cifar100_pairwise}
     \end{subfigure}
     \vspace{1em}

     \begin{subfigure}[b]{0.48\textwidth}
         \centering
         \includegraphics[width=\textwidth]{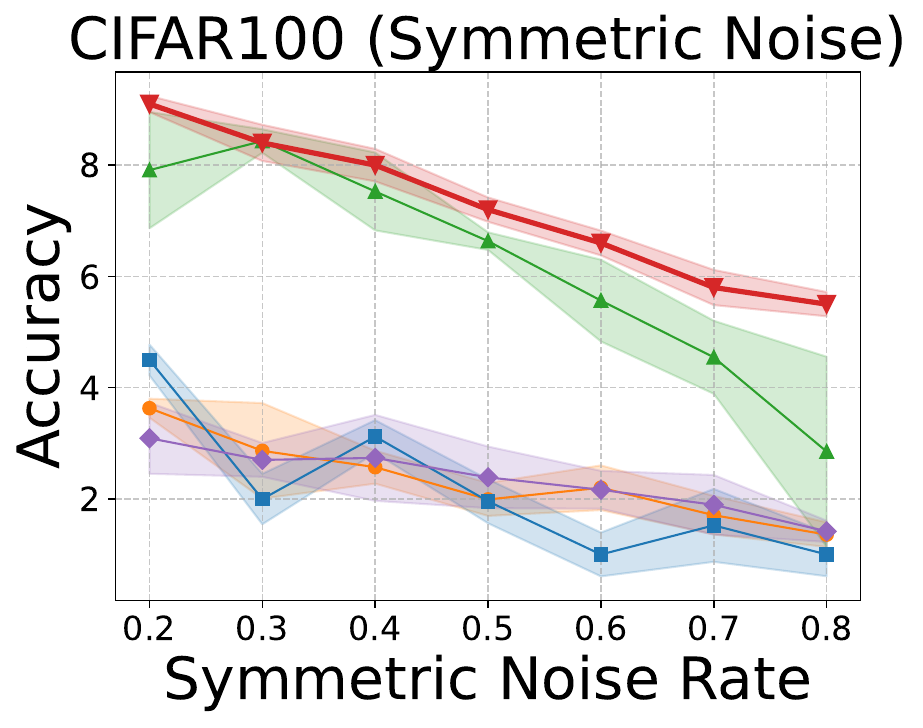}
         \caption{}
         \label{fig:cifar100_symmetric}
     \end{subfigure}
     \hfill
     \begin{subfigure}[b]{0.48\textwidth}
         \centering
         \includegraphics[width=\textwidth]{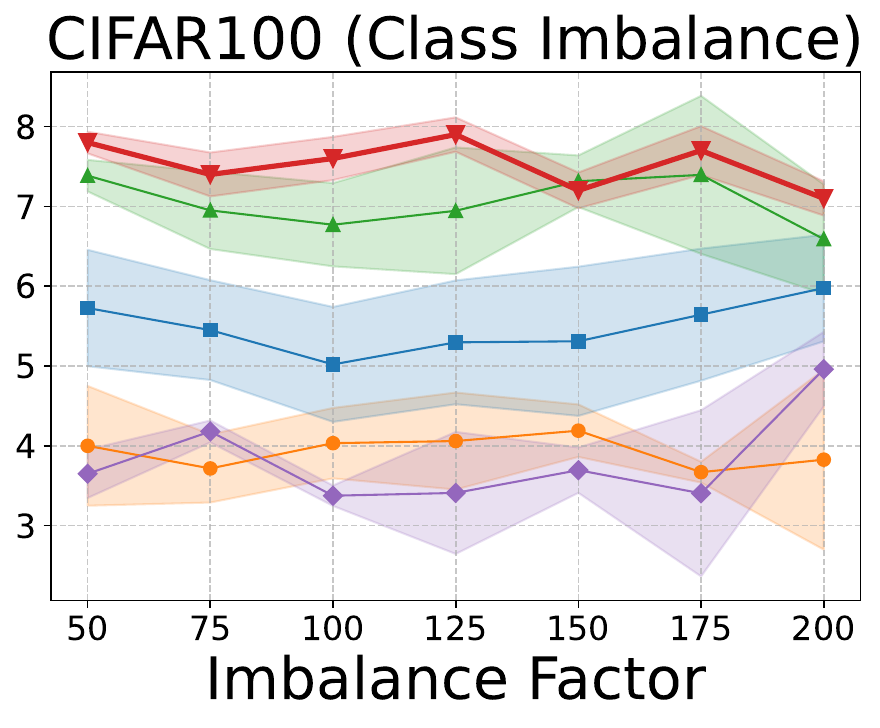}
         \caption{}
         \label{fig:cifar100_imbalance}
     \end{subfigure}
     \caption{Performance comparison between \texttt{ReSGD} and baseline methods on data summarization. (a) Test accuracy across different coreset sizes for ResNet on CIFAR100. (b) and (c) evaluate the robustness of each method against varying rates of pairwise noise and symmetric noise on CIFAR100, respectively. (d) illustrates the performance under different class imbalance factors on CIFAR100. Shaded areas denote the variance across multiple runs. Overall, the results demonstrate the superior performance and stability of \texttt{ReSGD} under various data corruption and distribution shift scenarios.}
\end{figure}

%% file: appendix_cs_ex_cl.tex
\section{Coreset Selection for Existing Continual Learning Methods}\label{appendix:cs_ex_cl}
We instantiate our method by replacing the reservoir sampling component in ER~\cite{riemer2018learning} with \texttt{ReSGD} and CS-Rel~\cite{tong2025coreset}. Following~\cite{tong2025coreset}, we evaluate all methods on Split CIFAR-100 and Split Tiny-ImageNet under memory budgets of 200 and 500. Both benchmarks are divided into 10 class-incremental tasks, and all methods use a ResNet-18 backbone~\cite{he2016deep}. We compare against standard rehearsal-based baselines, including ER, A-GEM~\cite{chaudhry2018efficient}, and FDR~\cite{benjamin2018measuring}. Results are summarized in Table~\ref{table:ex-cl-result}. For Split CIFAR-100, the 100 classes are evenly split into 10 tasks with 10 classes per task. For Split Tiny-ImageNet, the 200 classes are evenly split into 10 tasks with 20 classes per task. In both cases, all samples from the current task are used for training. We adopt the same data augmentation pipeline as Mammoth~\cite{buzzega2020dark}, including random crop, random horizontal flip, and normalization. We also use the same ResNet-18 implementation provided by Mammoth for both datasets. For optimization, we use SGD with batch size 32 on both benchmarks. The learning rate is set to $2\times10^{-2}$ for Split CIFAR-100 and $3\times10^{-2}$ for Split Tiny-ImageNet. The cross-entropy loss factor is fixed to 1.0. The number of training epochs is kept consistent with the corresponding baselines.

\begin{table*}[h]
\centering
\caption{Average accuracy (\%) on Split CIFAR-100 and Split Tiny-ImageNet. Results are averaged over 10 incremental tasks with memory buffers of 200 and 500 samples. We use ResNet-18 as the backbone for all experiments. Bold indicates the best performance.}
\scriptsize{
\resizebox{\linewidth}{!}{
\setlength\tabcolsep{3.pt}
\renewcommand\arraystretch{1}
\begin{tabular}{r||cc|cc}
\hline\thickhline
\rowcolor{gray!20}
\textbf{Methods} 
& \multicolumn{2}{c|}{\textbf{Split CIFAR-100}} 
& \multicolumn{2}{c}{\textbf{Split Tiny ImageNet}} \\
\cline{2-5}\rowcolor{gray!20}
& \textbf{200} & \textbf{500} & \textbf{200} & \textbf{500} \\
\hline\hline
A-GEM    & 9.40 $\pm$ 0.05  & 9.42 $\pm$ 0.08  & 8.07 $\pm$ 0.08 & 8.06 $\pm$ 0.04 \\
\rowcolor{gray!10}
ER       & 14.18 $\pm$ 0.45 & 21.08 $\pm$ 0.16 & 8.49 $\pm$ 0.16 & 9.99 $\pm$ 0.29 \\
FDR      & 15.32 $\pm$ 0.73 & 22.83 $\pm$ 0.73 & 8.70$\pm$ 0.19 & 10.54 $\pm$ 0.21 \\
\rowcolor{gray!10}
CSReL-ER & 15.35 $\pm$ 0.73 & 22.65 $\pm$ 0.81 & 8.66 $\pm$ 0.06 & 10.44 $\pm$ 0.17 \\
\hline
\rowcolor[HTML]{D7F6FF}
\texttt{ReSGD-ER}                &\textbf{15.38} $\pm$ \textbf{0.25} & \textbf{23.21} $\pm$ \textbf{0.22} & \textbf{8.87} $\pm$ \textbf{0.05} &\textbf{10.67} $\pm$ \textbf{0.12} \\
\hline
\end{tabular}}}
\label{table:ex-cl-result}
\end{table*}

%% file: appendix_adversarial.tex
\section{Experiment with Adversarial Initialization}\label{appendix:adversarial}
In this section, we detail the empirical setup designed to rigorously evaluate the error-correction capabilities and robustness of the \texttt{ReSGD} algorithm. The experiment specifically investigates the algorithm's capacity to filter corrupted labels and recover a high-quality coreset when subjected to an extreme, adversarial initialization state.
\\
\\
\textbf{Dataset Partitioning and Asymmetric Label Corruption.} 
The experiment utilizes a subset of the MNIST dataset, partitioned into two disjoint subsets: Holdout Set $\mathcal{D}_{holdout}$ consists of $10,000$ pristine samples and serves as the reference distribution; Training Pool $\mathcal{D}_{train}$ consists of $10,000$ separate samples, acting as the active selection pool from which the coreset is sampled. To simulate severe, systematic data corruption, we introduce asymmetric label noise. We uniformly sample a subset $\mathcal{C} \subset \mathcal{D}_{train}$ containing $1,000$ instances. For each instance $(x_i, y_i) \in \mathcal{C}$, the ground-truth label $y_i$ is deterministically distorted according to the following mapping:$$y_{i}^{noisy} = (y_{i}^{clean} + 2) \pmod{10}$$The indices of the corrupted subset $\mathcal{C}$ are recorded for downstream evaluation of the selection purity.
\\\\
\textbf{Reference Model Pre-training.}
Prior to the \texttt{ReSGD} sampling loop, a baseline reference model, denoted as $\theta^*(\mathcal{D})$, is trained to convergence exclusively on the clean dataset $\mathcal{D}_{holdout}$. During the subsequent \texttt{ReSGD} iterations, the parameters of $\theta^*(\mathcal{D})$ are strictly frozen. This model acts as an oracle to compute the reference loss $\ell(x_i, y_i; \theta^*(\mathcal{D}))$ and does not undergo further parameter updates.
\\\\
\textbf{Adversarial Probability Initialization and $\epsilon$-Exploration.}
To stress-test the \texttt{ReSGD} framework, we initialize the sampling probability vector $\boldsymbol{m}^{(1)} \in \mathbb{R}^{10000}$ in a highly adversarial manner. We randomly select a further subset $\mathcal{A} \subset \mathcal{C}$, where $|\mathcal{A}| = 200$. To prevent algorithmic stagnation (where clean samples would never be selected or evaluated if assigned a probability of $0$), we introduce an $\epsilon$-exploration constant. The initial probability mass is distributed as follows:$$m_i^{(1)} = 
\begin{cases} 
\frac{0.90}{|\mathcal{A}|}, & \text{if } i \in \mathcal{A} \\
\frac{0.10}{|\mathcal{D}_{train}| - |\mathcal{A}|}, & \text{if } i \notin \mathcal{A} 
\end{cases}$$This initialization places $90\%$ of the sampling probability strictly on corrupted samples, forcing the algorithm to recover from a nearly adversarial starting state while maintaining the mathematical viability of gradient computation across the entire pool.
\\\\
\textbf{Evaluation Metrics}
To empirically validate the selection efficacy of the proposed method, we monitor three primary metrics:
\begin{itemize}
    \item Distribution Shift: We plot the trajectory of the total probability mass assigned to the corrupted subset $\mathcal{C}$ over $T$ iterations: $\sum_{i \in \mathcal{C}} m_i^{(t)}$. We hypothesize a rapid decay from the initial $0.90$ towards $0$.
    \item Coreset Purity: At the final iteration $T$, we extract the Top-$K$ samples with the highest probability values in $\boldsymbol{m}^{(T+1)}$. The purity is defined as the empirical fraction of clean samples within this Top-$K$ subset.
    \item Downstream Utility: We train a new model from scratch using the final selected Top-$K$ coreset and report its top-1 classification accuracy on the standard MNIST test set, establishing a comparison against a uniform random sampling baseline.
\end{itemize} 

\textbf{Ablation Results.}
The result is illustrated in Figures~\ref{fig:noisy_curves} and~\ref{fig:noisy_curves_cifar}.
\begin{figure}[htbp]
    \centering
    \begin{tabular}{ccc}
        \includegraphics[width=0.32\textwidth]{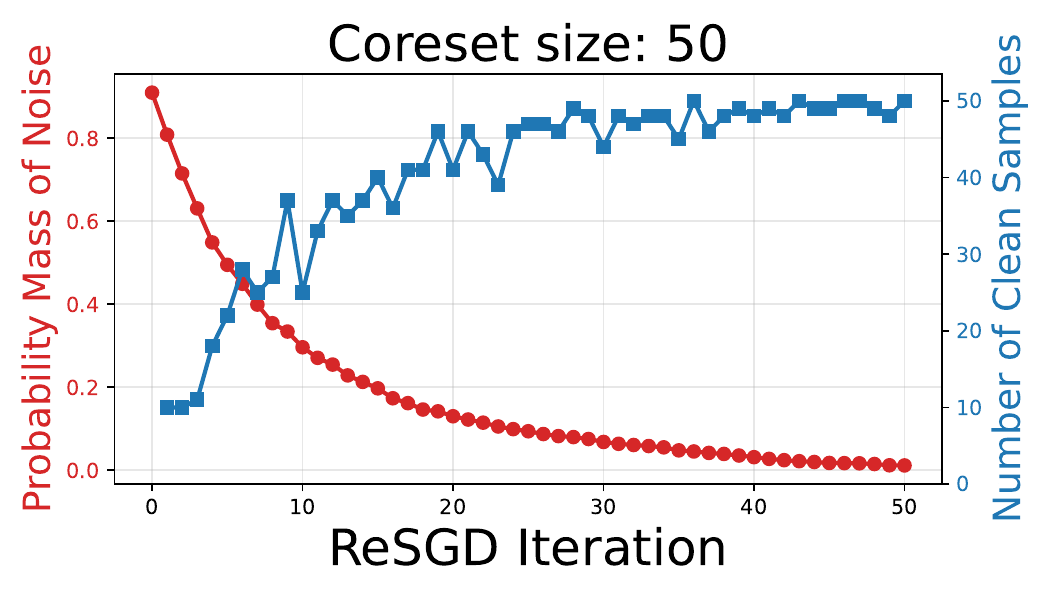} &
        \includegraphics[width=0.32\textwidth]{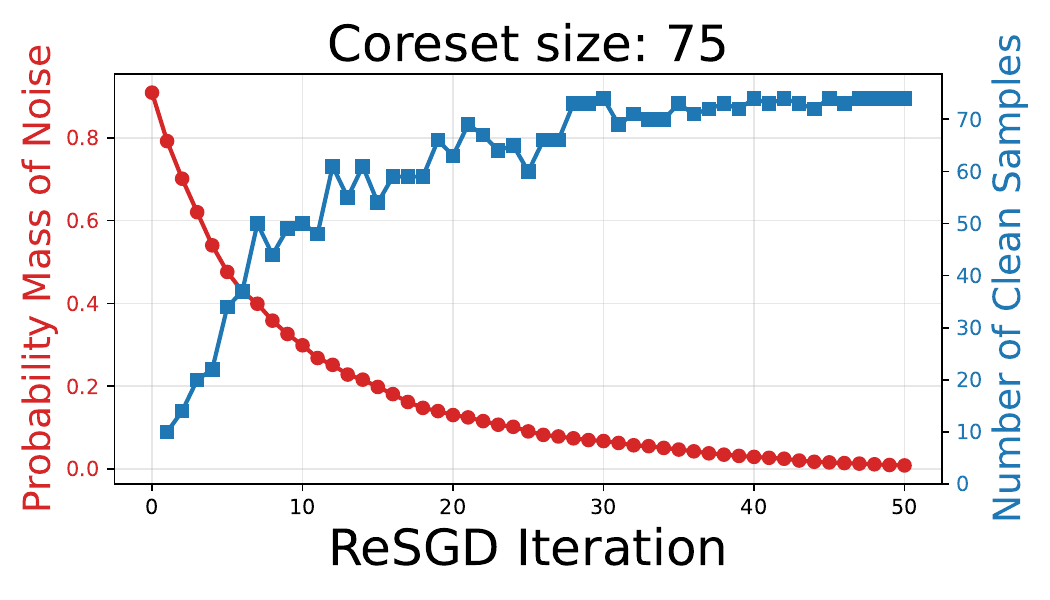} &
        \includegraphics[width=0.32\textwidth]{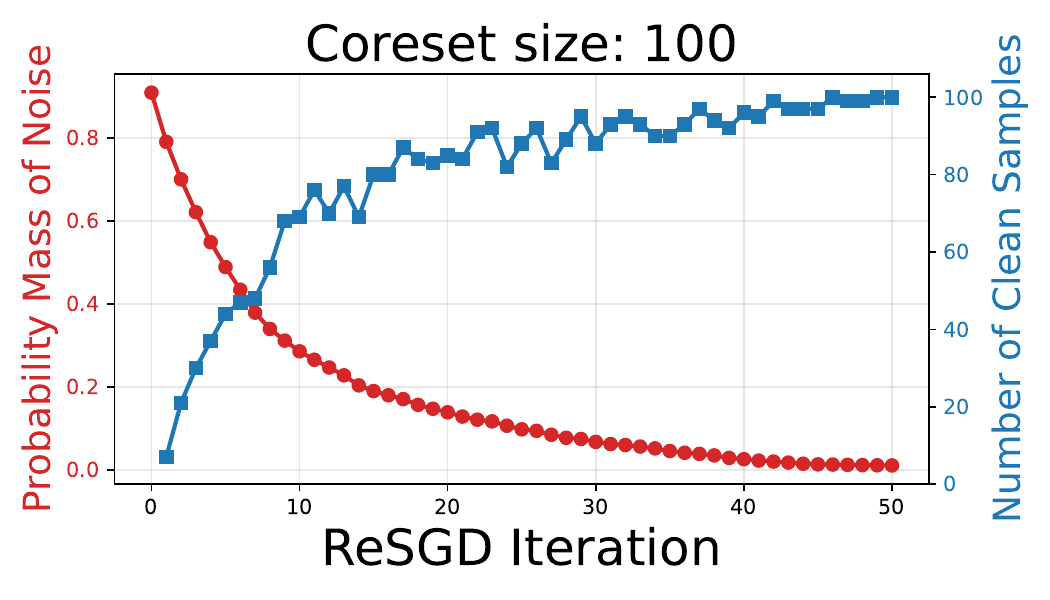} \\
        \includegraphics[width=0.32\textwidth]{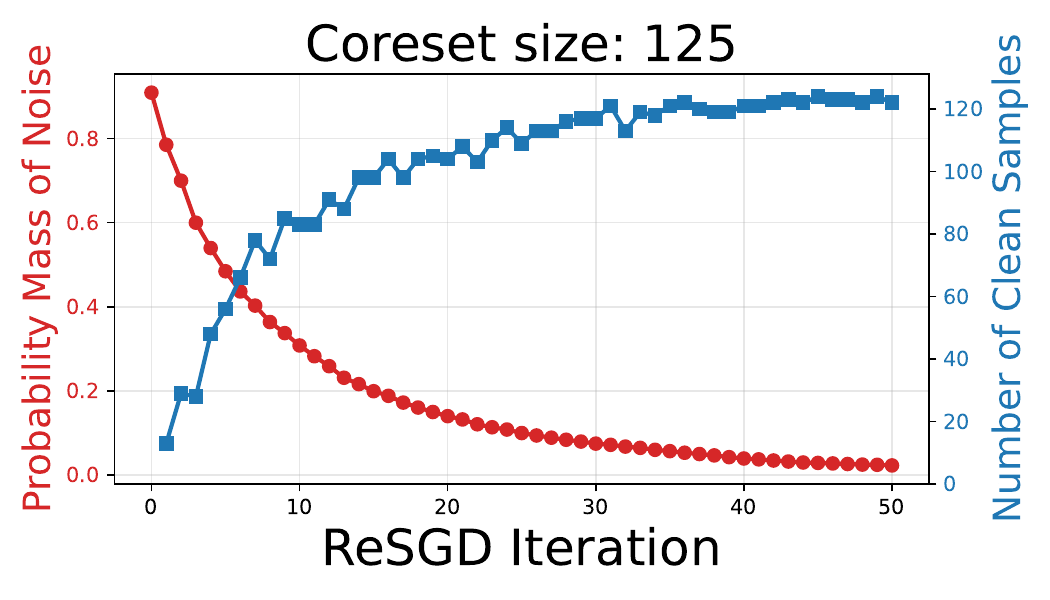} &
        \includegraphics[width=0.32\textwidth]{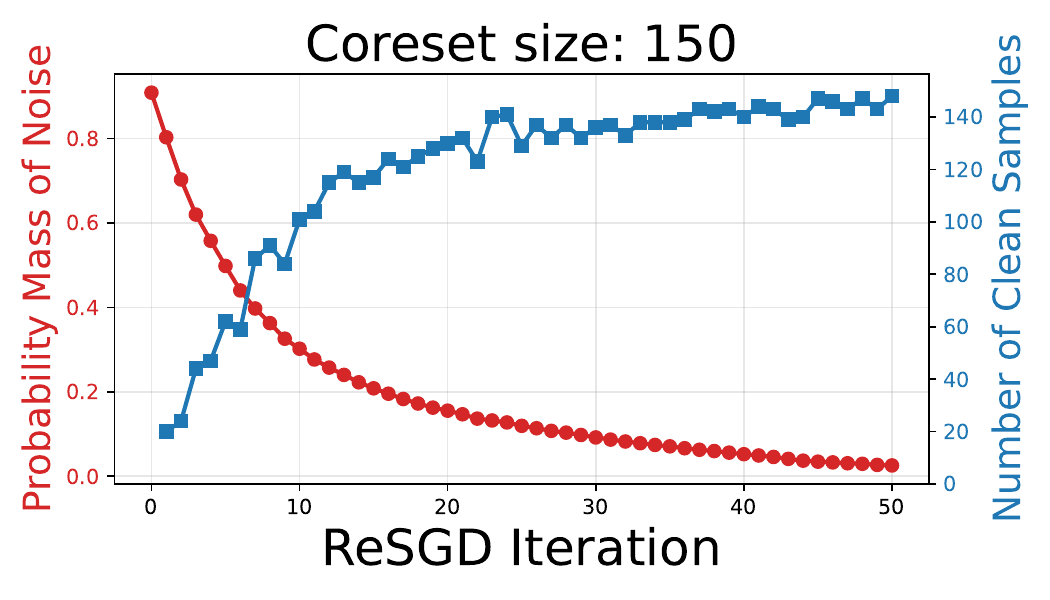} &
        \includegraphics[width=0.32\textwidth]{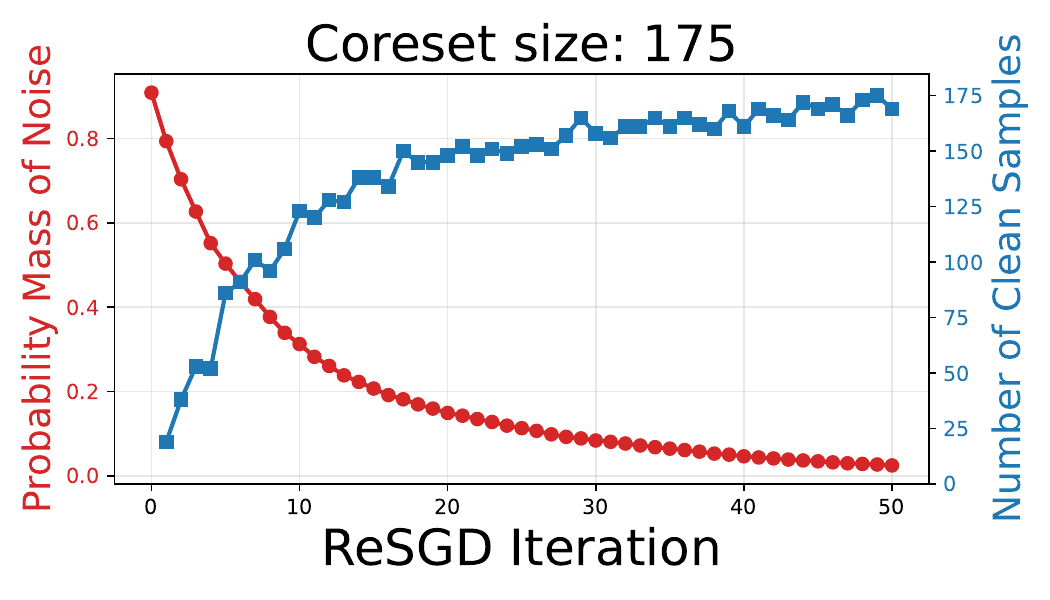} \\
        \includegraphics[width=0.32\textwidth]{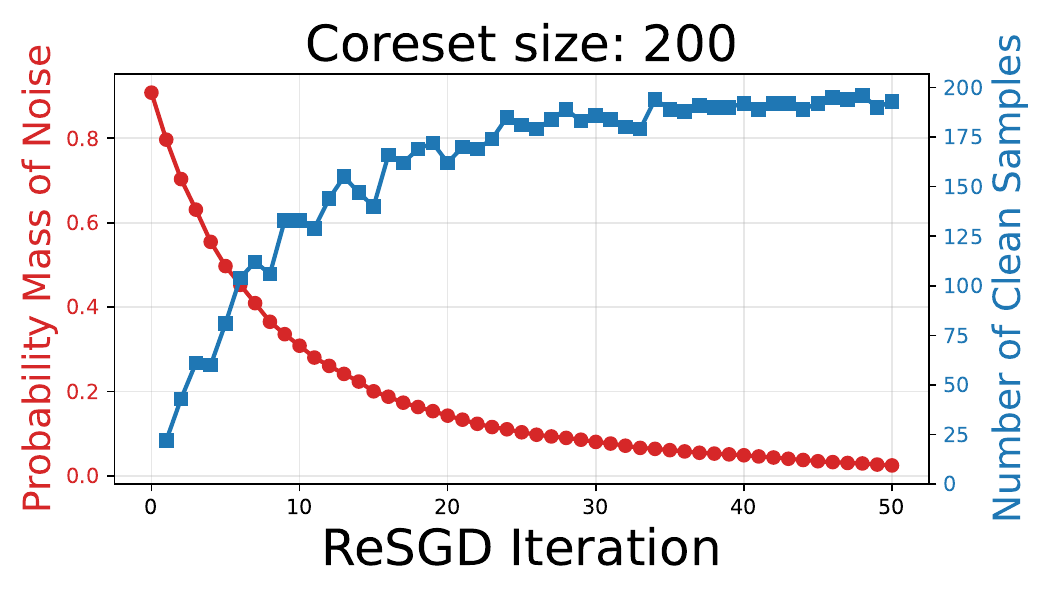} &
        \includegraphics[width=0.32\textwidth]{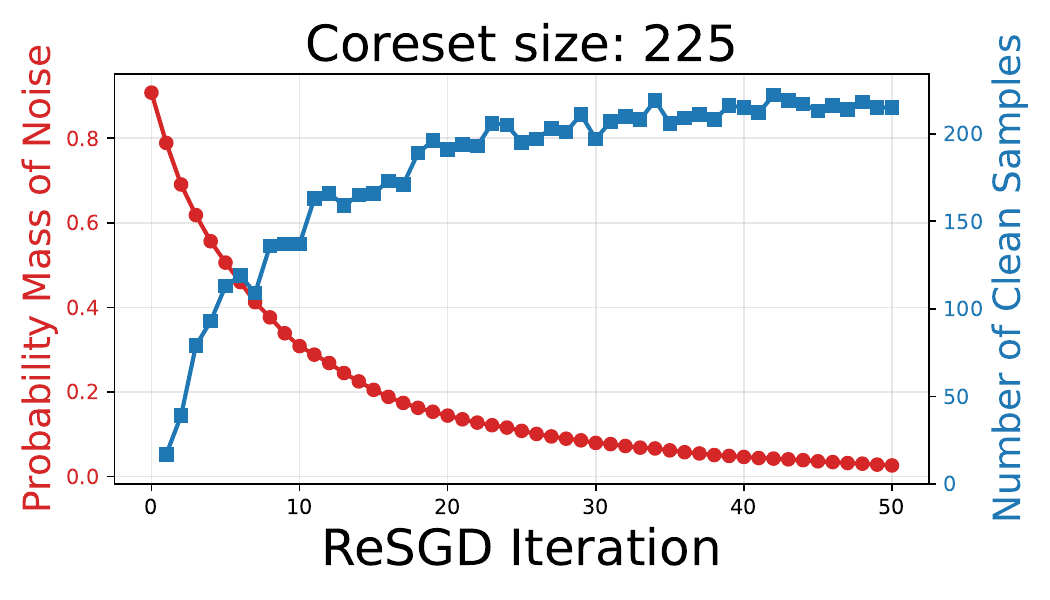} &
        \includegraphics[width=0.32\textwidth]{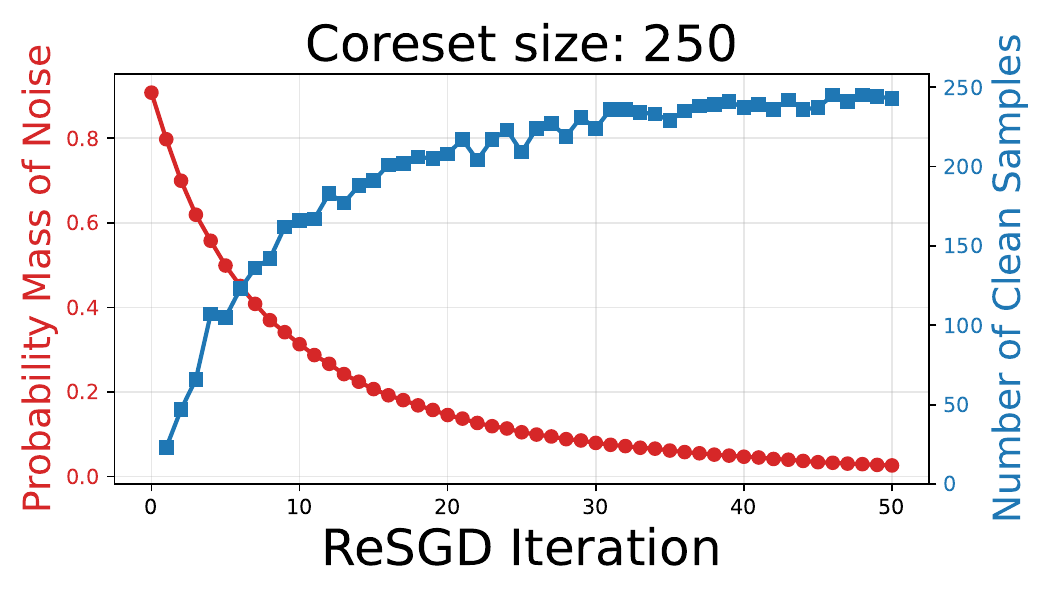}
    \end{tabular}
    \caption{Performance on MNIST under Asymmetric Noise. We track the evolution of the sampling probability mass on corrupted labels (red line) and the coreset purity (blue line) across 50 iterations for various coreset sizes $K \in \{50, \dots, 250\}$. Despite an adversarial start where 90\% of the initial mass is concentrated on noisy samples, \texttt{ReSGD} consistently reallocates probability to clean samples, achieving near-perfect purity across all tested coreset scales.}
    \label{fig:noisy_curves}
\end{figure}

\begin{figure}[htbp]
    \centering
    \begin{tabular}{ccc}
        \includegraphics[width=0.32\textwidth]{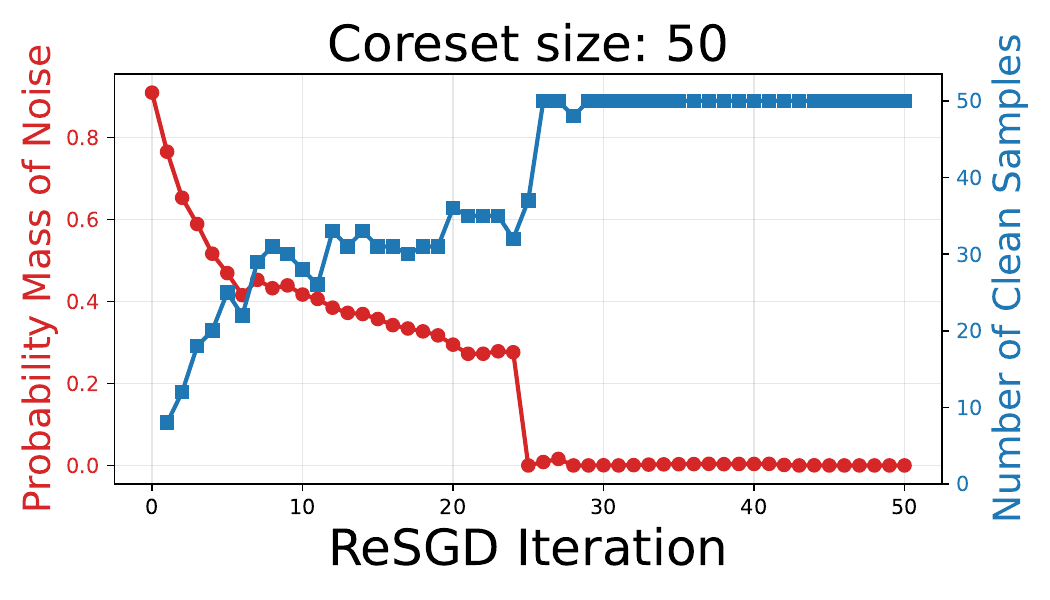} &
        \includegraphics[width=0.32\textwidth]{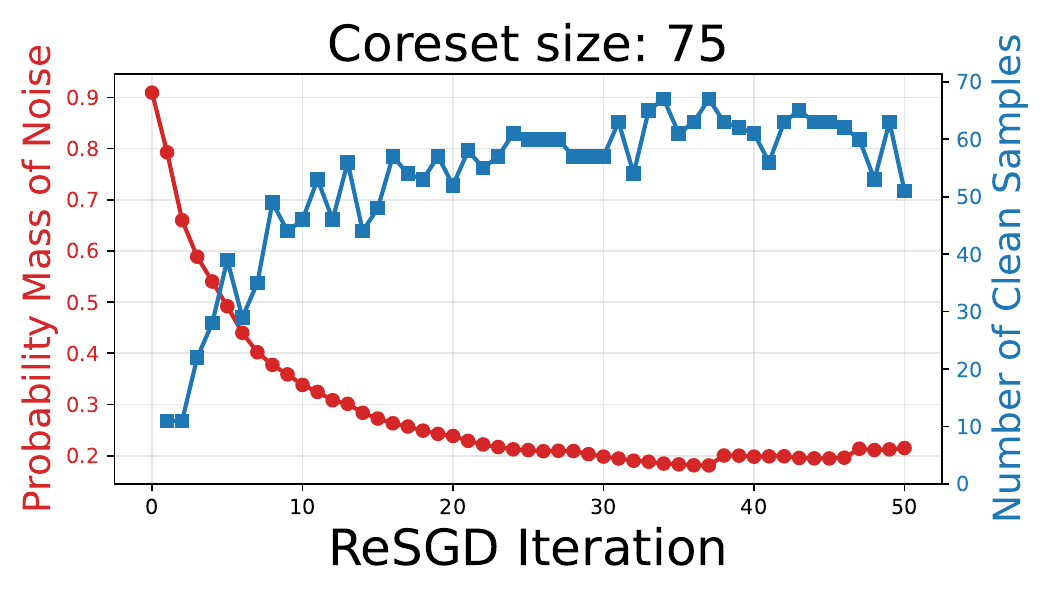} &
        \includegraphics[width=0.32\textwidth]{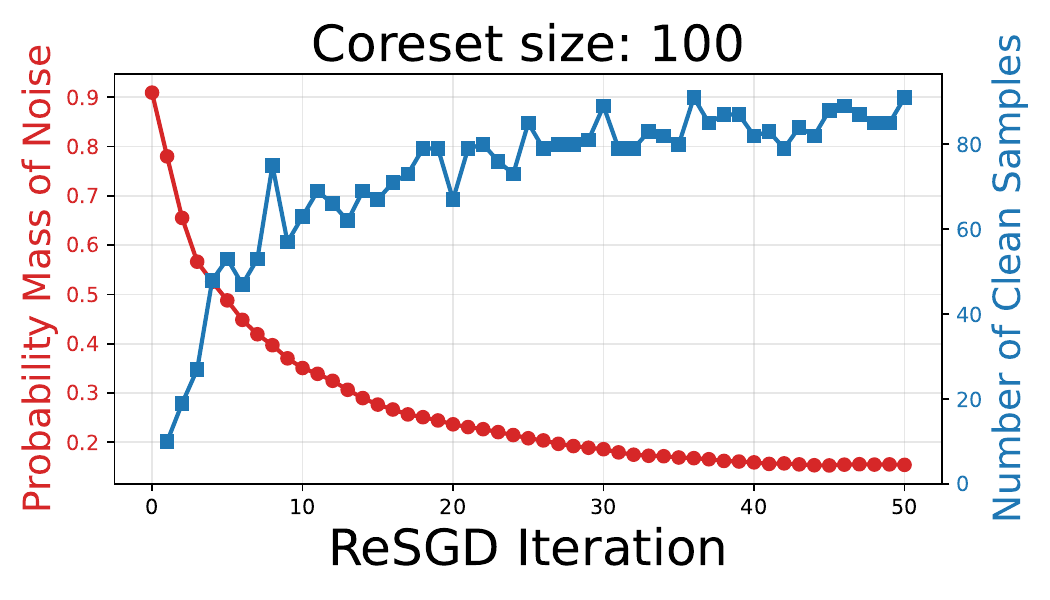} \\
        \includegraphics[width=0.32\textwidth]{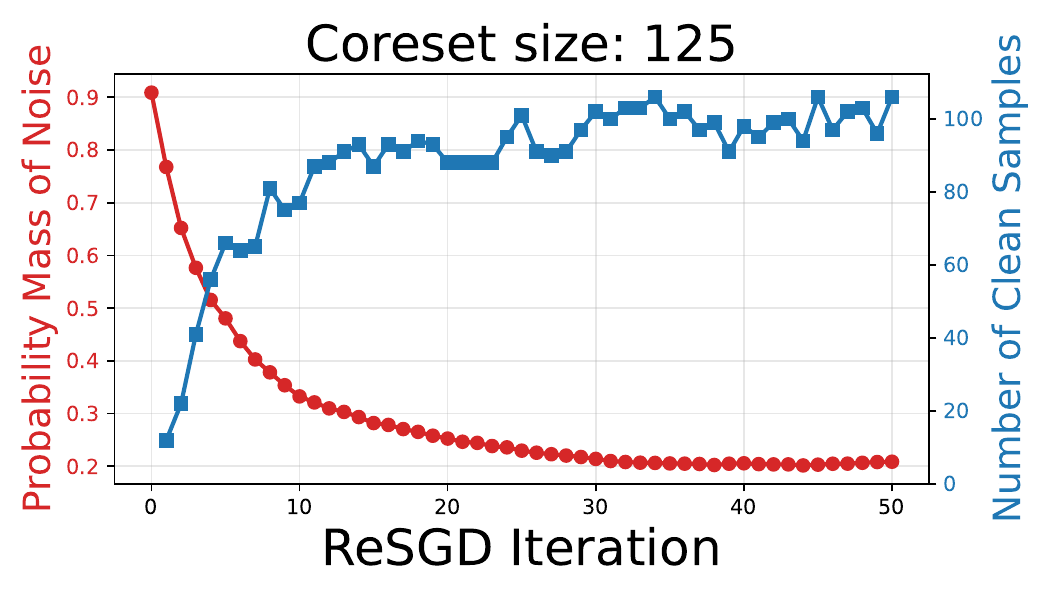} &
        \includegraphics[width=0.32\textwidth]{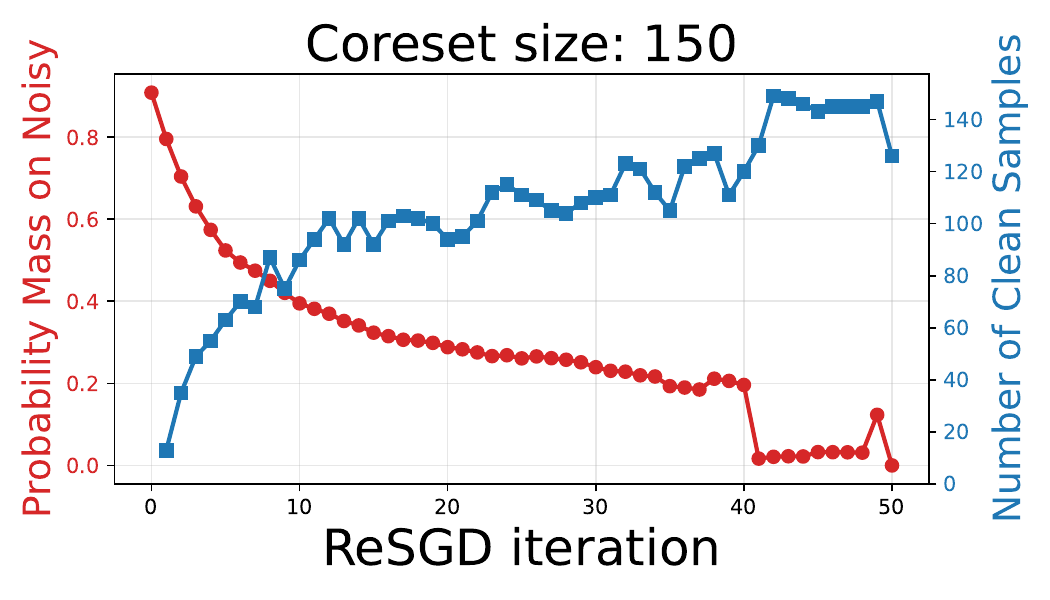} &
        \includegraphics[width=0.32\textwidth]{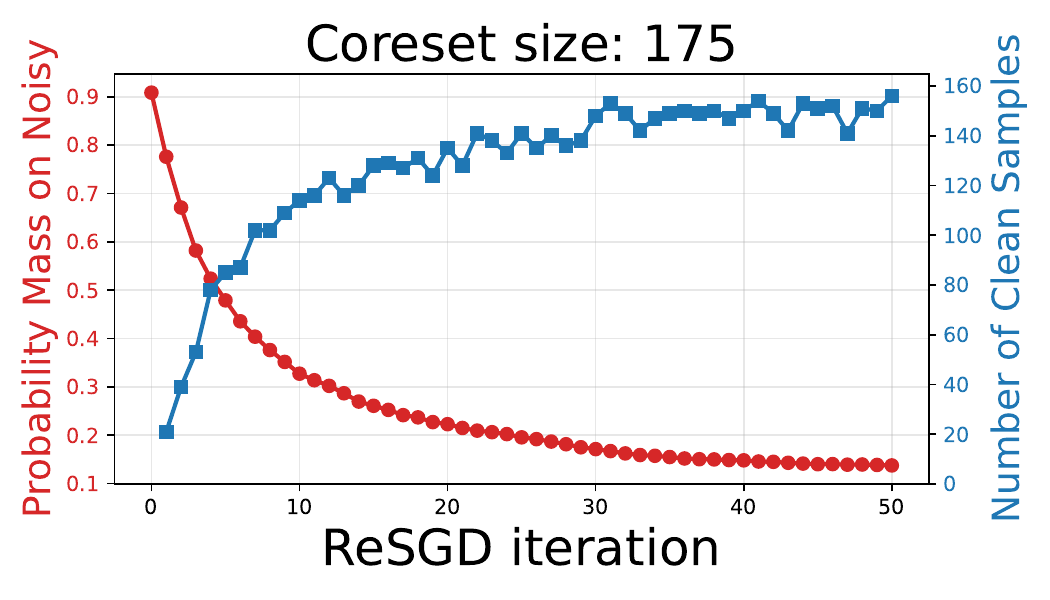} \\
        \includegraphics[width=0.32\textwidth]{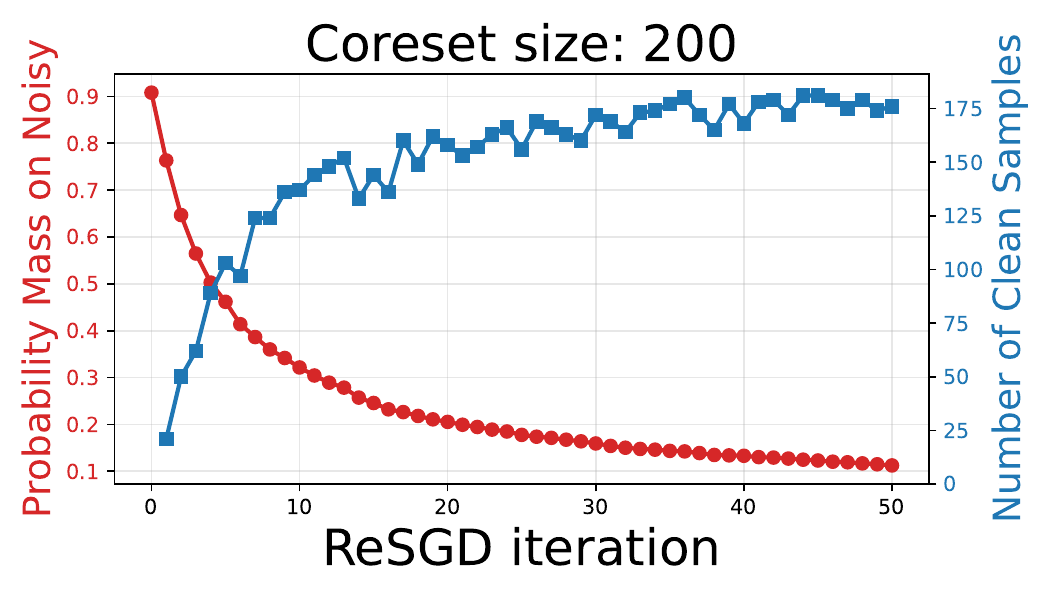} &
        \includegraphics[width=0.32\textwidth]{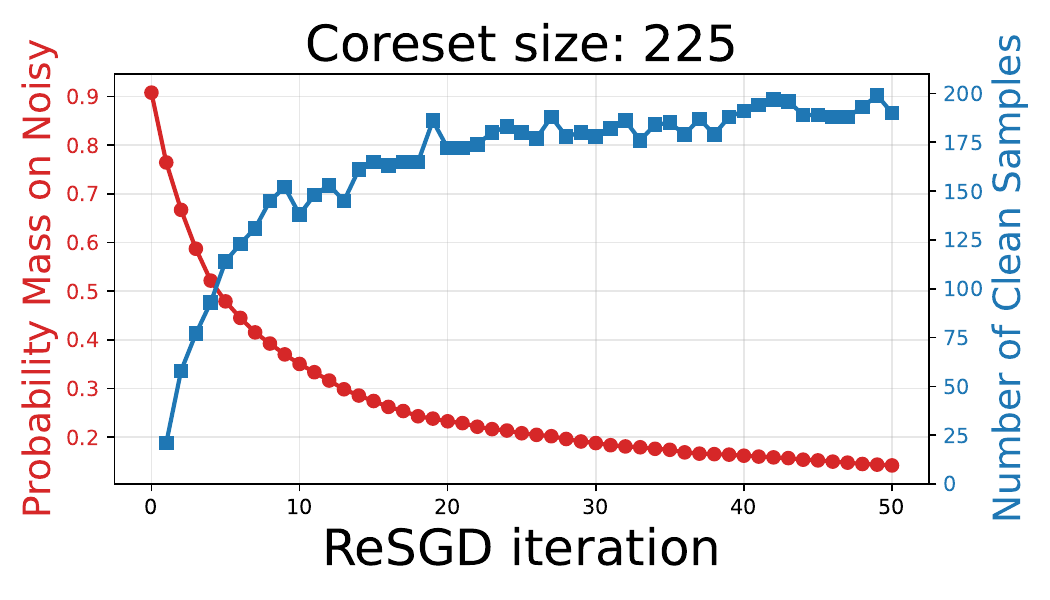} &
        \includegraphics[width=0.32\textwidth]{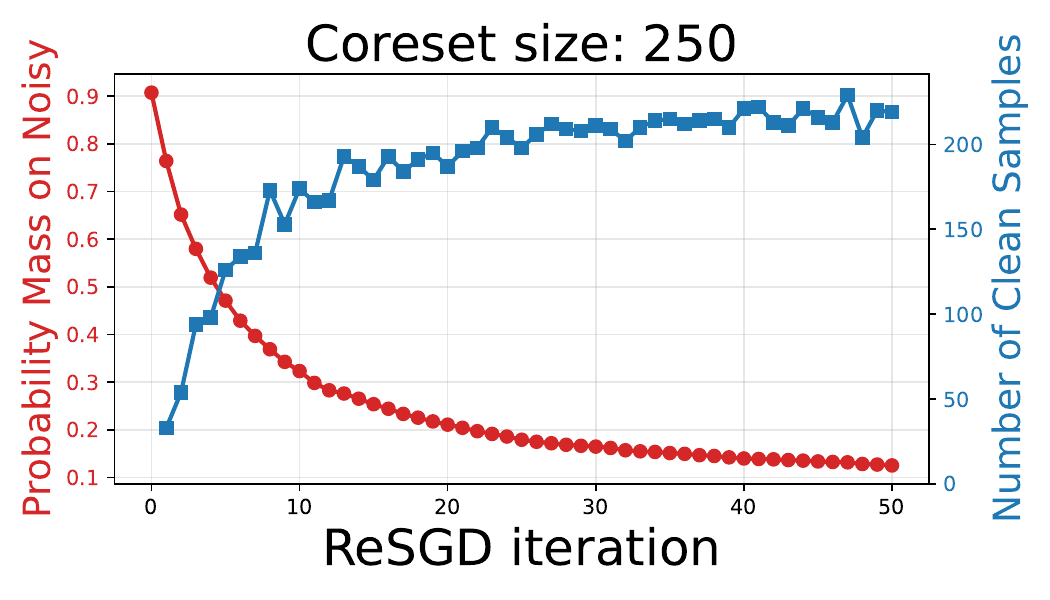}
    \end{tabular}
    \caption{Performance on CIFAR-10 under Asymmetric Noise. Similar to the MNIST experiments, \texttt{ReSGD} demonstrates strong recovery capabilities on the more complex CIFAR-10 dataset. The rapid decay of the noise-associated probability mass (red) and the corresponding climb in the number of clean samples selected (blue) highlight the algorithm's robustness to systematic label distortion across different target coreset budgets.}
    \label{fig:noisy_curves_cifar}
\end{figure}

%% file: appendix_data_corruption.tex
\section{Data Corruption}\label{appendix:data_coruption}
To demonstrate the robustness of selection under noisy conditions, we test data summarization performance under the label noise case on the MNIST, CIFAR-10, and CIFAR-100 datasets. For experiments with label noise and class imbalance, the coreset size is set to 200 for MNIST and CIFAR-10. Here is the detailed protocol:
\begin{itemize}
    \item \textbf{Label Noise}: We experiment with two types of label corruption: Symmetric Noise and Pairwise Noise. For symmetric noise, the true label of a sample is uniformly flipped to any of the other $C-1$ classes with a probability $p$. For pairwise noise (pairflip), the true label is flipped to an adjacent class with a probability $p$. We vary the noise ratio $p$ from 0.2 to 0.8 to test the methods under varying degrees of corruption.
    \item \textbf{Class Imbalance}: We construct a long-tailed (imbalanced) dataset by exponentially decaying the number of samples across classes. Following standard practices, we define the imbalance factor as $\mu = \frac{n_{max}}{n_{min}}$, where $n_{max}$ and $n_{min}$ represent the number of training samples in the largest (head) and smallest (tail) classes, respectively. Specifically, the number of samples retained for the $i$-th class is calculated as $n_i = n_{max} \cdot \sigma^i$, where the decay coefficient is defined as $\sigma = \mu^{-\frac{1}{C-1}}$. In our experiments, we vary the imbalance factor $\mu$ from 50 to 200.
\end{itemize}

We analyze a coreset of 200 samples selected within the noise ratio range of 0.0 to 0.4 under pairwise noise, with the results illustrated in Figure~\ref{fig:appendix_ds_noise}.

\begin{figure}[h]
    \centering
    \includegraphics[width=\linewidth]{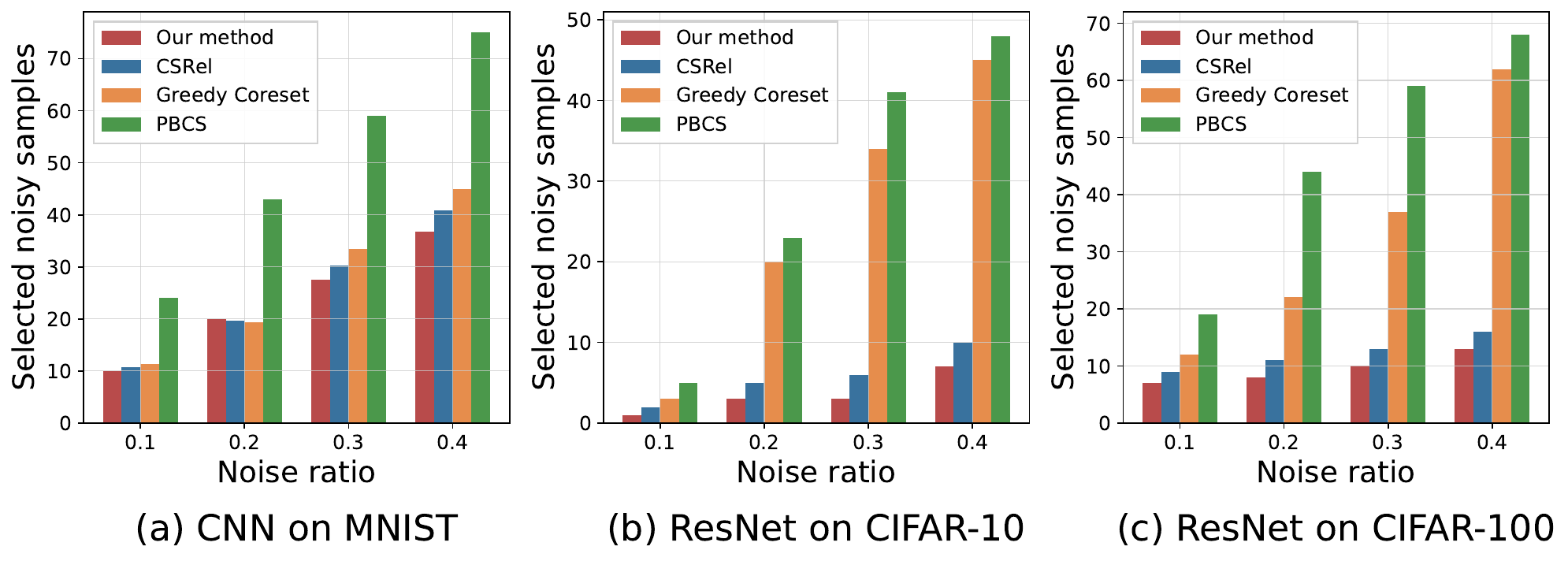}
    \caption{Comparison of the number of selected noisy samples across different noise ratios. We evaluate our method against baseline approaches (CSRel~\cite{tong2025coreset}, Greedy Coreset~\cite{borsos2020coresets}, and PBCS~\cite{zhou2022probabilistic}) on three datasets: (a) MNIST using CNN, (b) CIFAR-10 using ResNet, and (c) CIFAR-100 using ResNet. \texttt{ReSGD} consistently selects the fewest noisy samples across all noise levels, demonstrating its superior robustness and ability to distinguish clean data from pairwise noisy samples.}
    \label{fig:appendix_ds_noise}
\end{figure}

%% file: appendix_experiment_details.tex
\section{Experimental Details}\label{appendix:experimental details}
We note that different baselines in the literature are evaluated under different training and evaluation protocols (e.g., number of training epochs, data repetition strategies, and evaluation schedules). A naive unification of these protocols—for example, adopting the evaluation setting of baseline A for all methods—can lead to suboptimal or misleading performance for other baselines such as B and C, whose effectiveness is tied to their original selection regimes. Therefore, to ensure a fair and faithful comparison, we follow the evaluation protocol used in each original work when reporting results, rather than enforcing a single unified protocol across all methods.

To ensure statistical significance, all experimental results are reported as the mean ± standard deviation across 5 independent runs with different random seeds. All experiments were conducted on a server equipped with four NVIDIA H20 GPUs. The total computational cost for reproducing the complete set of experiments is approximately 800 GPU hours.
\subsection{Data Summarization Experiment Details}\label{appendix:ds_details}
We conduct CNN-based experiments on MNIST and ResNet-based experiments on CIFAR-10 and CIFAR-100.\\\\
\textbf{CNN on MNIST.}
The CNN architecture comprises two convolutional blocks, each consisting of convolution, dropout, max-pooling, and ReLU activation. The numbers of filters in the two convolutional layers are 32 and 64, respectively, both with a kernel size of (5 $\times$ 5). These are followed by two fully connected layers of sizes 128 and 10, with dropout applied. The dropout probability is set to 0.5. We adopt experimental settings consistent with prior work~\cite{borsos2020coresets, zhou2022probabilistic, tong2025coreset}.

\begin{itemize}
\item \textbf{Greedy Coreset and Uniform Sampling}~\cite{borsos2020coresets}:
The CNN is trained on the data summary using the Adam optimizer with a learning rate of $5e^{-4}$ for 4000 epochs.
\item \textbf{PBCS}~\cite{zhou2022probabilistic}:  
The projected stochastic gradient descent update for coreset scores is given by
\begin{equation}
    \boldsymbol{s}^{t+1} \leftarrow \mathcal{P}_{\mathcal{C}}\left(\boldsymbol{s}^{t}-\eta \mathcal{L}_{\mathcal{B}}\left(\boldsymbol{\theta}^{*}(\boldsymbol{m})\right) \nabla_{\boldsymbol{s}} \ln p\left(\boldsymbol{m} \mid \boldsymbol{s}^{t}\right)\right).
\end{equation}
The number of iterations for optimizing the coreset scores \(\boldsymbol{s}\) is \(T = 500\), where \(t \in \{1,2,\dots,T\}\), and the learning rate for score updates is \(\eta = 5e^{-2}\).  
The inner optimization \(\boldsymbol{\theta}^{*}(\boldsymbol{m})\) is performed using Adam with a learning rate of \(5e^{-3}\) for 300 epochs.  
The CNN is then trained on the resulting data summary using Adam with a learning rate of \(2e^{-3}\) for 4000 epochs.

\item \textbf{CSRel}~\cite{tong2025coreset}:  
The reducible loss for coreset selection is defined as
\begin{equation}
    G_{i}=\ell\left(x_{i}, y_{i} ; \boldsymbol{\theta}_{S}\right)-\ell\left(x_{i}, y_{i} ; \boldsymbol{\theta}_{\mathcal{D}}\right).
\end{equation}
The holdout model \(\boldsymbol{\theta}_{\mathcal{D}}\) is trained using SGD with a learning rate of \(1.5e^{-3}\) for 125 epochs.  
The inner model \(\boldsymbol{\theta}_{S}\) is trained using SGD with a learning rate of \(2e^{-2}\) for 100 epochs.  
Finally, the CNN is trained on the selected data summary using SGD with a learning rate of \(2e^{-2}\) for 3000 epochs.

\end{itemize}
For \texttt{ReSGD}, the holdout model \(\boldsymbol{\theta}_{\mathcal{D}}\) is trained using SGD with a learning rate of \(1.5e^{-3}\) for 200 epochs.  
The leave-one-out model $\bs{\theta}^{*}(\mathcal{S}_{-(i,1)})$ is trained using SGD with a learning rate of $5e^{-4}$ for 1000 epochs. Finally, the CNN is trained on the selected data summary using SGD with a learning rate of \(2e^{-2}\) for 3000 epochs. The number of iterations $T$ is 20 and $\eta$ is $1e^{-7}$.\\\\
\textbf{ResNet on Cifar-10 and Cifar-100.}
The experiments of ResNet-18 on Cifar-10 and Cifar-100 follows the setting in~\cite{borsos2020coresets, zhou2022probabilistic, tong2025coreset}. The details of the implementation of baselines are as follows:
\begin{itemize}
    \item \textbf{Greedy Coreset and Uniform Sampling}~\cite{borsos2020coresets}: The ResNet is trained by Adam optimizer with a learning rate $5e^{-5}$ in 6 epochs. In each epoch, the coreset is repeatedly sampled to 60000 and the total update iterations are 937 (6000/batch size 64).
    \item \textbf{PBCS}~\cite{zhou2022probabilistic}: 
    The number of iterations for optimizing the coreset scores \(\boldsymbol{s}\) is \(T = 100\), where \(t \in \{1,2,\dots,T\}\), and the learning rate for score updates is \(\eta = 3e^{-2}\).  The inner optimization \(\boldsymbol{\theta}^{*}(\boldsymbol{m})\) is performed using Adam with a learning rate of \(2e^{-3}\) for 80 epochs.
    The CNN is then trained on the resulting data summary using Adam with a learning rate of \(2e^{-3}\) for 4000 epochs.
    \item \textbf{CSRel}~\cite{tong2025coreset}: The holdout model \(\boldsymbol{\theta}_{\mathcal{D}}\) is trained using SGD with a learning rate of \(3e^{-3}\) for 100 epochs. The inner model \(\boldsymbol{\theta}_{S}\) is trained using SGD with a learning rate of \(1e^{-2}\) for 100 epochs. Finally, the ResNet is trained on the selected data summary using Adam with a learning rate of \(5e^{-4}\) for 1800 epochs. Initial coreset size is set to 0 and the selection step is 200.
\end{itemize}
For \texttt{ReSGD}, the holdout model \(\boldsymbol{\theta}_{\mathcal{D}}\) is trained using SGD with a learning rate of \(3e^{-3}\) for 100 epochs.  
The leave-one-out model $\bs{\theta}^{*}(\mathcal{S}_{-(i,1)})$ is trained using SGD with a learning rate of $5e^{-4}$ for 100 epochs. Finally, the CNN is trained on the selected data summary using SGD with a learning rate of \(5e^{-4}\) for 1800 epochs. The number of iterations $T$ is 20 and $\eta$ is $1e^{-8}$.

\subsection{Continual Learning Experiment Details}\label{appendix:cl_details}
\begin{figure}[!t]
    \centering
    \includegraphics[width=0.5\linewidth]{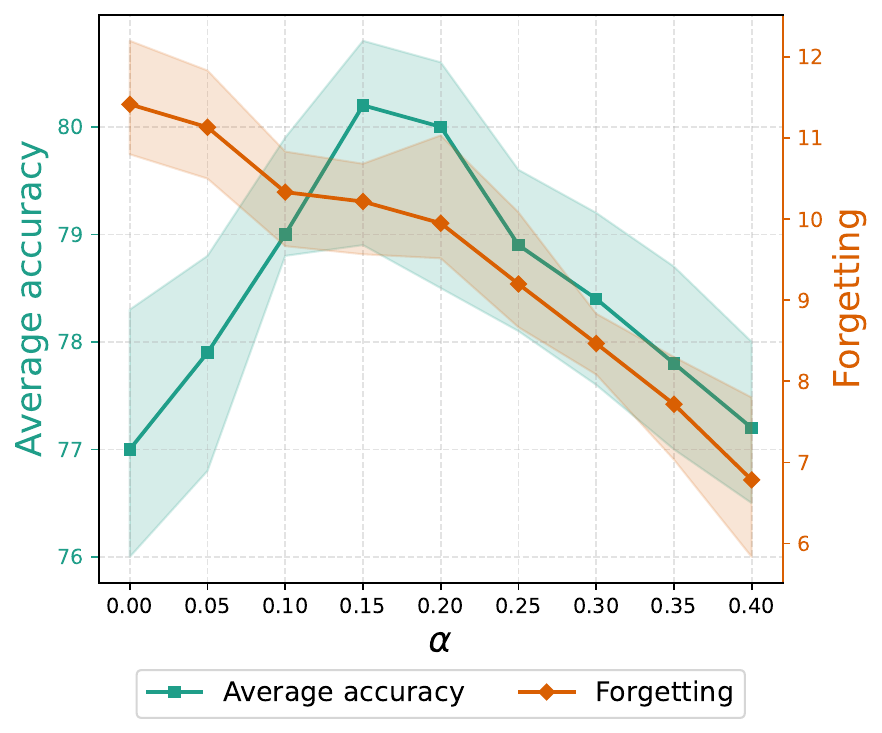}
    \caption{Impact of $\alpha$ on Average Accuracy and Forgetting. Comparison of performance metrics under different loss factors $\alpha$ on the Permuted MNIST dataset. Results show that $\alpha \in [0.15, 0.20]$ provides an optimal trade-off for mitigating forgetting while maintaining overall accuracy.}
    \label{fig:alpha}
\end{figure}

We evaluate our data summarization approach for continual learning across four standard benchmarks: Split MNIST~\cite{zenke2017continual}, Perm MNIST~\cite{goodfellow2013empirical}, Split CIFAR-10, and Split CIFAR-100. \\
\textbf{Datasets and Augmentations.} Following the evaluation protocol established by~\cite{borsos2020coresets, tong2025coreset}, the dataset splits are constructed as follows: Split MNIST and Split CIFAR-10 are both divided into 5 tasks with 2 classes per task. Split CIFAR-100 is divided into 10 tasks, each containing 10 classes. Perm MNIST consists of 10 tasks, where each task corresponds to a randomly permuted version of the original MNIST images. For the training data formulation, we randomly sample 1,000 images per task for Split MNIST, Split CIFAR-10, and Perm MNIST. For Split CIFAR-100, we utilize the full dataset for each task. Regarding data augmentation, we apply standard normalization to both Split MNIST and Perm MNIST. For the more complex CIFAR-10 and CIFAR-100 datasets, our augmentation pipeline includes random cropping, random horizontal flipping, and normalization.\\
\textbf{Network Architectures.} We employ distinct backbone architectures tailored to the complexity of each dataset:
\begin{itemize}
    \item Split MNIST: A standard CNN architecture (as detailed in section~\ref{appendix:ds_details}).
    \item Split CIFAR-10: A standard ResNet-18 architecture (as detailed in section~\ref{appendix:ds_details}).
    \item Split CIFAR-100: A modified ResNet-18 architecture, wherein a Batch Normalization layer is injected into each convolutional block.
    \item Permuted MNIST: A fully connected network consisting of two hidden layers with 100 units each, ReLU activation functions, and dropout regularization with a probability of $0.2$ applied to the hidden layers.\\
\end{itemize}
\textbf{Optimization and Hyperparameters.} Our training procedures follow the methodologies of ~\cite{borsos2020coresets, zhou2022probabilistic, tong2025coreset}. The optimization process involves a specific loss factor, denoted as $\alpha$ (refer to the setting in~\cite{tong2025coreset}). The hyperparameter configurations are specifically tuned for each dataset as follows:
\begin{itemize}
    \item For Split MNIST, Split CIFAR-10, and Perm MNIST, we utilize the Adam optimizer with a batch size of $256$ and a learning rate of $5 \times 10^{-4}$. Models for these three datasets are trained for $400$ epochs. The loss factor $\alpha$ is set to $100.0$ for Split MNIST, $20.0$ for Split CIFAR-10, and $0.15$ for Perm MNIST.
    \item For Split CIFAR-100, we employ the SGD optimizer with a smaller batch size of $32$ and a higher learning rate of $2 \times 10^{-2}$. The model is trained for $100$ epochs, and the loss factor $\alpha$ is set to $4.0$. This SGD-based setup is consistently applied across all compared methods for the CIFAR-100 benchmark.
\end{itemize}

\textbf{The choices of the loss factor}
Following \cite{tong2025coreset}, we adopt the forgetting metric to quantify the performance degradation across subsequent tasks for the experiments in Section 5.2. The forgetting for task $j$ after training on task $k$ is computed as:
\begin{equation}
f_j^k = \max_{l \in {1, \dots, k-1}} a_{l,j} - a_{k,j}, \quad \forall j < k,
\end{equation}
where $a_{k,j}$ denotes the accuracy on task $j$ upon the completion of task $k$. We evaluate both average accuracy and forgetting on the Permuted MNIST dataset. As illustrated in Figure~\ref{fig:alpha}, we analyze these metrics across varying loss factors $\alpha$. Empirically, the optimal balance is achieved at $\alpha=0.15$, though $\alpha=0.2$ also yields robust performance in practice.